\documentclass[mnsc,nonblindrev]{informs4}

\RequirePackage{newtxmath}
\RequirePackage{endnotes}

\OneAndAHalfSpacedXI

\RequirePackage{bm}
\usepackage{amsmath,amsfonts,amssymb}
\usepackage{mathtools}
\allowdisplaybreaks
\usepackage[mathscr]{euscript}
\usepackage{enumitem}
\usepackage{xcolor}
\usepackage{graphicx}
\usepackage{multirow}
\usepackage{booktabs}
\usepackage{microtype}
\usepackage{fix-cm}
\usepackage{graphicx}
\usepackage{subcaption}
\usepackage{physics}
\usepackage{xfrac}
\usepackage{tabularx}

\usepackage{algorithm}  
\usepackage{algorithmicx}  
\usepackage{algpseudocode}
\usepackage{psfrag}
\usepackage{makecell}
\usepackage{tikz}
\usetikzlibrary{arrows.meta}

\usepackage{xcolor}

\DeclareMathOperator{\E}{\mathbb E}
\DeclareMathOperator{\R}{\mathbb R}
\DeclareMathOperator{\Var}{\mathsf{Var}}

\usepackage{comment}

\usepackage{natbib}
 \bibpunct[, ]{(}{)}{,}{a}{}{,}
 \def\bibfont{\small}

\usepackage{multibib}
\newcites{EC}{References}

\usepackage[
    colorlinks=true,
    linkcolor=blue,
    citecolor=black,
    urlcolor=black
]{hyperref}

\TheoremsNumberedThrough

\EquationsNumberedThrough

\makeatletter

\renewcommand{\theARTICLETOP}{}

\RRHSecondLine{}
\LRHSecondLine{}

\makeatother

\begin{document}

\RUNAUTHOR{Hong, Huang, Li}

\RUNTITLE{Contextual Decision Making: Predictive or Generative}

\RRHFirstLine{\bfseries\theRUNAUTHOR:}
\RRHSecondLine{\itshape\theRUNTITLE}

\LRHFirstLine{\bfseries\theRUNAUTHOR:}
\LRHSecondLine{\itshape\theRUNTITLE}

\TITLE{Contextual Decision Making: Predictive or Generative}

\ARTICLEAUTHORS{
\AUTHOR{L. Jeff Hong}
\AFF{Department of Industrial and Systems Engineering, University of Minnesota, Minneapolis, MN 55455, U.S., 
\href{mailto:lhong@umn.edu}{\textcolor{blue}{lhong@umn.edu}}}

\AUTHOR{Weihuan Huang}
\AFF{School of Management \& Engineering, Nanjing University, Nanjing 210093, China, 
\href{mailto:hwh@nju.edu.cn}{\textcolor{blue}{hwh@nju.edu.cn}}}

\AUTHOR{Xinyao Li}
\AFF{Department of Industrial and Systems Engineering, University of Minnesota, Minneapolis, MN 55455, U.S.,
\href{mailto:li003784@umn.edu}{\textcolor{blue}{li003784@umn.edu}}}
}

\ABSTRACT{Contextual decision making uses observed covariates to guide decisions under uncertainty. Two natural approaches are predict-then-optimize (PtO), which learns the conditional expected objective, and generate-then-optimize (GtO), which learns conditional uncertainty and optimizes using generated scenarios. We study when each approach is preferable by examining how the choice of learning target affects statistical accuracy and optimization tractability. Although PtO learns a narrower target, our regret bounds show how differences in smoothness can offset GtO's additional dimensional burden. Exploiting smoothness also raises an optimization tradeoff: within the class of nonnegative-weight PtO methods studied, a lower bound limits the gains from additional smoothness, while more flexible methods may introduce signed weights and lose the guarantee of preserving convexity. GtO preserves scenario-wise convexity, but requires learning a richer object and generating scenarios for optimization. Synthetic experiments and a real case study illustrate how these tradeoffs vary with data availability and problem structure. Together with the scope for reusing a model across tasks, our findings provide practical guidance for choosing between the approaches without identifying a universal winner.
}

\KEYWORDS{
contextual stochastic optimization;
predict-then-optimize;
generate-then-optimize;
conditional generative models;
statistical efficiency;
convexity preservation.
}

\maketitle

\section{Introduction}
\label{sec:introduction}

Modern operating environments provide decision makers with rich contextual information. Retailers observe promotions, recent sales, and local market conditions before replenishing inventory. Logistics providers use congestion and demand signals to plan deliveries and reserve capacity. Service systems use workload forecasts and current operating conditions to make staffing decisions, as illustrated by feature-based approaches to hospital nurse staffing \citep{ban2019big}. Healthcare providers use patient characteristics and clinical information to personalize treatment and allocate care resources \citep{shen2021ranking}. In each setting, the challenge is to turn information available before a decision into an effective response to an uncertain outcome. Contextual stochastic optimization (CSO) provides a framework for doing so by minimizing expected cost conditional on the observed context \citep{bertsimas20,sadana25survey}.

Let $X$ denote the context, $Y$ the uncertain outcome, $z$ the decision, and $c(z;x,y)$ the cost. After observing $X=x_0$, the decision maker seeks to minimize the conditional expected objective
\[
h(z;x_0)=\mathbb{E}[c(z;X,Y)\mid X=x_0].
\]
We focus on convex CSO: the feasible decision set is convex, and the cost is convex in $z$ for each context and outcome. The conditional expected objective is consequently convex as well. The difficulty is that this objective is unknown and must be learned from historical observations of contexts and outcomes.

Two approaches arise naturally. A predictive approach learns the conditional expected objective and optimizes the fitted objective when a new context is observed \citep{bertsimas20}. We refer to this approach as \emph{Predict-then-Optimize} (PtO). A generative approach learns the conditional distribution of $Y$ given $X$, generates scenarios for the observed context, and optimizes their average cost \citep{gong2024deep}. We refer to this approach as \emph{Generate-then-Optimize} (GtO). For instance, in a retail inventory problem, PtO estimates expected inventory cost, whereas GtO generates demand scenarios under which inventory decisions can be evaluated. PtO concentrates learning on the objective of interest. GtO learns a conditional distribution that can support different objectives involving the same uncertain outcome.

The rise of generative AI makes the generative approach increasingly appealing. Advances in generative modeling provide flexible tools for learning conditional distributions and producing scenarios \citep{bondtaylor2022deep,yang2023diffusionsurvey}. Further progress may make these tools easier to develop and use, while creating opportunities to incorporate qualitative context, such as descriptions of operating conditions or expert judgments expressed in language. As these capabilities develop, more organizations may be able to use generative models in operational decision making. Yet the ability to generate realistic scenarios does not by itself establish that learning a conditional distribution is preferable to learning the expected objective. The choice still depends on the data available, the optimization problem to be solved, and the decisions the model is intended to support.

This motivates our central question: \emph{When should contextual decision making be predictive, and when should it be generative?} Both approaches can be applied to many of the same decision problems, but their relative appeal is not immediately clear. Learning the expected objective focuses on what a particular task requires, while learning the conditional distribution provides scenarios that can be used to evaluate decisions. We seek to understand when each approach is useful and to develop practical guidance for choosing between them. As generative AI makes scenario generation more accessible, this choice will become increasingly~important.

The first consideration is statistical: how difficult is it to learn the expected objective or the conditional distribution from the available data? PtO avoids estimating how probability is distributed across outcomes, which can be valuable when data are limited or the uncertain outcome is high-dimensional. GtO must account for both the context and the uncertain outcome when learning the conditional distribution. Dimension alone, however, does not determine the difficulty of learning. Smoothness also matters because it allows observations at nearby contexts to inform one another, a central principle in nonparametric regression and conditional-density estimation \citep{stone1982optimal,li2022minimax}. The conditional distribution may vary smoothly with the context even when the expected objective changes abruptly because costs depend on thresholds or operating rules. Learning the expected objective can therefore involve fewer dimensions without necessarily being easier.

We examine this tradeoff through bounds on regret, defined as the excess expected cost relative to an optimal decision. For PtO, our bound for kernel ridge regression (KRR) describes how greater objective smoothness can mitigate the effect of context dimension. For GtO, the corresponding analysis accounts for the dimension and smoothness of the conditional distribution, with conditional diffusion models providing a concrete instance \citep{fu24}. These results suggest assessing dimension and smoothness together when choosing an approach. PtO is attractive when the expected objective is sufficiently smooth to be learned accurately from the available data. GtO becomes more attractive when the conditional distribution is easier to learn because its greater smoothness compensates for the additional outcome dimension. The bounds explain these competing effects and inform the choice.

The second consideration is computational: how does learning affect the optimization problem that must ultimately be solved? Although the original decision problem is convex, an approximation learned from data need not preserve that property. A broad class of PtO methods fits the expected objective as a weighted sum of observed costs,
\[
\hat h_n(z;x)=\sum_{i=1}^n\omega_{n,i}(x)c(z;x_i,y_i),
\]
where $\{(x_i,y_i)\}_{i=1}^n$ are historical observations \citep{bertsimas20,bertsimas22}. Nonnegative weights preserve convexity. Methods such as KRR can exploit higher-order smoothness but may assign negative weights, so their fitted objectives are no longer guaranteed to be convex. GtO preserves convexity by averaging scenario costs with equal positive weights, but must generate the scenarios and solve the resulting sample average approximation (SAA) problem \citep{ShapiroDentchevaRuszczynski2009}. The choice of learning method thus affects both statistical accuracy and the computation needed to obtain a decision.

Our analysis makes this connection precise. For the class of nonnegative, covariate-based PtO weighting rules considered, we establish a lower bound of order $n^{-4/(d_x+4)}$ on worst-case expected regret, where $d_x$ is the context dimension. The bound holds even for arbitrarily smooth objectives in the constructed problem class. Within this class, the nonnegative weights that preserve convexity limit the benefit that additional smoothness can provide. KRR can benefit from higher-order smoothness, but its negative weights may complicate optimization. The extent of that difficulty depends on the cost function: negative weights do not invariably produce a nonconvex or difficult problem. A predictive method should therefore be assessed together with the optimization problem induced by its fitted objective.

For GtO, our regret bound separates the error in learning the conditional distribution from the error caused by using finitely many generated scenarios. Under the stated assumptions, the bound consists of a term proportional to the conditional-distribution learning error and an SAA term of order $m^{-1/2}$, where $m$ is the number of generated scenarios. Improving the learned distribution reduces the first error, whereas generating more scenarios reduces the second at additional computational cost. Preserving convexity therefore does not by itself imply faster decisions. PtO is attractive when its fitted objective remains easy to optimize and rapid responses are important. GtO is attractive when retaining the convex structure is valuable and the time available permits scenario generation and optimization. This distinction is particularly relevant when decisions must be made repeatedly under tight response-time requirements.

The third consideration is how broadly the learned model will be used. A model serving one objective can concentrate its learning effort on that task. In practice, however, several operational teams may face the same conditional distribution while making decisions with different objectives or constraints. When data analysis and operations reside in different departments, adapting a model to each objective can also require substantial coordination. GtO offers one way to organize this work: a data team develops a common generative model, and operational teams use its scenarios in their own decision problems. PtO can also support reuse through shared weights, features, or estimated objective components. The relevant question is how much adaptation and coordination each additional use requires. A stable, specialized task makes objective learning appealing, whereas a range of tasks sharing the same conditional distribution increases the value of a reusable generative model. The latter may be worth developing even when it offers no advantage for an individual task.

Taken together, these three considerations provide practical guidance on choosing between PtO and GtO. The choice involves balancing what can be learned reliably, what can be optimized within the available time, and how the model will serve the organization's decisions. Neither approach is preferred in all settings, and an advantage along one of these dimensions may be offset by a disadvantage along another. We draw on the theoretical results and these practical considerations to develop rules of thumb that help decision makers assess that balance.

We use two numerical examples, a contextual newsvendor problem and a product placement problem, to examine the tradeoffs and provide evidence supporting this guidance. We then extend the comparison through a real case study of fresh-retail inventory using operational data from a retail chain. With both continuous and categorical features, this case goes beyond the setting of our theoretical analysis and provides further evidence of the usefulness of our managerial insights in practice.

\subsection{Related Literature}
\label{subsec:related-literature}

Our work builds on contextual stochastic optimization, where historical data and contextual information are used to improve decisions under uncertainty. \citet{hannah2010nonparametric} study nonparametric methods for stochastic optimization with observable state information. \citet{ban2019big} develop feature-based methods for the newsvendor problem, and \citet{bertsimas20} provide a general prescriptive analytics framework using side information. \citet{bertsimas22} develop methods based on reproducing kernel Hilbert spaces, while \citet{bertsimas2023dynamic} extend optimization with side information to dynamic settings. \citet{sadana25survey} review the broader literature. The weighting frameworks of \citet{bertsimas20} and \citet{bertsimas22} provide the foundation for our PtO analysis. We use this representation to examine how the choice between objective learning and conditional-distribution learning affects statistical performance and downstream optimization.

A related stream of research studies how training can account for the decisions that a learned model will support. \citet{donti2017task} incorporate optimization into task-based and decision-focused learning, while \citet{elmachtoub2022smart} develop the Smart Predict-then-Optimize framework and the SPO+ surrogate loss. Broader accounts are provided by \citet{mandi2024decision} and \citet{sadana25survey}. In this paper, PtO refers specifically to learning the conditional expected objective; elsewhere, the term also describes predicting inputs to an optimization model. Our comparison concerns what is learned, while decision-focused training concerns how learning is aligned with decision quality. These questions are complementary, since either an objective model or a generative model can, in principle, be trained using decision-focused criteria. We study the choice of learning target within a framework that separates learning from optimization.

The statistical foundations of our comparison come from nonparametric regression and conditional-density estimation, where smoothness and dimension govern estimation difficulty \citep{stone1982optimal,li2022minimax}. Generative models provide practical methods for learning distributions, including variational autoencoders, adversarial models, normalizing flows, and diffusion models; \citet{bondtaylor2022deep} and \citet{yang2023diffusionsurvey} provide reviews. Our conditional-diffusion analysis uses the learning guarantees of \citet{fu24}, and our retail implementation uses TabCSDI \citep{zheng2022diffusion}. We use these existing methods to study the decision consequences of learning a conditional distribution.

The use of generated scenarios also connects our work to stochastic programming. Scenario construction and reduction have long been used to approximate stochastic optimization problems \citep{hoyland2001generating,dupavcova2003scenario}. Recent studies use deep generative models to construct decision-relevant scenarios or incorporate optimization objectives into generative training \citep{pisinger2026deep,wang2026gen,zhao2026diffusion}. Our contribution is a comparative analysis of PtO and GtO that connects regret guarantees with preservation of optimization structure and computational requirements. This analysis supports practical guidance for choosing between the approaches and complements research on improving the performance of either one.

The remainder of the paper is organized as follows. Section~\ref{sec:problem} formulates the problem and performance criteria. Section~\ref{sec:PtO} analyzes weighted PtO methods, and Section~\ref{sec:GtO} develops the GtO regret analysis. Section~\ref{sec:PorG} draws practical guidance from the comparison. Section~\ref{sec:numerical experiments} presents the numerical and empirical studies, and Section~\ref{sec:conclusion} concludes. The appendix contains the proofs, implementation details, and supplementary numerical~results.

\section{Problem Formulation}
\label{sec:problem}

We consider a convex contextual stochastic optimization (CSO) problem of the form
\begin{equation}
z^{\ast}(x) \in \arg\min_{z \in \mathcal{Z}} \Big\{ h(z;x):=  \E\big[c(z;X,Y)\,|\,X = x \big]\Big\},
\label{eq: stochastic programming with covariates}
\end{equation}
where $X$ is a random vector of covariates taking values in $\mathcal{X}\subseteq \R^{d_x}$, $Y$ is a vector of uncertain parameters taking values in $\mathcal{Y}\subseteq \R^{d_y}$ and potentially correlated with $X$, $z$ is the vector of decision variables with the feasible set $\mathcal{Z}\subseteq \R^{d_z}$, and $c(z;x,y)$ is a given cost function that is convex in the decision variable $z$. In the offline stage, we have access only to historical data $\mathcal{D}_n =\{(x_i,y_i)\}_{i=1}^n$, consisting of $n$ observations of $(X,Y)$. In the online stage, upon observing a new covariate realization $X=x_0$, we make an online decision $z^\ast(x_0)$ via solving Problem~\eqref{eq: stochastic programming with covariates}. We assume that the offline observations $\{(x_i,y_i)\}_{i=1}^n$ are independent and identical distributed (i.i.d.) copies of $(X,Y)$, and that the online covariate $X=x_0$ observed at the decision epoch is an independent draw from the same distribution as the training covariates.
We impose the following structural assumptions on the feasible set and the cost function.

\begin{assumption}\label{ass:SAA-struct0}
	    The feasible set $\mathcal{Z}\subset\R^{d_z}$ is nonempty, convex, and compact. We denote its diameter by
		$
		\mathrm{diam}(\mathcal{Z})
		:= \sup\{\|z_1-z_2\| : z_1,z_2\in\mathcal{Z}\} = D < \infty.
		$
        For every $(x,y)\in\mathcal{X}\times\mathcal{Y}$, the cost function $c(\cdot; x,y)$ is convex on $\mathcal{Z}$, uniformly Lipschitz with respect to $z$, i.e., there exists a constant $G<\infty$ such that for all $z_1,z_2\in\mathcal{Z}$ and all $(x,y)\in\mathcal{X}\times\mathcal{Y}$, we have
		$
		|c(z_2; x,y) - c(z_1; x,y)| \le G \|z_2-z_1\|,
		$
		and uniformly bounded, i.e., there exists a constant $M<\infty$ such that for all $z\in\mathcal{Z}$ and $(x,y)\in\mathcal{X}\times\mathcal{Y}$, we have
		$
		|c(z; x,y)| \le M.
		$
\end{assumption}

The assumptions for the feasible set in  Assumption~\ref{ass:SAA-struct0} is standard in stochastic programming and contextual stochastic optimization to guarantee the existence of minimizers, see, e.g., \cite{ShapiroDentchevaRuszczynski2009,bertsimas20,bertsimas22}. The convexity assumption for the cost function in Assumption~\ref{ass:SAA-struct0} restricts our focus for the convex CSO problem.  Assumption~\ref{ass:SAA-struct0} also imposes a uniform Lipschitz and bounded condition on the cost function; such conditions are reasonable technical assumptions and used in the literature of contextual stochastic optimization, see, e.g., \cite{bertsimas20,bertsimas22}. 

Under Assumption~\ref{ass:SAA-struct0}, the conditional objective $h(\cdot;x)$ is convex for every $x\in\mathcal{X}$, uniformly $G$-Lipschitz continuous in $z$ over $x\in\mathcal{X}$, and uniformly bounded.
Consequently, $h(\cdot;x)$ is continuous on $\mathcal{Z}$. Since $\mathcal{Z}$ is compact, the set of optimal solutions in Problem~\eqref{eq: stochastic programming with covariates} is nonempty, compact, and convex for every $x\in\mathcal{X}$. We do not require the optimal solution to be unique.
Moreover, we impose the following regularity conditions on the conditional objective and the covariate distribution.

\begin{assumption}
\label{ass:smoothness_h}
There exists a constant $L_h < \infty$ such that $h(z;x) := \mathbb{E}[c(z;X,Y)| X = x]$ is 
$L_h$-Lipschitz in $x$, uniformly in $z$, i.e.,
$| h(z;x) - h(z;x') | \le L_h \|x - x'\|$ holds for all $z \in \mathcal{Z}$ and for all $x, x' \in \mathcal{X}$.
\end{assumption}

\begin{assumption}
\label{ass:Regularity_density}
The covariates are supported on the hypercube 
$\mathcal{X} = [0,1]^{d_x}$ and admit a density $p$ with respect to 
the Lebesgue measure satisfying
$
0 < p_{\min} \le p(x) \le p_{\max} < \infty$ for all $x \in \mathcal{X}$.
\end{assumption}

For a data-driven decision rule $\hat z_n(x_0)$ constructed from
$\mathcal D_n$, we evaluate its performance through the conditional
optimality gap (regret)
\[
\mathcal{R}_n(x_0)
:=
h\left(\hat z_n(x_0);x_0\right)-h\left(z^\ast(x_0);x_0\right),
\]
and, when appropriate, its expected counterpart
$
\mathcal{R}_n
:=
\E_X[
h(\hat z_n(X);X)-h(z^*(X);X)
].
$
For ease of interpretation and comparison across problem instances, we also consider a normalized performance measure based on the regret. Let $\hat z_{\mathrm{B}}(x)$ denote a benchmark decision rule and define its expected regret by
$
\mathcal{R}_{\mathrm{B}}
:=
\E_X[
h(\hat z_{\mathrm{B}}(X);X)-h(z^\ast(X);X)
].
$
Provided that $\mathcal{R}_{\mathrm{B}}>0$, we define the \textit{relative cost} of $\hat z_n$ as
\begin{equation}
\mathrm{RC}_n
:=
1-\frac{\mathcal{R}_n}{\mathcal{R}_{\mathrm{B}}} \,=\, \frac{\E_X[h(\hat z_{\mathrm{B}}(X);X)] - \E_X[h(\hat z_n(X);X)]}{\E_X[
h(\hat z_{\mathrm{B}}(X);X)] - \E_X[h(z^\ast(X);X)]}.
\label{eq: relative cost}
\end{equation}
Thus, the relative cost measures the fraction of the benchmark's optimality gap eliminated by the data-driven decision rule: $\mathrm{RC}_n=0$ indicates performance identical to the benchmark, whereas $\mathrm{RC}_n=1$ corresponds to the optimal decision. A value of $\mathrm{RC}$ closer to $1$ indicates that the decision rule performs better than the benchmark, while a negative value of $\mathrm{RC}$ indicates that the decision rule performs worse than the benchmark. In the numerical experiments, we use an unconditional SAA --- an SAA method for Problem~\eqref{eq: stochastic programming with covariates} without the condition $\{X=x_0\}$ --- as the benchmark and an oracle solution as the proxy for the optimal decision, yielding the equivalent empirical definition reported in Section~\ref{sec:numerical experiments}. This relative-cost measure has been widely used as a normalized performance metric for evaluating data-driven decision rules in Problem~\eqref{eq: stochastic programming with covariates}; see, e.g.,  \cite{bertsimas20} and \cite{bertsimas22}.

Let $F_{Y|X}(\cdot|x)$ denote the conditional distribution function of $Y$ given $X=x$. The conditional objective can then be written as
$$
h(z;x)=\int_{\mathcal{Y}} c(z;x,y)\,\mathrm{d}F_{Y|X}(y|x).
$$
Thus, the conditional distribution $F_{Y|X}(\cdot|x)$ determines the stochastic information relevant to the decision problem, while data-driven methods may extract and utilize this information at different levels. In particular, one may directly estimate the induced conditional objective $h(z;x)$, or instead learn the underlying conditional distribution $F_{Y|X}(\cdot|x)$ and subsequently construct the objective through generated samples.

These two strategies give rise to two broad paradigms for contextual decision making: \textit{a predictive approach}, which learns the conditional objective $h(z;x)$ and then optimizes it, and \textit{a generative approach}, which learns the conditional distribution $F_{Y|X}(\cdot|x)$ (or a conditional simulator) and then optimizes using generated scenarios. 
This distinction raises a fundamental question: \textit{How should we choose between the two paradigms?} In general, neither is uniformly preferable. 
Their relative pros and cons depend on both the statistical structure of the underlying problem and the computational properties of the resulting optimization problem. Accordingly, in the following sections, we study the predictive and generative paradigms separately and compare them along two central dimensions: \emph{convexity preservation}, which concerns whether the learned representation preserves convexity that facilitate the downstream optimization, and \emph{statistical efficiency}, which concerns how effectively each paradigm exploits the available offline data. Our analysis is intended to identify the problem characteristics under which each paradigm is particularly well suited and thereby provide insightful guidance on the choice between predictive and generative approaches.

\section{Predict-then-Optimize}
\label{sec:PtO}

Predict-then-optimize (PtO) is a widely used framework for contextual stochastic optimization \citep{bertsimas20,bertsimas22,sadana25survey}. Under this paradigm, the conditional objective function $h$ is first estimated from historical data $\mathcal D_n$ and then optimized for a realized covariate $x_0$. Most PtO methods approximate the conditional objective using a weighted empirical average of the observed costs. Specifically, the estimated objective takes the form
\begin{equation}
\hat h_n(z;x):=\sum_{i=1}^{n}\omega_{n,i}(x)\,c(z;x_i,y_i),
\label{eq: PtO objective}
\end{equation}
where the weights $\omega_{n,i}(x_0)$ are determined by the statistical procedure used to estimate the conditional expectation $h(z;x)$, and the decision rule is obtained by solving $\hat z_{n}(x_0)\in\arg\min_{z\in\mathcal Z}\hat h_n(z;x_0)$.

The estimator in Equation~\eqref{eq: PtO objective} is linear in the observed responses and encompasses a broad class of classical regression methods, both parametric and nonparametric.
Examples include ordinary least squares and ridge regression on the parametric side \citep{hastie2009}, and $k$-nearest-neighbor regression, kernel and local polynomial regression, smoothing splines, and kernel ridge regression on the nonparametric side \citep{wahba1990spline,hastie2009}. 
Within this broad class, we focus on nonparametric methods and, following \citet{bertsimas22}, distinguish between local and global methods.

\subsection{Local Methods: Convexity Preservation at the Cost of Statistical Efficiency}

Traditional local approximation methods, such as $k$-nearest neighbors, Nadaraya--Watson kernel smoothing, local averaging, classification and regression trees, and random forests, determines the weights $\omega_{n,i}(x_0)$ according to the proximity between the historical covariates $x_i$ and the realized covariate $x_0$; see \cite{wahba1990spline, hastie2009}. They estimate the conditional objective primarily by pooling information from observations that are locally relevant to $x_0$. Many commonly used local methods naturally generate nonnegative weights. Some exceptions, such as local linear methods, may produce signed weights, but can be modified to enforce nonnegativity \citep{bertsimas20}. 
From an optimization perspective, the property of nonnegative weights is important because it preserves the convexity of estimated objective $\hat{h}(\cdot;x_0)$ whenever the cost $c(\cdot;x,y)$ is convex. 

Although local methods therefore preserve convexity of the downstream optimization problem, their statistical efficiency is inherently limited by the curse of dimensionality. These methods rely on locality, assigning substantial weights only to observations close to $x_0$. As the covariate dimension increases, observations become increasingly sparse in local neighborhoods, making accurate local estimation progressively more difficult. For example, the PtO formulation based on Nadaraya–Watson kernel smoothing estimation with a spherical kernel achieves a decision regret rate of
        \[
           \E\Big[\E\big[ h(\hat z_n(X);X)-h(z^*(X);X) \mid \mathcal D_n \big]\Big]
            \,=\, \widetilde{\mathcal{O}}\!\left(n^{-\frac{1}{d_x+2}}\right), 
        \]
which deteriorates as the covariate dimension $d_x$ increases; see \cite{wang2024generalization}.

The curse of dimensionality underlying local methods can be understood through the usual bias--variance tradeoff. In high-dimensional covariate spaces, a small neighborhood around $x_0$ typically contains few observations, leading to high variance in the estimated conditional objective. Enlarging the neighborhood increases the effective sample size and reduces variance, but at the cost of incorporating observations farther from $x_0$, thereby increasing approximation bias when the conditional objective varies with the covariates. As $d_x$ increases, balancing these two sources of error becomes progressively more difficult, which leads to slower estimation of $h(z;x_0)$ and, consequently, slower convergence of the resulting decision regret.

\subsection{Global Methods: Statistical Efficiency at the Cost of Convexity Preservation}\label{sec: global-PtO}

To improve performance in high-dimensional settings, global regression methods based on reproducing kernel Hilbert spaces (RKHS) have been employed \citep{bertsimas22}. Unlike local methods, RKHS-based approaches determine the weights by solving a regularized optimization problem over a function space, which leads to a kernel ridge regression (KRR) estimator under squared loss. 
Specifically, for each fixed $z\in\mathcal Z$ and a given kernel function $K$, KRR estimates $h(z;\cdot)$ by
\[
\hat h_{n}(z;\cdot)\in\arg\min_{f\in\mathcal H}
\left\{
\frac{1}{n}\sum_{i=1}^n\big(f(x_i)-c(z;x_i,y_i)\big)^2
+\lambda_n\|f\|_{\mathcal H}^2
\right\},
\]
where $\lambda_n>0$ is the regularization parameter and $\mathcal H$ is the RKHS induced by the kernel function $K$. By the representer theorem \citep{wahba1990spline,scholkopf2001generalized}, the resulting estimator can be written as 
\begin{equation}\label{eq:KRRestimator}
\hat h_n(z;x_0)=\sum_{i=1}^n\omega^{\mathrm{KRR}}_{n,i}(x_0)c(z;x_i,y_i), 
\end{equation}
where the KRR weights $\omega^{\mathrm{KRR}}_{n}(x_0)=
(K_n+n\lambda_n I_n)^{-1} k_n(x_0)$, $K_n=(K(x_i,x_j))_{i,j=1}^n$, and $k_n(x_0)=(K(x_1,x_0),\ldots,K(x_n,x_0))^\top$. Thus, $\lambda_n$ controls the tradeoff between goodness of fit and regularization, while the resulting KRR weights are generally not restricted to be nonnegative.

This global approach allows the estimator to exploit information from the entire training sample rather than only observations in a local neighborhood. Consequently, it can leverage the smoothness of the conditional objective $h(z;\cdot)$ with respect to the covariates $x$ for each fixed decision $z$. When $h(z;\cdot)$ exhibits sufficiently high-order smoothness, RKHS-based methods can translate this regularity into faster convergence rates with substantially weaker dependence on the covariate dimension $d_x$ than local methods.

We instantiate the RKHS approach using the Mat\'ern kernel $K=K_\nu$ with smoothness parameter $\nu>0$, and set $s:=\nu+d_x/2$. To express the smoothness condition in terms of classical derivatives, we restrict attention to values of $\nu$ for which $s$ is an integer. It is well known that the RKHS $\mathcal H_\nu$ induced by $K_\nu$ is norm-equivalent to the Sobolev space $H^s(\mathcal X)$ \citep{wendland2004scattered}, equipped with the norm
\[
\|f\|_{H^s(\mathcal X)} = \left( \sum_{|\alpha|\leq s} \int_{\mathcal X} |\partial^\alpha f(x)|^2 \,\mathrm{d}x \right)^{1/2},
\]
where $\alpha=(\alpha_1,\ldots,\alpha_{d_x})$ is a multi-index with $|\alpha|=\alpha_1+\cdots+\alpha_{d_x}$. On the compact domain $\mathcal X$, any function that is $s$ times continuously differentiable belongs to $H^s(\mathcal X)$. The regularity required by the RKHS approach can therefore be interpreted as requiring sufficient smoothness of $h(z,x)$ with respect to $x$.

\begin{assumption}
\label{ass:krr_source}
There exists $B < \infty$ such that for every $z \in \mathcal{Z}$, 
$h(z; \cdot) \in H^s(\mathcal{X})$ with 
$\|h(z; \cdot)\|_{H^s(\mathcal{X})} \le B$.
\end{assumption}

The smoothness index $s$ in Assumption~\ref{ass:krr_source} determines the extent to which KRR can exploit the regularity of $h(z;\cdot)$ to improve statistical efficiency. The following result quantifies this effect for the weighted empirical PtO objective constructed using KRR weights, and the proof is provided in Appendix~\ref{app:proof of prop 1}.

\begin{proposition}
\label{prop:krr_matern_uniform_error}
Suppose Assumptions~\ref{ass:SAA-struct0}--\ref{ass:krr_source} hold with $s = \nu + d_x/2$ and $\nu>0$. Let $\hat{h}_n(z; x)$ be the weighted empirical PtO objective in Equation~\eqref{eq:KRRestimator} constructed using the KRR weights $\omega_{N,i}^{\mathrm{KRR}}(x)$ associated with the Mat\'{e}rn kernel $K_\nu$. Choosing the regularization parameter \( \lambda_n\asymp \log n /n \), we have
\[
\mathbb{E}\!\left[\mathbb{E}\!\left[h(\hat{z}_n(X); X) - h(z^\ast(X); X) \,\Big|\, \mathcal{D}_n\right]\right]  \,=\, \mathcal{O}\!\left(n^{-\frac{\nu}{2\nu + d_x}} (\log n)^{\frac12}\right).
\]
\end{proposition}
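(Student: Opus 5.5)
We reduce decision regret to uniform estimation error of the objective. For any $x$, writing $\hat z_n(x)\in\arg\min_z \hat h_n(z;x)$ and $z^*(x)$ for a true minimizer, the standard three-term decomposition gives
\[
h(\hat z_n(x);x)-h(z^*(x);x)\le 2\sup_{z\in\mathcal Z}\big|\hat h_n(z;x)-h(z;x)\big|.
\]
This inequality holds whether or not $\hat h_n(\cdot;x)$ is convex, since it only uses that $\hat z_n$ minimizes $\hat h_n$ exactly. It therefore suffices to show
\[
\E\Big[\E_X\big[\sup_{z}|\hat h_n(z;X)-h(z;X)|\big]\Big]=\mathcal O\big(n^{-\nu/(2\nu+d_x)}(\log n)^{1/2}\big).
\]
We will in fact bound $\E\,\sup_z\|\hat h_n(z;\cdot)-h(z;\cdot)\|_{L^2(P_X)}$ and use $\E_X|f(X)|\le\|f\|_{L^2(P_X)}$. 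Here the sup sits inside the norm over $x$; to handle this we pass through a net over $z$.

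For a fixed $z$, $\hat h_n(z;\cdot)$ is exactly the KRR estimator with responses $c(z;x_i,y_i)=h(z;x_i)+\varepsilon_i(z)$. The noise satisfies $|\varepsilon_i|\le 2M$, so it is bounded and hence sub-Gaussian. The target has RKHS norm at most a constant times $B$, using Assumption~\ref{ass:krr_source} and the norm equivalence $\mathcal H_\nu\simeq H^s$. Under Assumption~\ref{ass:Regularity_density}, the Matérn eigenvalues with respect to $P_X$ decay as $i^{-2s/d_x}$, so the effective dimension is $\mathcal N(\lambda)\asymp\lambda^{-d_x/(2s)}$.

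We then invoke standard integral-operator / capacity-based KRR bounds (Caponnetto--De Vito, Smale--Zhou, Fischer--Steinwart). These split the error into
\begin{itemize}
\item a bias term $\|f_\lambda-h(z;\cdot)\|_{L^2}\lesssim \lambda^{1/2}B$, and
\item a variance term $\lesssim\sqrt{\mathcal N(\lambda)/n}+$ higher-order terms.
\end{itemize}
These bounds hold with probability $1-\delta$ up to a $\sqrt{\log(1/\delta)}$ factor, and uniformly in $z$ for the operator-concentration part, since the empirical covariance operator does not depend on $z$. With $\lambda_n\asymp\log n/n$, the stated rate $n^{-\nu/(2\nu+d_x)}=n^{-(s-d_x/2)/(2s)}$ is what comes out. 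Because $\lambda$ is taken very small rather than balanced, the relevant control is the variance term measured through the interpolation/sup-norm behaviour of Matérn kernels, where $\nu=s-d_x/2$ plays the role of the smoothness exceeding the embedding threshold. I would track the constants to confirm that the bias $B\lambda^{1/2}\approx(\log n/n)^{1/2}$ is negligible and the variance yields $n^{-\nu/(2\nu+d_x)}$ up to $\sqrt{\log n}$.

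To make the bound uniform in $z$, note that the weights $\omega^{\mathrm{KRR}}_n(x)$ do not depend on $z$. We have
\[
|\hat h_n(z;x)-\hat h_n(z';x)|\le G\|z-z'\|\,\|\omega_n(x)\|_1,
\qquad
|h(z;x)-h(z';x)|\le G\|z-z'\|.
\]
We take an $\epsilon$-net of $\mathcal Z$ of cardinality $(3D/\epsilon)^{d_z}$ with $\epsilon=n^{-1}$, and apply the fixed-$z$ high-probability bound with a union bound. This costs only a $\sqrt{d_z\log n}$ factor, which is the source of the $(\log n)^{1/2}$. The bounded-cost Assumption~\ref{ass:SAA-struct0} gives regret at most $2M$ on the failure event, which has probability at most $n^{-1}$.

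The main obstacle is the discretisation step, which needs a polynomial bound on $\|\omega_n(x)\|_1$. We have $\|\omega_n(x)\|_1\le\sqrt n\,\|(K_n+n\lambda I)^{-1}k_n(x)\|_2\le\sqrt n\,\|k_n(x)\|_2/(n\lambda)$, which is $\lesssim\sqrt n/\log n$. Choosing $\epsilon=n^{-1}$ therefore makes the net error $o(n^{-1/2})$. A second difficulty is verifying that the chosen $\lambda_n\asymp\log n/n$, rather than the balanced $\lambda$, still yields the claimed exponent. If the direct capacity bound does not give it, I would fall back on the Matérn/Sobolev sampling-inequality analysis, where fill-distance arguments give $n^{-\nu/d_x}$-type approximation combined with $L^2$ variance control.
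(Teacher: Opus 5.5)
Your reduction to $2\sup_{z}|\hat h_n(z;x)-h(z;x)|$, your remark that the design-dependent operator concentration does not depend on $z$, and your use of the bound $2M$ on the failure event are all fine. The central step, however, does not control the quantity you actually need.

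\textbf{The main gap.} Write $g_z:=\hat h_n(z;\cdot)-h(z;\cdot)$. You must bound $\mathbb{E}_X[\sup_{z}|g_z(X)|]$, with the supremum over $z$ \emph{inside} the integral over $X$. Fixed-$z$ capacity-based bounds control $\|g_z\|_{L^2(P_X)}$, and a net plus a union bound then controls $\max_{v\in V_\epsilon}\|g_v\|_{L^2(P_X)}$ with high probability. This does not bound $\mathbb{E}_X[\max_{v\in V_\epsilon}|g_v(X)|]$. That quantity can be larger by a factor of order $|V_\epsilon|^{1/2}$ (take indicators of $|V_\epsilon|$ disjoint sets of measure $1/|V_\epsilon|$), and $|V_\epsilon|$ is polynomial in $n$.

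Lipschitz continuity in $z$ does not rescue this. The only modulus you have, $G\|\omega_n(x)\|_1\lesssim G/\lambda_n$, is useful only at scales where the net is already polynomially large.

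What is missing is a fixed-$z$ bound that is uniform (or at least pointwise) in $x$, with polynomially small failure probability. This is the core of the paper's proof. On the sampling-inequality event $\mathcal E_n(t_n)$:
\begin{itemize}
\item The stochastic part $\|\hat h_n(z;\cdot)-\bar h_{n,z}\|_\infty$ is bounded by generic chaining over $x$. The increment metric is controlled through the kernel-section estimate $\|\hat\phi_x-\phi_x\|_n\lesssim\lambda_n^{1-d_x/(4s)}$, which yields $n^{-1/2}\lambda_n^{-d_x/(4s)}\sqrt{\log n}$.
\item The bias is bounded in sup norm through $\|f\|_\infty\lesssim\|f\|_{L^2}^{1-d_x/(2s)}\|f\|_{H^s}^{d_x/(2s)}$, which yields $\lambda_n^{1/2-d_x/(4s)}$.
\end{itemize}
In this norm the bias is \emph{not} negligible, contrary to your expectation. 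It matches the stochastic term, and $\lambda_n\asymp\log n/n$ is precisely the choice that balances the two.

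Upgrading your $L^2$ bound to $L^\infty$ by interpolation (or via the $L^\infty$/interpolation-norm versions of the capacity-based bounds) costs an extra factor of order $\lambda_n^{-d_x/(4s)}$ in the stochastic term. So a genuinely pointwise analysis is needed. Besides the paper's chaining, one option is to use Fubini and bound the maximum over the net at each fixed $x$: the noise is sub-Gaussian with variance proxy $\|\omega_n(x)\|_2^2$, and the bias can be bounded uniformly over the RKHS ball. Only afterwards integrate over $X$.

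\textbf{A smaller slip in the discretization.} Since $\|k_n(x)\|_2\le C\sqrt n$, your own chain of inequalities gives $\|\omega_n(x)\|_1\le C/\lambda_n\asymp n/\log n$, not $\sqrt n/\log n$. With $\epsilon=n^{-1}$, the net error is then of order $1/\log n$ and does not decay polynomially. Taking $\epsilon=n^{-2}$ fixes this at no cost, because the net stays polynomial in size; the paper uses $\varepsilon_n=\lambda_n(\log n/n)^{\nu/(2\nu+d_x)}$.
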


The convergence rate $n^{-\nu/(2\nu+d_x)}$ established in Proposition~\ref{prop:krr_matern_uniform_error} formalizes the statistical efficiency of KRR. As the Sobolev smoothness of the conditional expectation $h(z;\cdot) \in H^s(\mathcal{X})$ increases, i.e., as $s = \nu + d_x/2$ grows, the exponent $\nu/(2\nu+d_x)$ approaches $1/2$, thereby progressively weakening the effect of the ambient dimension $d_x$ on the convergence rate. In this sense, KRR-based PtO translates the smoothness of $h(z;\cdot)$ into improved statistical efficiency and mitigates the curse of dimensionality associated with local methods.

However, this gain in statistical efficiency comes at the cost of convexity preservation. Because KRR weights in Equation~\eqref{eq:KRRestimator} are obtained through kernel matrix inversion, they are generally sign-indefinite. The presence of negative weights implies that $\hat{h}_n(z;x)$ is no longer a nonnegative combination of the per-sample cost $c(z;X_i,Y_i)$. Consequently, even when each $c(z;\cdot,\cdot)$ is convex in $z$, the resulting empirical PtO objective in Equation~\eqref{eq:KRRestimator} need not remain convex, potentially complicating downstream optimization and increasing its computational burden (see Section~\ref{subsubsec:Effect of Negative Weights in PtO} for an example).

\citet{bertsimas22} observe that the negative KRR weights are typically small in magnitude and propose clipping them at zero to restore convexity. Specifically, each negative weight is replaced by zero, thereby converting the estimated objective into a nonnegative combination of the per-sample cost functions. However, they do not establish theoretical guarantees for the statistical performance of the resulting clipped KRR estimator. In the next subsection, we show that although this clipped KRR method restores convexity, it also reintroduces the curse of dimensionality.

\subsection{The Convexity--Efficiency Dilemma}\label{sec: dilemma}

The complementary strengths and limitations of local and global methods naturally raises a question: is it possible to design a prediction scheme for PtO that simultaneously preserve convexity and achieve statistical efficiency by overcoming the curse of dimensionality? Equivalently, can we construct a weighting scheme $\omega_{n,i}$ in Equation~\eqref{eq: PtO objective} that guarantees nonnegativity throughout the covariate space while exploiting high-order smoothness to achieve statistical efficiency in high-dimensional settings?

Our theoretical analysis in this subsection reveals that these two objectives are fundamentally mutually exclusive. 
To state the result formally, we adopt a minimax formulation over a class of smooth problem instances and a broad class of PtO procedures characterized only through their weighting rules.

\begin{assumption}[Problem Class for Minimax Lower Bound]
\label{ass:problem_class_pto}
Fix a nonnegative integer $\beta$. Let $\mathcal{P}_\beta$ denote a class of \emph{problem instances} $(P,c)$, where $P$ is a joint distribution of $(X,Y)$ supported on $\mathcal{X}\times\mathcal{Y}$ and $c:\mathcal{Z}\times\mathcal{X}\times\mathcal{Y}\to\mathbb{R}$ is a measurable cost function. For each $(P,c)\in\mathcal{P}_\beta$, define the induced conditional objective
$
h(z;x):=\E_{P}\!\left[c(z;X,Y)| X=x\right].
$
We assume the following conditions hold:
\begin{itemize}
\item[(i)] For every fixed $z\in\mathcal Z$, the map $x\mapsto h(z;x)$ is $\beta$-times continuously differentiable on $\mathcal X$, with all partial derivatives up to order $\beta$ uniformly bounded over $\mathcal X$, where $\beta=0$ means continuity.

\item[(ii)] The covariates are supported on the hypercube $\mathcal{X}=[0,1]^{d_x}$ and admit a density $p$ such that $0<p_{\min}\le p(x)\le p_{\max}<\infty$ for all $x\in\mathcal{X}.$

\item[(iii)] The conditional variance of $Y$ given $X=x$ is uniformly lower bounded, i.e., $\Var_{P}(Y| X=x)\ge \sigma^2$ for all $x\in\mathcal{X}$ and some constant $\sigma>0$.
\end{itemize}
\end{assumption}

\begin{assumption}[Admissible PtO Weights]
\label{ass:weights_pto}
A PtO algorithm is said to use admissible weights if, for every sample size $n$, every query point $x\in\mathcal{X}$, and every realization of $x_{1:n}$, it assigns weights $\{\omega_{n,i}(x)\}_{i=1}^n$ satisfying:
\begin{itemize}
\item[(i)] Each $\omega_{n,i}(x)$ depends only on $(x,x_{1:n})$, and does not depend on $y_{1:n}$.
\item[(ii)] $\omega_{n,i}(x)\ge 0$ for all $i=1,\dots,n.$
\item[(iii)] $\sum_{i=1}^n \omega_{n,i}(x) > 0$ for all $x \in\mathcal{X}.$
\end{itemize}
\end{assumption}

\begin{theorem}[Minimax Lower Bound for Convexity-Preserving PtO]
\label{thm:pto_lower_bound}
Consider the setting $d_y=d_z=1$ with $\mathcal Y=\mathcal Z=[-R,R]$ for some sufficiently large $R>0$. Fix a nonnegative integer $\beta$, let $\mathcal{P}_\beta$ be the problem class in Assumption~\ref{ass:problem_class_pto}. Consider any PtO algorithm that, given data $\mathcal{D}_n=\{(x_i,y_i)\}_{i=1}^n$ and a query covariate $x$, constructs the empirical objective as Equation~\eqref{eq: PtO objective}, where the weights satisfy Assumption~\ref{ass:weights_pto}, and outputs a decision \( \hat z_n(x)\in\arg\min_{z\in\mathcal{Z}} \hat h_n(z;x). \) Then there exists a constant $c_0>0$, independent of $n$, such that
\begin{equation}
\sup_{(P,c)\in\mathcal{P}_\beta} \E \left[ \E \left[ h(\hat z_n(X);X) - h(z^\ast(X);X) \,\big|\, \mathcal{D}_n \right] \right] \ \ge\ c_0\, n^{-\frac{4}{d_x+4}}.
\label{eq:PtO lower bound}
\end{equation}
\end{theorem}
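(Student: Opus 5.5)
The plan is to reduce the decision regret to the mean-squared error of a \emph{positive linear smoother}. Then I show that any such smoother suffers a second-order bias of size comparable to its ``bandwidth squared'', whatever the smoothness of the target. Balancing that bias against the variance forced by locality gives the rate $n^{-4/(d_x+4)}$.

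\textbf{Reduction to estimation.} I take the quadratic cost $c(z;x,y)=(z-y)^2$ on $\mathcal Z=\mathcal Y=[-R,R]$, with $Y=\mu(X)+\varepsilon$. Here $\varepsilon$ is independent of $X$, bounded and symmetric, with variance $\sigma^2$, and $\mu$ is a $C^\infty$ function with $|\mu|\le 2$. Then $h(z;x)=(z-\mu(x))^2+\sigma^2$, so $z^\ast(x)=\mu(x)$. For $R$ large enough the weighted minimizer is interior and equals $\hat z_n(x)=\sum_i\bar w_i y_i$, where $\bar w_i=\omega_{n,i}(x)/\sum_j\omega_{n,j}(x)$; this is well defined by Assumption~\ref{ass:weights_pto}(ii)--(iii). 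The regret is then $(\hat z_n(x)-\mu(x))^2$. Because the weights do not depend on $y_{1:n}$ (Assumption~\ref{ass:weights_pto}(i)), conditionally on $x_{1:n}$ this splits exactly as
\[
\text{bias}^2+\text{variance},\qquad \text{bias}=\sum_i\bar w_i\mu(x_i)-\mu(x),\qquad \text{variance}=\sigma^2\sum_i\bar w_i^2 .
\]
Every instance of this form lies in $\mathcal P_\beta$ for every $\beta$.

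\textbf{Second-order bias from nonnegativity.} I use the family $\mu_c(x)=\kappa\|x-c\|^2$, with $c$ ranging over the unit ball $\mathbb B$ and $\kappa$ small so that $|\mu_c|\le 2$. Write $S=\sum_i\bar w_i\|x_i-x\|^2$ and $\Delta=\sum_i\bar w_i(x_i-x)$. The bias is then exactly
\[
b_c=\kappa S+2\kappa(x-c)^\top\Delta .
\]
Nonnegativity enters only through $S\ge 0$ and Jensen's inequality, $S\ge\|\Delta\|^2$. I bound the supremum over instances from below by the average over $c$ uniform on $\mathbb B$. Two cases arise.
\begin{itemize}
\item If $\|\Delta\|\le S/(8\sqrt{d_x}+8)$, the linear term is dominated and $b_c\ge\kappa S/2$ for every $c$.
\item Otherwise, $\E_c[b_c^2]\ge\Var_c(b_c)=4\kappa^2\Delta^\top\mathrm{Cov}(c)\Delta\gtrsim\kappa^2\|\Delta\|^2\gtrsim\kappa^2S^2$.
\end{itemize}
In either case $\E_c[b_c^2]\ge c_1\kappa^2S^2$. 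Hence the averaged conditional regret at $x$ is at least $c_2\bigl(S^2+\sum_i\bar w_i^2\bigr)$.

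\textbf{Bias--variance tradeoff and the main obstacle.} Fix $r\in(0,1)$ and let $N_r(x)$ be the number of sample points in the ball of radius $r$ around $x$.
\begin{itemize}
\item If the weight mass outside this ball is at least $1/2$, then $S\ge r^2/2$.
\item Otherwise, Cauchy--Schwarz gives $\sum_i\bar w_i^2\ge 1/(4N_r(x))$. If $N_r(x)=0$, the first case holds automatically.
\end{itemize}
So the pointwise averaged regret is at least $c_3\min\{r^4,\,1/\max(N_r(x),1)\}$. I then take expectations over $x_{1:n}$ and $X$. Using $\E N_r(x)\le p_{\max}v_dnr^{d_x}$ and Jensen's inequality for the convex map $t\mapsto 1/t$, I need a lower bound on $\E\min\{r^4,1/N\}$. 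The intended route is Markov's inequality, which gives $N_r\le 2\E N_r$ with probability at least $1/2$; on that event the minimum is at least $\min\{r^4,\,1/(2p_{\max}v_dnr^{d_x})\}$. Choosing $r\asymp n^{-1/(d_x+4)}$ yields $c_0n^{-4/(d_x+4)}$. Finally, $\sup_{\mathcal P_\beta}\ge\E_c$ completes the argument.

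I expect the delicate step to be the bias lower bound. The weights may depend arbitrarily on the query $x$, so they can cancel the first-order (gradient) term of any single fixed $\mu$. This is exactly why I average over the family of paraboloid centers $c$: the random linear term then cannot be cancelled by a weighting rule that does not know $c$. I would double-check that the constants survive uniformly in $x\in[0,1]^{d_x}$ and that all $\mu_c$ keep $Y$ inside $[-R,R]$.
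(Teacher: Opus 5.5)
Your proposal is correct and shares the paper's skeleton. The quadratic cost turns regret into squared error, and the $y$-independence of the weights gives an exact bias--variance split. A family of quadratic targets then forces averaged squared bias of order $S^2$, where $S=\sum_i\bar w_i\|x_i-x\|^2$ is the paper's $V(x)$. Finally, locality trades $S$ against $\sum_i\bar w_i^2$. Both key steps, however, are carried out differently.

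\emph{Bias step.} The paper uses the finite family $\{x_1,\dots,x_{d_x},\tfrac12\|x\|^2\}$, the exact identity $V(x)=2B_{\mathrm q}(x)-2\sum_j x_jB_j(x)$, and Cauchy--Schwarz. Your paraboloids $\kappa\|x-c\|^2=\kappa\|x\|^2-2\kappa c^\top x+\kappa\|c\|^2$ span the same function space, so averaging over a random center is a randomized form of that identity. The paper's version is leaner: a finite maximum, no integration over $c$, and no case split. Yours makes explicit why a rule that does not know $c$ cannot cancel the first-order term.

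\emph{Tradeoff step.} The paper builds the multiscale event $\mathcal E(x)=\bigcap_k\{N_n(x,r_k)<k\}$ by Chernoff bounds and a union bound. It then takes the data-dependent $K=\lfloor 1/(4\sum_i\bar w_i^2)\rfloor$, derives (in your notation) $S^2\gtrsim (n\sum_i\bar w_i^2)^{-4/d_x}$, and minimizes over the variance proxy. You instead fix one radius $r\asymp n^{-1/(d_x+4)}$, split on the weight mass outside $B(x,r)$, and need only Markov's inequality for $N_r(x)$. Your route is more elementary and fully sufficient for the expected-regret bound. The paper's route yields a stronger intermediate statement: a bias lower bound that holds simultaneously for every value of $\sum_i\bar w_i^2$ on a single event of probability at least $2/3$.

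Three small corrections to your commentary:
\begin{itemize}
\item The inequality $S\ge\|\Delta\|^2$ is never used; both of your cases only need $S\ge 0$.
\item The decisive use of nonnegativity is in the tradeoff step, through $S\ge r^2\cdot(\text{weight mass outside the ball})$, not in the bias step.
\item Jensen for $t\mapsto 1/t$ does not apply directly to $\min\{r^4,1/t\}$, which is not convex. Either your Markov argument or the convex minorant $\min\{r^4,1/t\}\ge (r^{-4}+t)^{-1}$ is needed there.
\end{itemize}
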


Theorem~\ref{thm:pto_lower_bound} establishes a formal ``no-free-lunch'' result for PtO procedures using nonnegative, covariate-based weights satisfying Assumption~\ref{ass:weights_pto}. Even when the conditional objective is $\beta$-times continuously differentiable for arbitrarily large $\beta$, the worst-case expected regret of any such procedure cannot decay faster than $n^{-4/(d_x+4)}$.
This limitation already arises with a one-dimensional outcome and decision, demonstrating that these weighting schemes can fail to fully exploit high-order smoothness even in this simple setting.
This lower-bound rate coincides with that of second-order local methods and deteriorates rapidly as the covariate dimension $d_x$ increases. Thus, nonnegativity constitutes a fundamental statistical bottleneck: although it preserves convexity, it prevents the estimator from attaining the faster convergence rates associated with global smoothers over highly smooth function classes.

The proof constructs a finite adversarial subclass of $\mathcal P_\beta$, consisting of instances whose conditional objectives are quadratic and strongly convex in the decision variable, such that, for any PtO algorithm whose weights satisfy Assumption~\ref{ass:weights_pto}, at least one instance in this subclass incurs expected regret at least of order $n^{-4/(d_x+4)}$. Taking the maximum over this finite subclass and subsequently the supremum over $\mathcal P_\beta$ yields Equation~\eqref{eq:PtO lower bound}.  
When \(\nu=2\), the exponent in Proposition~\ref{prop:krr_matern_uniform_error}'s upper bound is half that in Theorem~\ref{thm:pto_lower_bound}'s lower bound. This difference may be explained by the different proof arguments. Proposition~\ref{prop:krr_matern_uniform_error} bounds regret by the uniform objective-estimation error without exploiting curvature, whereas Theorem~\ref{thm:pto_lower_bound} uses a quadratic construction in which regret is proportional to the squared decision error. Neither bound is guaranteed to be tight, and considering a more general adversarial subclass may yield a sharper lower bound. The complete proof of Theorem~\ref{thm:pto_lower_bound} is provided in Appendix~\ref{app: proof of theorem 1}.

To illustrate this dilemma, we consider a contextual newsvendor problem in which a retailer decides how much inventory to order each day. The resulting cost depends on both the order quantity \(z\) and the realized demand \(Y\). Specifically, the retailer incurs an overage cost when the order quantity exceeds demand and a shortage cost when demand exceeds the order quantity. At the time of ordering, the retailer does not observe demand \(Y\), but has access to some covariates \(X\), such as seasonal conditions, promotional activities, and weather, that may help predict demand. The detailed experimental setting, including the cost parameters, conditional demand distribution is provided in Section \ref{sec:contextual newsvendor}. Here, we use the structure-aware formulation established in Section \ref{sec:contextual newsvendor}, under which the objective function is infinitely differentiable in $x$. We fix the training sample size at $n=2,000$ and vary the covariate dimension \(d_x\). We evaluate decision performance using the relative cost defined in Equation~\eqref{eq: relative cost}, with unconditional SAA as the benchmark. 

Figure~\ref{fig:illustrative example} compares traditional local PtO methods with two variants of KRR-based PtO. PtO-KRR uses the original KRR weights, which can be negative, while PtO-KRR$^+$ truncates negative weights to zero and renormalizes the remaining weights to sum to one, following \citet{bertsimas22}. Traditional local methods, including $k$-nearest neighbors and kernel smoothing, use nonnegative weights and therefore preserve convexity. Their performance, however, deteriorates substantially as $d_x$ increases, illustrating the severe effect of the curse of dimensionality on local PtO methods. By contrast, PtO-KRR exploits the existing global smoothness and maintains relatively stable performance as $d_x$ increases. PtO-KRR$^+$ restores convexity through nonnegative weights but exhibits a similarly pronounced deterioration, indicating that clipping eliminates much of the high-dimensional advantage of the original KRR estimator. In summary, the theoretical and numerical results reveal a fundamental dilemma within standard PtO frameworks: preserving convexity through nonnegative weights and attaining high-dimensional statistical efficiency through high-order smoothness cannot, in general, be achieved simultaneously.

\begin{figure}[t]
    \FIGURE
    {\includegraphics[width=0.7\linewidth]{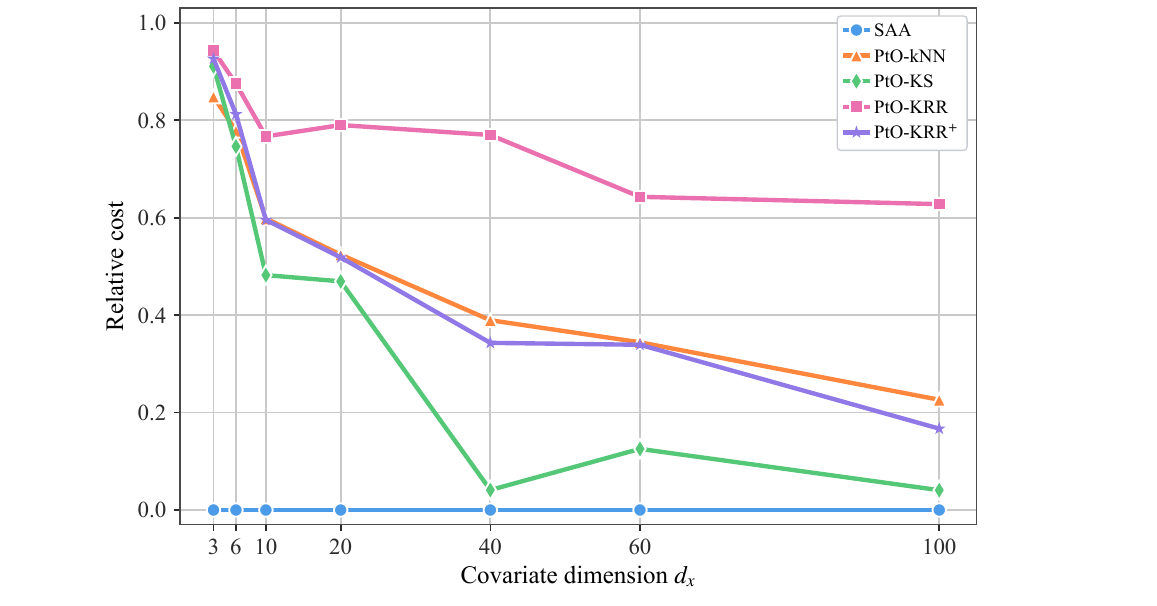}}
    {An illustrative example showing the performance of predictive approaches
    \label{fig:illustrative example}}
    {}
\end{figure}

\section{Generate-then-Optimize}
\label{sec:GtO}

We next introduce an alternative paradigm, Generate-then-Optimize (GtO), for solving the CSO problem~\eqref{eq: stochastic programming with covariates}.
GtO is a standard simulation-based method for solving stochastic optimization problems, which can be naturally extended to the CSO problem~\eqref{eq: stochastic programming with covariates}. Denote $F_{Y|X}(\cdot|x)$ as the conditional distribution of $Y$ conditioned on $X=x$. On the offline stage, we use the historical data $\mathcal{D}_n =\{(x_i,y_i)\}_{i=1}^n$ to learn the conditional distribution or a simulator for $Y|\{X=x\}$, denoted as $\hat{F}_{Y|X}(\cdot|x)$. On the online stage, when a new covariate $X=x_0$ arrives, we generate $m$ i.i.d. samples from $\hat{F}_{Y|X}(\cdot|x_0)$, denoted as $\hat{Y}_1,\hat{Y}_2,\ldots,\hat{Y}_m\sim \hat{F}_{Y|X}(\cdot|x_0)$. Then, we can use SAA to construct a sample objective function $\hat{h}$ and solve the simulation-based CSO problem accordingly as
\begin{equation}\label{eq: GtO opt}
	\hat{z}_{n,m}^\ast(x_0) \in \arg\min_{z \in \mathcal{Z}} \left\{ \hat{h}(z;x_0):= \frac{1}{m}\sum_{j=1}^m c\big(z;x_0,\hat{Y}_j\big) \right\}.
\end{equation}

\subsection{Convexity Preservation}\label{subsec:convexity-preservation}
Assumption~\ref{ass:SAA-struct0} supposes that the cost function $c(z;x,y)$ is convex in $z$ for each fixed $(x,y)$. The following proposition shows that GtO preserve the convexity structure of the cost function. 

\begin{proposition}\label{prop:convexity-preservation}
	If $c(\cdot;x,y)$ is convex on $\mathcal{Z}$ for all $(x,y)$, then for any realized simulator $\hat F_{Y|X}(\cdot| x_0)$, the SAA objective
	$\hat h(z;x_0) := \frac{1}{m}\sum_{j=1}^m c(z;x_0,\hat Y_j)$, where $\hat{Y}_1,\hat{Y}_2,\ldots,\hat{Y}_m\sim \hat F_{Y|X}(\cdot| x_0)$, is convex on $\mathcal{Z}$.
\end{proposition}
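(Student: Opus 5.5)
The plan is a direct pathwise argument, conditioning on the realized generated scenarios. Fix the query covariate $x_0$ and any realization $\hat y_1,\dots,\hat y_m$ of the draws $\hat Y_1,\dots,\hat Y_m$ from $\hat F_{Y|X}(\cdot|x_0)$. The distribution $\hat F_{Y|X}$ only determines which scenarios are realized, while convexity is a property of the map $z\mapsto \hat h(z;x_0)$ for fixed scenarios. So it suffices to show convexity for every fixed tuple $(\hat y_1,\dots,\hat y_m)\in\mathcal Y^m$. The result then holds almost surely, and in fact surely, for any simulator, including a learned or biased one.

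For a fixed tuple, I will take $z_1,z_2\in\mathcal Z$ and $\theta\in[0,1]$. Since $\mathcal Z$ is convex by Assumption~\ref{ass:SAA-struct0}, the point $\theta z_1+(1-\theta)z_2$ lies in $\mathcal Z$. For each $j$, I will apply the convexity of $c(\cdot;x_0,\hat y_j)$ to get
\[
c(\theta z_1+(1-\theta)z_2;x_0,\hat y_j)\le \theta c(z_1;x_0,\hat y_j)+(1-\theta)c(z_2;x_0,\hat y_j).
\]
Averaging these $m$ inequalities with the equal weights $1/m>0$ preserves the inequality direction. The result is exactly the convexity inequality for $\hat h(\cdot;x_0)$.

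The point to stress is the contrast with Equation~\eqref{eq: PtO objective}. The SAA objective is a combination of scenario costs with nonnegative weights $1/m$, which sum to one. These weights do not depend on any estimation step, so sign-indefiniteness of the kind seen with $\omega^{\mathrm{KRR}}_{n,i}$ can never arise.

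There is no real obstacle here; the only care needed is measure-theoretic bookkeeping. One must note that $\hat h$ is a random function, and that the claim is deterministic convexity for every realization rather than convexity in expectation. One should also check that each $\hat y_j$ lies in $\mathcal Y$, where the convexity hypothesis on $c$ is assumed. If a generator could produce samples outside $\mathcal Y$, I would state that its support is taken to lie in $\mathcal Y$, or extend the hypothesis to all $y$ in the generator's range. The proposition's phrasing "for all $(x,y)$" already covers this.
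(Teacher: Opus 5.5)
Your proposal is correct and follows the same route as the paper, which simply observes that a nonnegative linear combination (here with equal weights $1/m$) of the convex scenario costs $c(\cdot;x_0,\hat Y_j)$ is convex. Your realization-by-realization verification just writes that fact out explicitly.
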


The proof of Proposition~\ref{prop:convexity-preservation} is immediate from the fact that nonnegative linear combinations of convex functions are convex. Proposition~\ref{prop:convexity-preservation} shows that, under GtO, the optimization problem~\eqref{eq: GtO opt} solved online is exactly a convex program with $\hat h(\cdot;x_0)$ being convex. Thus, the powerful machinery of convex optimization can be used without modification and classical stability results for convex stochastic programming apply; see, e.g., \citet{ShapiroDentchevaRuszczynski2009}.

\subsection{Conditional Generative Models for GtO}
\label{subsec:generative_models_gto}

The GtO framework is agnostic to the particular generative architecture used to construct $\hat{F}_{Y|X}(\cdot|x)$. Its essential requirement is a learned conditional distribution from which scenarios can be generated at a given covariate value. A conditional generator may be represented abstractly as $\hat{Y}=G_{\hat{\theta}}(x,\xi)$, where $\xi\sim P_{\xi}$, $G_{\hat{\theta}}$ is learned from $\mathcal{D}_n$, and the exogenous random variable $\xi$ supplies the randomness required for scenario generation. The distribution induced by $G_{\hat{\theta}}(x,\xi)$ defines $\hat{F}_{Y|X}(\cdot|x)$. This formulation encompasses both classical conditional distribution estimators that permit sampling and modern deep generative models. We focus on the latter.

A variety of conditional generative models can be incorporated into GtO. Representative examples include conditional variational autoencoders, which use latent variables to represent complex conditional distributions; conditional generative adversarial networks, which learn a conditional sampler through adversarial training; and conditional normalizing flows, which construct flexible conditional densities through invertible transformations. Autoregressive models can generate outcomes sequentially and are particularly useful for discrete, sequential, or otherwise structured outcomes. Transformer-based language models also belong to this class when outcomes are represented as sequences. We refer to \citet{bondtaylor2022deep} for a review of these model families. Conditional diffusion models provide another approach, generating scenarios through a learned reverse diffusion process \citep{yang2023diffusionsurvey}. Related flow-matching models learn continuous transformations from a reference distribution to the target distribution through a vector-field regression~objective.

We use conditional diffusion models (CDMs) as a benchmark implementation of GtO. They can flexibly represent high-dimensional and multimodal conditional distributions, and their denoising-based training avoids the adversarial minimax optimization used by conditional GANs. Recent studies also provide explicit statistical guarantees for conditional distribution estimation, making CDMs suitable for connecting our theoretical analysis with an implementable generative model \citep{fu24,tang2024conditional}. A potential limitation is the cost of generating samples through a discretized reverse-time process. Accelerated samplers and distillation methods can reduce this burden, although sampling cost remains an important consideration in downstream optimization; see \citet{yang2023diffusionsurvey} for a review. Accordingly, we use CDMs in the subsequent theoretical and numerical analyses, while other conditional generative models can be incorporated into the same GtO framework.

We briefly describe the CDM. Fix a covariate value $x$, and let $\{Y_t:t\geq0\}$ denote a forward Ornstein--Uhlenbeck process initialized from the target conditional distribution:
\[
\mathrm{d} Y_t=-\frac{1}{2}Y_t\,\mathrm{d}t+\mathrm{d}B_t, \quad \mbox{with $Y_0\sim F_{Y|X}(\cdot|x)$ and $t\in(0,T]$},
\]
where $B_t$ is a standard Brownian motion. Equivalently,
\(
Y_t=a_tY_0+b_t\varepsilon,
~
a_t=e^{-t/2},
~
b_t=\sqrt{1-e^{-t}},
~
\varepsilon\sim\mathcal{N}(0,I).
\)
Let $q_t(\cdot|x)$ denote the conditional density of $Y_t$ given $X=x$. The key object is the time-dependent conditional score \( s^\ast(y,x,t) := \nabla_y\log q_t(y|x), \) which is estimated from $\mathcal D_n$ by denoising score matching. The conditional and unconditional score functions can be estimated jointly by minimizing the empirical denoising score-matching risk \( \hat{s} \in \arg\min_{s\in\mathcal S} \frac{1}{n} \sum_{i=1}^n \ell(x_i,y_i;s), \) where 
\[ \ell(x_i,y_i;s) := \frac{1}{T-t_0} \int_{t_0}^{T} \E_{\substack{ \tau, \tilde Y_t }} \left[ \left\| s\big(\tilde Y_t,\tau(x_i),t\big)- \nabla_{\tilde y} \log\varphi_t(\tilde Y_t\mid y_i) \right\|_2^2 \right] \mathrm{d}t,
\]
where $\tilde Y_t\sim \mathcal N(a_ty_i,b_t^2I)$, $\tau\sim \mathsf{Unif}\{\varnothing,\mathrm{id}\}$, $\tau(x_i)=x_i$ when $\tau=\mathrm{id}$ and $\tau(x_i)=\varnothing$ when $\tau=\varnothing$, and $\varphi_t(\cdot\mid y_i)$ denotes the Gaussian transition density $\mathcal N(a_ty_i,b_t^2I)$. Notice that, since \( \nabla_{\tilde y} \log\varphi_t(\tilde y\mid y_i) = -\frac{\tilde y-a_ty_i}{b_t^2}, \) writing $\tilde Y_t=a_ty_i+b_t\varepsilon$ with $\varepsilon\sim\mathcal N(0,I)$ yields the equivalent regression target \( \nabla_{\tilde y} \log\varphi_t(\tilde Y_t\mid y_i) = -\frac{\varepsilon}{b_t}; \) The lower endpoint $t_0>0$ is an early-stopping time introduced to avoid the singular behavior of the score near $t=0$, and $\mathcal S$ is a unified class consisting of conditional and unconditional ReLU score networks; See \cite{yang2023diffusionsurvey} and \cite{fu24} for more learning details. Given the estimated conditional score, scenarios are generated from the reverse-time stochastic differential equation 
\begin{equation}
\mathrm{d} \bar Y_t = \left[ \frac{1}{2}\bar Y_t + \tilde{s}\big(\bar Y_t,x,T-t\big) \right]\mathrm{d}t + \mathrm{d}\bar B_t, \quad \mbox{with $\bar Y_0\sim\mathcal N(0,I)$ and $t\in(0,T-t_0]$},
\label{eq: reverse-time process}
\end{equation}
where $\bar B_t$ is another Brownian motion independent of $B_t$ and $\tilde{s}(y,x,t):=(1+\eta)\hat{s}(y,x,t)-\eta\hat{s}(y,\varnothing,t)$ with a guidance strength $\eta\geq 0$. For the distribution-estimation interpretation and the theoretical guarantees below, we take $\eta=0$; positive values of $\eta$ correspond to an optional classifier-free guidance heuristic. At the population level, with the exact conditional score $s^\ast$, the output distribution of the reverse-time process~\eqref{eq: reverse-time process} approaches $F_{Y| X}(\cdot| x)$ as the terminal time $T$ becomes large and the early-stopping time $t_0$ becomes small. In practice, using a finite $T$ and a positive $t_0$, replacing the exact score $s^\ast$ with an estimated score $\tilde{s}$, and numerically discretizing the reverse process yield the learned conditional distribution $\hat F_{Y| X}(\cdot| x)$ from which GtO scenarios are generated. Under suitable assumptions and parameter choices, \citet{fu24} and \citet{tang2024conditional} establish convergence guarantees for such CDM estimators.

The regret analysis developed below deliberately abstracts away from these architectural details. It requires only that the learned model support conditionally independent scenario generation and that the discrepancy between $\hat{F}_{Y|X}(\cdot|x)$ and $F_{Y|X}(\cdot|x)$ admit an appropriate statistical bound. The specific generative architecture affects the resulting regret bound only through the offline model error, while the online SAA error is governed by the number $m$ of generated scenarios. CDMs provide one concrete implementation for which theoretical distributional learning rates are available, but the GtO framework and the regret decomposition are not restricted to this particular model class.

\subsection{Statistical Efficiency of GtO}
Now, we analyze the regret 
\[
\mathcal{R}(x_0) := h\left(\hat{z}_{n,m}^\ast(x_0);x_0\right) - h\left(z^\ast(x_0);x_0\right) \ge 0.
\]
To isolate the impact of the model error from the SAA error, we define an auxiliary decision rule as
\begin{equation}\label{eqn:nosaa}
	\bar{z}_n^{\ast}(x_0) \in \arg\min_{z \in \mathcal{Z}} \Big\{ \bar{h}(z;x_0):=  \E\big[c(z;X,\hat{Y})\,|\,X = x_0 \big]\Big\},
\end{equation}
where $\hat{Y}\sim\hat{F}_{Y|X}(\cdot|x_0)$. Then we have the basic decomposition
\begin{align}
	&\mathcal{R}(x_0)  \nonumber \\
	&\,=\, \left[h(\hat{z}_{n,m}^\ast(x_0);x_0) - \bar h(\hat{z}_{n,m}^\ast(x_0);x_0)\right] + \left[\bar h(\hat{z}_{n,m}^\ast(x_0);x_0) - \bar h(\bar z_n^\ast(x_0);x_0)\right] + \left[\bar h(\bar z_n^\ast(x_0);x_0) - h(z^\ast(x_0);x_0)\right] \nonumber \\
	&\,\le\, \left[\bar h(\hat{z}_{n,m}^\ast(x_0);x_0) - \bar h(\bar z_n^\ast(x_0);x_0)\right] + 2 \sup_{z\in\mathcal{Z}} \big|h(z;x_0)-\bar h(z;x_0)\big|. \label{eq:error_decomposition0}
\end{align}
The first term in Equation~\eqref{eq:error_decomposition0} is the error induced by the SAA under the distribution $\hat{F}_{Y|X}(\cdot|x_0)$, while the second term in Equation~\eqref{eq:error_decomposition0} is the model error caused by the difference between $\hat{F}_{Y|X}(\cdot|x_0)$ and $F_{Y|X}(\cdot|x_0)$. We will discuss the two terms in Equation~\eqref{eq:error_decomposition0} separately, and then summarize their combined impact on the regret $\mathcal{R}(x_0)$.

\subsubsection{The SAA Error.}

We first analyze the contribution of the SAA approximation to the regret in Equation~\eqref{eq:error_decomposition0}. Fix the covariate value $x_0$, and treat the offline data $\mathcal{D}_n$ and the learned distribution or simulator $\hat{F}_{Y|X}(\cdot|x_0)$ as given. All expectations in this subsection are taken with respect to the simulation draws $\hat{Y}_1,\hat{Y}_2,\dots,\hat{Y}_m$ conditional on $\mathcal{D}_n$ and $X=x_0$.

\begin{lemma}\label{lem:SAA-error0}
	Suppose Assumption~\ref{ass:SAA-struct0} holds. Fix a covariate value $x_0$ and a realization of the learned conditional distribution or simulator $\hat{F}_{Y|X}(\cdot|x_0)$.
	Then there exists a constant $C_{\mathrm{SAA}}<\infty$, depending only on $(G,M,D,d_z)$, such that for all $m\ge 2$,
	\begin{equation}\label{eq:SAA-regret-bound}
		\E\left[ \bar h(\hat z_{n,m}^\ast(x_0);x_0) - \bar h(\bar z_n^\ast(x_0);x_0) \,\big|\, \mathcal{D}_n,X=x_0 \right] \;\le\; C_{\mathrm{SAA}}\,m^{-1/2}.
	\end{equation}
\end{lemma}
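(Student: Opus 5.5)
Fix $x_0$ and the learned simulator, so that $\hat Y_1,\dots,\hat Y_m$ are i.i.d.\ from $\hat F_{Y|X}(\cdot|x_0)$ and $\bar h(z;x_0)=\E[c(z;x_0,\hat Y)]$. The plan is the standard uniform-deviation argument for SAA.

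\textbf{Step 1 (reduction to a uniform deviation).} Write $\hat h$ for the SAA objective in Equation~\eqref{eq: GtO opt}. Since $\hat z_{n,m}^\ast$ minimizes $\hat h$, we have $\hat h(\hat z_{n,m}^\ast)\le \hat h(\bar z_n^\ast)$. Therefore
\[
\bar h(\hat z_{n,m}^\ast)-\bar h(\bar z_n^\ast)\le [\bar h(\hat z_{n,m}^\ast)-\hat h(\hat z_{n,m}^\ast)]+[\hat h(\bar z_n^\ast)-\bar h(\bar z_n^\ast)]\le 2\sup_{z\in\mathcal Z}|\hat h(z)-\bar h(z)|.
\]
Measurability of the argmin selection is not an issue, because the bound is in terms of a supremum. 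Also, $\bar z_n^\ast$ is deterministic given $\mathcal D_n$ and $x_0$, so the second bracket alone has mean zero; only the first bracket genuinely needs uniformity. It therefore suffices to show $\E\sup_z|\hat h(z)-\bar h(z)|\le C m^{-1/2}$.

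\textbf{Step 2 (bounding the expected supremum).} The process $Z(z):=\hat h(z)-\bar h(z)$ is an average of $m$ i.i.d.\ centered terms. Each term is bounded by $2M$ and is $2G$-Lipschitz in $z$, by Assumption~\ref{ass:SAA-struct0}. I will bound the supremum in one of two ways.

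(a) \emph{Finite net plus Hoeffding.} Take an $\epsilon$-net $N_\epsilon$ of $\mathcal Z$ with $|N_\epsilon|\le (3D/\epsilon)^{d_z}$. Then $\sup_z|Z(z)|\le \max_{N_\epsilon}|Z|+2G\epsilon$. Hoeffding's inequality together with a maximal inequality for sub-Gaussian variables gives $\E\max_{N_\epsilon}|Z|\le 2M\sqrt{2\log(2|N_\epsilon|)/m}$. Choosing $\epsilon=D/\sqrt m$ yields a bound of order $\sqrt{d_z\log m/m}$, which carries an extra $\sqrt{\log m}$ factor.

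(b) \emph{Dudley's entropy integral} (the route I expect to need). Symmetrize with Rademacher signs. The symmetrized process is sub-Gaussian with respect to the metric $\frac{2G}{\sqrt m}\|z-z'\|$, and its diameter is at most of order $M/\sqrt m$. Dudley's bound then gives
\[
\E\sup_z|Z(z)|\le \frac{C}{\sqrt m}\int_0^{\min(2M,GD)}\sqrt{d_z\log(1+GD/u)}\,du .
\]
The integral is finite and depends only on $(G,M,D,d_z)$, so no $\log m$ factor appears. Anchoring at a fixed $z_0$ contributes $\E|Z(z_0)|\le M/\sqrt m$.

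\textbf{Main obstacle.} The delicate step is removing the logarithm, which means carrying out the chaining argument of Step 2(b) carefully. The condition $m\ge2$ only matters if one settles for route (a) and absorbs $\log m$. Since the lemma claims a pure $m^{-1/2}$ rate, I will present the Dudley argument. The constant is then
\[
C_{\mathrm{SAA}}=2\bigl(C\sqrt{d_z}\,\textstyle\int_0^{2M}\sqrt{\log(1+GD/u)}\,du+M\bigr).
\]
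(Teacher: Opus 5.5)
Your proposal is correct and follows essentially the paper's route: the same reduction to $2\sup_{z\in\mathcal Z}|\hat h(z;x_0)-\bar h(z;x_0)|$, then a chaining (Dudley-type) bound on the expected supremum plus an $M/\sqrt m$ anchor term. The paper does the chaining explicitly over dyadic nets at scales $r_k=D2^{-k}$ and applies Hoeffding's lemma directly to the increments, so your symmetrization step is unnecessary; this gives the explicit constant $C_{\mathrm{SAA}}=2M+36GD\sqrt{d_z\log 2}$.
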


Lemma~\ref{lem:SAA-error0} shows that, once the learned distribution or simulator $\hat{F}_{Y|X}(\cdot|x_0)$ is fixed, the SAA-based decision $\hat{z}_{n,m}^\ast$ is nearly optimal for the population objective $\bar{z}_n^{\ast}(x_0)$, with regret decaying at the typical rate $m^{-1/2}$ which comes from the fact that $\hat{h}$ is an empirical average of $m$ i.i.d.\ samples. The proof follows the standard SAA/M-estimation pattern: obtain a uniform bound on the empirical objective and then transfer it to the regret; see, e.g., \citet[Chapter~5]{vanderVaart1998asymptotic} and \citet[Chapter~5]{ShapiroDentchevaRuszczynski2009}. The existing SAA results typically provide almost-sure or in-probability convergence of regrets and solutions, and asymptotic distributions; see, e.g., \citet{ShapiroDentchevaRuszczynski2009,BanholzerFliegeWerner2022}. Our result here is a non-asymptotic $L^1$-error bound for the SAA regret with explicit $m^{-1/2}$ dependence.
We provide the proof in Appendix~\ref{app: proof of lemma 1}.

\subsubsection{The Model Error.}

We now turn to the contribution of model misspecification, i.e., the error caused by using the learned conditional distribution $\hat{F}_{Y|X}$ instead of the true distribution $F_{Y|X}$. 

\begin{assumption}\label{ass: TV-rate}
	There exist constants $C_{\mathrm{M}}>0$, $\alpha_0>0$, and $\alpha_1\ge 0$ such that for all $n\geq 2$,
	\begin{equation}\label{eq:TV-rate}
\E\Big[\E\Big[\mathrm{TV}\Big(\hat{F}_{Y|X}(\cdot|X),F_{Y|X}(\cdot|X)\Big) \,\big|\, \mathcal{D}_n \Big]\Big]
		\;\le\;
		C_{\mathrm{M}}\,n^{-\alpha_0}(\log n)^{\alpha_1},
	\end{equation}
	where $\mathrm{TV}$ denotes the total variation distance, the inner expectation is with respect to the covariate $X$, and the outer expectation is with respect to the training data $\mathcal{D}_n$.
\end{assumption}

Assumption~\ref{ass: TV-rate} requires that the learned conditional distribution $\hat{F}_{Y|X}(\cdot|x)$ converges to the true conditional distribution $F_{Y|X}(\cdot|x)$ in total variation, on average over the covariate distribution, at some explicit rate $n^{-\alpha_0}$ up to logarithmic factors. Such rates are available in both parametric and nonparametric settings. In parametric conditional density models with a finite-dimensional parameter estimated at the usual $n^{-1/2}$ rate \citep{vanderVaart1998asymptotic}, one can show that Assumption~\ref{ass: TV-rate} holds with $\alpha_0=1/2$ and $\alpha_1=0$ under some regularity conditions.
In nonparametric conditional density estimation, minimax rates in integrated $L_1$ (and hence total variation) risk of the form $n^{-\beta/(d_x+d_y+2\beta)}$ with log factors are attainable under $\beta$-H\"older smoothness assumptions; see, e.g., \citet{Tsybakov2009,li2022minimax,fu24}. 

\smallskip
\begin{remark}\label{rem: TV bound}
\textit{
Recently, \citet{fu24} study the sample complexity of using CDMs to estimate the conditional distribution $F_{Y|X}(\cdot|x)$ from data. Under their baseline conditions, the conditional density $p_{Y|X}(y|x)$ satisfies a uniform Gaussian-type upper-tail bound and $\beta$-H\"older smooth (i.e., belongs to a $\beta$-H\"older ball). Together with a uniform bound on its Kullback--Leibler divergence from the standard Gaussian distribution and appropriate choices of the early-stopping and terminal times, their result yields
\[
\alpha_0 = \frac{\beta}{4(d_x+d_y+\beta)} \quad \text{and} \quad  \alpha_1 = \max\left\{ 9,\, \frac{d_y}{2}+\frac{\beta}4+1 \right\}.
\]
Furthermore, under their stronger conditional-density condition, \( p_{Y|X}(y|x) = \textrm{exp}(-C_2\|y\|_2^2/2) \cdot f(y,x), \) where $f$ is $\beta$-H\"older smooth and is uniformly bounded away from zero, the rate improves to
\[
\alpha_0 = \frac{\beta}{d_x+d_y+2\beta} \quad \text{and} \quad \alpha_1 = \max\left\{ \frac{19}2,\, \frac{\beta+2}2 \right\}.
\]
Consequently, for fixed $d_x$ and $d_y$, the convergence rate $\alpha_0$ approach $1/2$ as $\beta\to\infty$. Similarly, \citet{tang2024conditional} establish a related rate under compact-support and smooth conditional-density assumptions, while allowing the smoothness levels in the covariate and response directions to differ. Assumption~\ref{ass: TV-rate} therefore summarizes the statistical accuracy of the offline conditional-distribution learning step in a form directly applicable to our regret analysis.
}
\end{remark}

\begin{lemma}\label{lem:model-error0}
	Suppose Assumptions~\ref{ass:SAA-struct0} and~\ref{ass: TV-rate} hold. Then
	\begin{equation}\label{eq:model-error-bound-regret}
		\E\Big[\E\Big[\sup_{z\in\mathcal{Z}}|h(z;X) - \bar{h}(z;X) | \,\big|\, \mathcal{D}_n \Big]\Big]
		\;\le\; 2M\,C_{\mathrm{M}}\,n^{-\alpha_0}(\log n)^{\alpha_1},
	\end{equation}
	where the inner expectation is with respect to $X$ and the outer expectation is with respect to $\mathcal{D}_n$.
\end{lemma}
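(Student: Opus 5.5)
The argument is a pointwise bound followed by two expectations. Fix a realization of $\mathcal D_n$ and a covariate value $x$. For each $z\in\mathcal Z$, the two objectives are integrals of the same function against two distributions:
\[
h(z;x)-\bar h(z;x)=\int_{\mathcal Y} c(z;x,y)\,\mathrm d\big(F_{Y|X}(\cdot|x)-\hat F_{Y|X}(\cdot|x)\big)(y).
\]
Here $\bar h(z;x)=\E[c(z;x,\hat Y)]$ with $\hat Y\sim\hat F_{Y|X}(\cdot|x)$, by definition \eqref{eqn:nosaa}; since we condition on $X=x$, the cost is evaluated at $(z;x,y)$. By Assumption~\ref{ass:SAA-struct0}, the integrand $y\mapsto c(z;x,y)$ is measurable and bounded by $M$ in absolute value.

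The key step is the standard duality bound for total variation. For any bounded measurable $f$ with $\|f\|_\infty\le M$ and any probability measures $\mu,\nu$,
\[
\Big|\int f\,\mathrm d\mu-\int f\,\mathrm d\nu\Big|\le \|f\|_\infty\,|\mu-\nu|(\mathcal Y)=2M\,\mathrm{TV}(\mu,\nu),
\]
with the convention $\mathrm{TV}(\mu,\nu)=\sup_A|\mu(A)-\nu(A)|$, so that the total variation norm equals $2\,\mathrm{TV}$. One can see this via the Hahn--Jordan decomposition of $\mu-\nu$. This convention matches the factor $2M$ in \eqref{eq:model-error-bound-regret}; under the alternative convention the constant would only improve to $M$. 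The right-hand side does not depend on $z$, so
\[
\sup_{z\in\mathcal Z}|h(z;x)-\bar h(z;x)|\le 2M\,\mathrm{TV}\big(\hat F_{Y|X}(\cdot|x),F_{Y|X}(\cdot|x)\big)
\]
for every $x$ and every realization of $\mathcal D_n$.

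Next we take the inner expectation over $X$ conditional on $\mathcal D_n$, and then the outer expectation over $\mathcal D_n$. Monotonicity of expectation preserves the inequality, and Assumption~\ref{ass: TV-rate} then gives the bound $2M C_{\mathrm M} n^{-\alpha_0}(\log n)^{\alpha_1}$.

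The only delicate point is measurability. The map $x\mapsto\sup_z|h(z;x)-\bar h(z;x)|$ must be measurable for the expectations to make sense. Uniform $G$-Lipschitz continuity of $c$ in $z$ makes both $h$ and $\bar h$ continuous in $z$. The supremum can therefore be taken over a countable dense subset of the compact set $\mathcal Z$, which yields measurability; alternatively, one can read the expectation as an outer expectation, and the bound is unaffected.
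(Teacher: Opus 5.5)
Your proposal is correct and is essentially the paper's own proof. Both derive the pointwise bound $\sup_{z}|h(z;x)-\bar h(z;x)|\le 2M\,\mathrm{TV}\big(\hat F_{Y|X}(\cdot|x),F_{Y|X}(\cdot|x)\big)$ from the bounded-test-function characterization of total variation, then take expectations over $X$ and $\mathcal D_n$ and apply Assumption~\ref{ass: TV-rate}; your remarks on the TV convention and on measurability of the supremum are correct and make explicit points the paper leaves implicit.
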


Lemma~\ref{lem:model-error0} shows that, under the average total variation convergence of Assumption~\ref{ass: TV-rate}, the regret due purely to model misspecification (i.e., optimizing under $\hat{F}_{Y|X}(\cdot|x)$ instead of $F_{Y|X}(\cdot|x)$) decays at the same rate as the simulator itself in total variation, up to constants depending on the cost envelope $M$. We arrange the proof in Appendix~\ref{app: proof of lemma 2}.

\smallskip
\begin{remark}[Alternative Distributional Discrepancies]
\label{rem:alternative-discrepancies}
\textit{
Assumption~\ref{ass: TV-rate} is stated in terms of total variation because, under the uniform boundedness condition $|c(z;x,y)|\le M$, total variation directly controls the discrepancy between the true and learned conditional objectives as shown in Appendix~\ref{app: proof of lemma 2}. More generally, the same analysis applies to a probability metric or statistical divergence $\rho$ whenever there exist constants $L_\rho<\infty$ and $\gamma\in(0,1]$ such that
\begin{equation}
\sup_{z\in\mathcal Z} \left| \E_{Y\sim P}[c(z;x,Y)] - \E_{Y\sim Q}[c(z;x,Y)] \right| \,\le\, L_\rho \rho(P,Q)^\gamma
\label{eq: bound with other discrepancy}
\end{equation}
for all relevant distributions $P$ and $Q$. Furthermore, if
\begin{equation}
\E \Big[ \E \Big[ \rho\Big( \hat F_{Y|X}(\cdot|X), F_{Y|X}(\cdot|X) \Big) \,\big|\, \mathcal D_n \Big] \Big] \,\le\, C_{\mathrm M}n^{-\alpha_0}(\log n)^{\alpha_1},
\label{eq: Distributional learning rates}
\end{equation}
then the proof of Lemma~\ref{lem:model-error0} and Jensen's inequality yield
\[
\E\Big[\E\Big[\sup_{z\in\mathcal{Z}}|h(z;X) - \bar{h}(z;X) | \,\big|\, \mathcal{D}_n \Big]\Big] \,\le\, L_\rho C_{\mathrm M}^{\gamma} n^{-\gamma\alpha_0} (\log n)^{\gamma\alpha_1}.
\]
}

\textit{
For bounded costs, Condition~\eqref{eq: bound with other discrepancy} holds with $(L_\rho,\gamma)=(2M,1)$ for total variation, $(2\sqrt{2}M,1)$ for the Hellinger distance, and $(\sqrt{2}M,1/2)$ for the Kullback--Leibler divergence. If $c(z;x,\cdot)$ is uniformly $L_y$-Lipschitz, it also holds with $(L_\rho,\gamma)=(L_y,1)$ for the $p$-Wasserstein distance with $p\ge1$. }

\textit{
Distributional learning rates~\eqref{eq: Distributional learning rates} under these discrepancies are available for several of the conditional generative models discussed in Section~\ref{subsec:generative_models_gto}. In particular, explicit total-variation and Wasserstein rates have been established for CDMs \citep{fu24,tang2024conditional}. For adversarial generators, \citet{liu2021wasserstein} establish nonasymptotic distributional guarantees for a Wasserstein-based conditional generator, which can be viewed as a conditional WGAN-type approach. For latent-variable models, \citet{kumar2025likelihood} derive Hellinger and Wasserstein convergence rates for a likelihood-based conditional deep generative model and implement the estimator using a conditional VAE architecture. Closely related to flow-matching models, \citet{huang2023conditional} establish nonasymptotic guarantees for conditional stochastic interpolation, which constructs conditional samplers through learned probability-flow ODEs or stochastic differential equations. Comparable distributional learning rates are, to our knowledge, less developed for general conditional normalizing flows and Transformer-based autoregressive generators. Importantly, these guarantees apply to the specific estimators and regularity conditions studied in the respective works, rather than uniformly to the corresponding generative-model architecture classes.
}
\end{remark}

\subsubsection{Regret Bound for GtO.}

We now combine the SAA regret bound (Lemma~\ref{lem:SAA-error0}) and the model regret bound (Lemma~\ref{lem:model-error0}) to obtain an overall bound on the regret of the GtO procedure.
Taking expectations over the randomness in both the training data and the online covariate on both sides of Equation~\eqref{eq:error_decomposition0}, and applying Lemmas~\ref{lem:SAA-error0} and~\ref{lem:model-error0}, yields the desired regret bound.

\begin{theorem}\label{thm:regret-GtO0}
	Suppose Assumptions~\ref{ass:SAA-struct0} and~\ref{ass: TV-rate} hold.
	Then there exists a constant $C<\infty$, depending only on $(G,M,D,d_z,C_{\mathrm{M}},\alpha_0,\alpha_1)$, such that for all integers $n,m\ge 2$,
	\begin{equation}\label{eq:regret-bound-GtO}
		\E\Big[\E\Big[h(\hat{z}_{n,m}^\ast(X);X) - h(z^\ast(X);X) \,\big|\, \mathcal{D}_n \Big]\Big] \;\le\; C\left( n^{-\alpha_0}(\log n)^{\alpha_1} + m^{-1/2}
		\right),
	\end{equation}
	where, conditional on $\mathcal D_n$, the inner expectation is with respect to the online covariate $X$ and the conditional simulation draws, and the outer expectation is with respect to $\mathcal{D}_n$.
\end{theorem}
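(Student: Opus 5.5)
The proof combines the pointwise decomposition \eqref{eq:error_decomposition0} with Lemmas~\ref{lem:SAA-error0} and~\ref{lem:model-error0}, and integrates in the order simulation draws, then covariate, then training data.

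\textbf{Step 1 (pointwise bound).} Fix $\mathcal D_n$ and a covariate value $x_0$. Since $\bar z_n^\ast(x_0)$ minimizes $\bar h(\cdot;x_0)$ and $z^\ast(x_0)$ minimizes $h(\cdot;x_0)$, the third bracket satisfies
\[
\bar h(\bar z_n^\ast(x_0);x_0)-h(z^\ast(x_0);x_0)\le \bar h(z^\ast(x_0);x_0)-h(z^\ast(x_0);x_0)\le \sup_{z\in\mathcal Z}|h(z;x_0)-\bar h(z;x_0)|.
\]
The first bracket is bounded by the same supremum, which yields \eqref{eq:error_decomposition0}. The existence of $\bar z_n^\ast(x_0)$ follows because $\bar h(\cdot;x_0)$ is convex and $G$-Lipschitz on the compact set $\mathcal Z$. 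Assumption~\ref{ass:SAA-struct0} gives this Lipschitz property after integrating against any distribution $\hat F_{Y|X}(\cdot|x_0)$.

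\textbf{Step 2 (integrate over the draws).} Take the conditional expectation over $\hat Y_1,\dots,\hat Y_m$ given $\mathcal D_n$ and $X=x_0$. The model-error term is deterministic given $(\mathcal D_n,x_0)$, so it passes through unchanged. Lemma~\ref{lem:SAA-error0} bounds the SAA term by $C_{\mathrm{SAA}}m^{-1/2}$ uniformly in $x_0$ and in the realized simulator, with a constant depending only on $(G,M,D,d_z)$.

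\textbf{Step 3 (integrate over $X$ and $\mathcal D_n$).} Integrate over $X$ and then over $\mathcal D_n$. The SAA term stays $C_{\mathrm{SAA}}m^{-1/2}$ because the bound is uniform. Lemma~\ref{lem:model-error0} bounds the doubled model-error term by $4MC_{\mathrm M}n^{-\alpha_0}(\log n)^{\alpha_1}$. Setting $C=\max\{C_{\mathrm{SAA}},4MC_{\mathrm M}\}$ then gives \eqref{eq:regret-bound-GtO}.

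\textbf{Main obstacle: measurability and independence.} The routing in Step 2 is valid only if two conditions hold.
\begin{itemize}
\item The online covariate $X$ must be independent of $\mathcal D_n$, so that conditioning on $X=x_0$ does not alter the law of $\mathcal D_n$. The problem setup states this.
\item Given $(\mathcal D_n,X)$, the draws must be i.i.d.\ from $\hat F_{Y|X}(\cdot|X)$.
\end{itemize}
Joint measurability of $(x_0,\mathcal D_n)\mapsto \sup_z|h-\bar h|$ and of the selected minimizer are also needed. For the supremum, continuity in $z$ lets us reduce it to a countable dense subset of $\mathcal Z$. For the minimizer, we can invoke a measurable selection theorem or interpret the expectations as outer expectations.

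Once these points are settled, the tower property applies and the argument is purely additive. The constant depends only on the stated quantities.
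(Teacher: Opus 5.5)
Your proposal is correct and follows essentially the same route as the paper: you integrate the pointwise decomposition \eqref{eq:error_decomposition0} over the draws, then over $X$, then over $\mathcal D_n$, and apply Lemma~\ref{lem:SAA-error0} and Lemma~\ref{lem:model-error0}. Two details differ only cosmetically. Your $C=\max\{C_{\mathrm{SAA}},4MC_{\mathrm M}\}$ is equally valid in place of the paper's sum $C_{\mathrm{SAA}}+4MC_{\mathrm M}$, and your independence and measurability remarks are extra care that the paper leaves implicit.
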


Theorem~\ref{thm:regret-GtO0} provides a clean decomposition of the average regret of the GtO procedure into: an \emph{offline learning error} term, $n^{-\alpha_0}(\log n)^{\alpha_1}$, which reflects how accurately the conditional distribution $F_{Y|X}(\cdot|x)$ can be learned from $n$ samples in total variation, and an \emph{online SAA error} term, $m^{-1/2}$, which reflects the Monte Carlo error from using $m$ simulated samples at decision time.
In the situation where $\alpha_0=1/2$ (Remark~\ref{rem: TV bound}), this yields the canonical rate
\[
\E\Big[\E\Big[h(\hat{z}_{n,m}^\ast(X);X) - h(z^\ast(X);X) \,\big|\, \mathcal{D}_n \Big]\Big] \,=\, \mathcal{O}\Big(n^{-1/2}(\log n)^{\alpha_1} + m^{-1/2} \Big) \,=\, \widetilde{\mathcal{O}}\Big(n^{-1/2} + m^{-1/2}\Big).
\]
Theorem~\ref{thm:regret-GtO0} makes explicit the trade-off between investing in more training data (larger $n$) and using more simulation at decision time (larger $m$). In particular, if one balances the two error terms by choosing $m$ as a suitable function of $n$, the overall regret can be minimized. We arrange the proof in Appendix~\ref{app: proof of theorem 2}.

\smallskip
\begin{remark}\label{remark: strongly convex}
\textit{
Furthermore, if the cost function $c(z;x,y)$ is \textit{strongly convex} on $\mathcal{Z}$, then we can improve the result~\eqref{eq:regret-bound-GtO} to 
	\begin{equation}\label{eq:decision-error-bound0}
		\E\Big[\E\Big[\|\hat{z}_{n,m}^\ast(X) - z^\ast(X)\| \,\big|\, \mathcal{D}_n \Big]\Big]
		\;\le\;
		C\left(
		n^{-\alpha_0}(\log n)^{\alpha_1}
		+
		m^{-1/2}
		\right), 
	\end{equation}
for some constant $C$ depending on the setting of strongly convex cost function; See Appendix~\ref{app:stronglyconvex} for the detailed discussion. 
}
\end{remark}

\section{Choosing Between Predictive and Generative Approaches}\label{sec:PorG}

Our analysis yields three managerial insights for choosing between PtO and GtO, concerning the statistical difficulty of the learning target, the optimization structure retained after learning, and the range of decisions the learned model will support. These insights explain why neither approach is uniformly preferable and provide practical guidance on where an organization can invest to improve its decision process.

\subsection{Learning Less Is Not Always Easier}

PtO learns the conditional expected objective $h(z;x)$, whereas GtO learns the conditional uncertainty underlying that objective. Although the former is a narrower target, its statistical difficulty depends on its smoothness. To illustrate this distinction, consider the specific guarantees established earlier for KRR-based PtO and CDM-based GtO. Under suitable regularity conditions, suppressing logarithmic factors and the finite-scenario approximation error gives
\begin{eqnarray*}
    \mathbb{E}\left[ \mathcal{R}_{\mathrm{PtO}} \right]\lesssim n^{-\alpha_P} \quad &\text{where}& \quad \alpha_P=\beta_h/(2\beta_h+d_x),\\
    \mathbb{E}\left[ \mathcal{R}_{\mathrm{GtO}} \right]\lesssim n^{-\alpha_G} \quad &\text{where}& \quad \alpha_G=\beta_F/(2\beta_F+d_x+d_y).    
\end{eqnarray*}
Here, $\beta_h=s_h-d_x/2>0$ represents the expected-objective smoothness parameter in the KRR bound, and $\beta_F$ is the conditional-density smoothness parameter in the CDM bound. 

These expressions reveal a tradeoff between dimension and smoothness. GtO bears the additional dimension $d_y$, but its learning target can be smoother than the expected objective. PtO avoids that additional dimension, while its convergence rate also depends on the regularity of the expected objective. Thus, identifying the smaller learning target is not enough; one must also identify which target is easier to learn.

\subsection{Statistical Flexibility Has an Optimization Consequence}

Convexity is an important source of tractability in optimization, but whether it survives learning depends on how the estimated objective is constructed. Both PtO and GtO use weighted combinations of scenario costs:
\[
\hat h_P(z;x)=\sum_{i=1}^{n}\omega_{n,i}(x)\, c(z;X_i,Y_i),
\qquad
\hat h_G(z;x)=\frac{1}{m}\sum_{j=1}^{m}c(z;x_0,\tilde Y_j).
\]
When $c(\cdot;x,y)$ is convex for every $(x,y)$, nonnegative weights preserve convexity. GtO uses equal positive weights and therefore inherits this property. PtO, however, faces the dilemma established by our earlier lower-bound result: within the class of nonnegative, covariate-based weighting rules studied there, worst-case expected regret cannot decay faster than $n^{-4/(d_x+4)}$, even for arbitrarily smooth objectives in the constructed problem class. Thus, nonnegative weighting preserves convexity but limits the statistical gains available from additional smoothness. More flexible regression can exploit that smoothness, but may introduce negative weights and lose the general convexity guarantee.

Negative weights do not necessarily make the fitted objective nonconvex, nor does nonconvexity necessarily make it difficult to optimize. Their computational effect depends on the problem's structure. The numerical study illustrates this distinction: the signed weighted newsvendor objective in Section~\ref{sec:contextual newsvendor} remains efficiently solvable, whereas signed weights can invalidate the usual linear programming reformulation of the product placement problem in Section~\ref{sec:product-placement}. Thus, the relevant concern is whether the fitted objective retains the structure needed by the solution method.

Convexity alone also does not determine computational cost. For GtO, the bound $\mathbb{E}[\mathcal{R}_{\mathrm{GtO}}]\lesssim\delta_n+m^{-1/2}$ highlights the role of scenario count: increasing $m$ reduces scenario approximation error but increases the scenario generation cost and enlarges the optimization problem, without eliminating model error $\delta_n$. PtO's computational cost likewise depends on its fitted representation and the available solver. The two approaches should therefore be compared by the decision quality they deliver within the available computational budget.

\subsection{The Appropriate Learning Target Depends on the Scope of Reuse}

The preceding discussion focuses on a single decision problem. An organization may instead face a collection of tasks that share the same conditional uncertainty:
\[
\min_{z\in\mathcal Z_k}h_k(z;x), \qquad h_k(z;x)=\mathbb{E}[c_k(z;X,Y)\mid X=x], \qquad k=1,\ldots,K.
\]
The costs and constraints may differ across tasks, even though the relevant uncertain outcomes are the same.

This distinction changes the value of what is learned. A model tailored to $h_k$ can focus its statistical capacity on one task. A conditional generator can support multiple tasks through $\hat h_{G,k}(z;x)=m^{-1}\sum_{j=1}^{m}c_k(z;x,\tilde Y_j)$. Once generated, the same scenarios can be evaluated under different objectives and constraints. Learning a richer object can therefore have value beyond its performance on any single task.

This potential benefit is particularly relevant when data analysis and operations reside in different departments. A shared generator provides an interface through which the data team supplies conditional scenarios and operational teams formulate their own decision problems. Changes in an objective need not require redesigning the learning target, provided the modeled uncertainty remains appropriate. This can reduce the coordination required to adapt analytics to new uses.

Reuse is nevertheless not exclusive to GtO. PtO methods can share weights, features, or estimated components across tasks. Nor does a generator remain valid automatically when a new task changes the underlying uncertainty --- for example, when a pricing decision changes demand. The practical question is how much additional learning, validation, and coordination each new use requires.

Accordingly, the comparison should reflect the intended scope of deployment. For a stable, specialized task, exploiting objective-specific structure may be especially valuable. For a collection of evolving tasks with common uncertainty, a reusable generative model may justify its greater statistical and computational demands. The single-task regret bounds clarify those demands; they do not quantify the organizational benefits of reuse.

\subsection{Practical Rules of Thumb}

Taken together, these insights suggest choosing between PtO and GtO by considering what must be learned, how the resulting decision problem will be solved, and how broadly the model will be used. PtO's specialization is valuable when it reduces the learning burden while retaining a tractable decision problem. GtO's broader representation is valuable when regularity in the uncertainty, preservation of optimization structure, or reuse across tasks compensates for its additional learning and computational demands. Table~\ref{tab:pto-gto-guidance} summarizes these considerations as practical rules of thumb. When the factors point in different directions, the final choice should be guided by decision quality and computational cost in the intended application.

\begin{table}[htbp]
\centering
\caption{Practical guidance for choosing between PtO and GtO.}
\label{tab:pto-gto-guidance}
\small
\begin{tabular}{p{0.29\textwidth}p{0.25\textwidth}p{0.34\textwidth}}
\hline
Factor & Low & High \\
\hline
Importance of convexity & No clear preference & GtO or convexity-preserving PtO \\
Objective smoothness & GtO & PtO \\
Distribution smoothness & PtO & GtO \\
Covariate dimension & No clear preference & Smoothness-dependent \\
Outcome dimension & No clear preference & PtO \\
Task diversity & PtO & GtO \\
Importance of online speed & No clear preference & PtO \\
\hline
\end{tabular}
\par\smallskip
\begin{minipage}{0.95\textwidth}
\footnotesize
\textit{Note.} Preferences indicate tendencies, all else equal, rather than guaranteed performance rankings. Objective smoothness refers to the components that must be learned; task diversity assumes shared underlying uncertainty. Smoothness and dimension comparisons are subject to the assumptions of the corresponding theoretical results. Online speed includes both conditional scenario generation and optimization.
\end{minipage}
\end{table}

\section{Numerical Study}
\label{sec:numerical experiments}

We compare PtO and GtO in three settings. A contextual newsvendor experiment examines the roles of training sample size, covariate dimension, and the structure of the learning target. A product placement experiment examines the computational consequences of signed prediction weights and the effect of outcome dimension. A fresh-retail inventory study evaluates the methods with real data at different levels of pooling. Together, the experiments compare decision quality and the computation required to obtain those decisions.

\subsection{General Setup}

The PtO benchmarks are $k$-nearest neighbors (KNN), kernel smoothing (KS), kernel ridge regression (KRR), and clipped KRR (KRR$^+$), which replaces negative weights by zero and renormalizes the remaining weights. For $n\geq10{,}000$, KRR and clipped KRR use a Nystr\"om approximation \citep{williams2001nystrom}; the table headings retain the KRR designation. In the synthetic experiments, GtO uses a conditional diffusion model (GtO-CDM) \citep{fu24}. Unconditional SAA and an oracle based on the true data-generating process provide benchmarks. Hyperparameters are selected by validation decision cost and fixed before test~evaluation. 

Synthetic decision performance is measured by the relative cost in Section~\ref{sec:problem}: $\mathrm{RC}=(C_{\mathrm{SAA}}-C)/(C_{\mathrm{SAA}}-C_{\mathrm{oracle}})$. A larger value is better, with zero corresponding to SAA and one to the oracle. The retail study instead reports cost relative to SAA because the true conditional demand distribution is unavailable. Online time excludes model training and includes query-specific computations: prediction weights and optimization for PtO, and conditional scenario generation and optimization for GtO.

All experiments are implemented in Python 3.12.4 and run on the University of Minnesota Supercomputing Institute cluster with NVIDIA A40 GPUs and on Nanjing University servers with NVIDIA H100 GPUs. Gurobi 13.0.1 solves the product placement linear programs (LPs) and the convex subproblems in the difference-of-convex algorithm (DCA). Online time starts when a new context is observed and excludes model training. It includes computing prediction weights and solving the decision problem for PtO, and generating conditional scenarios and solving the decision problem for GtO.

\subsection{Contextual Newsvendor Problem}
\label{sec:contextual newsvendor}

We begin with a contextual newsvendor problem, in which a retailer uses observed covariates to determine an order quantity before demand is realized. A distinctive feature of our setting is a shortage penalty that changes discontinuously with the covariates, even though the conditional demand model varies smoothly. This allows us to examine how the comparison between PtO and GtO depends on whether PtO directly learns the objective or exploits its known structure. Details of the demand and cost construction are provided in Appendix~\ref{app:num-newsvendor}.

\subsubsection{Experimental Design.}
\label{sububsec:NP Problem setup}
After observing covariates $X=x_0$, the decision maker chooses an order quantity to solve
\begin{equation}
\min_{z\geq0}\;\mathbb{E}\left[c_h(z-Y)^++b(X)(Y-z)^+\mid X=x_0\right],
\label{eq:NP problem}
\end{equation}
where $c_h=90$. Conditional demand follows a three-component Gaussian mixture censored at zero. Covariates affect the component means, probabilities, and variances; average demand and signal-to-noise ratios are calibrated to remain comparable across dimensions. The known shortage penalty $b(x)$ takes 11 levels between 30 and 120 and is a step function of a covariate-based priority score. 

We vary $n\in\{800,2000,5000,10000,20000,30000\}$ and $d_x\in\{3,6,10,20,40,60,100\}$, holding $d_y=1$. Each setting uses 100 test contexts and 30 independent replications. To distinguish direct objective fitting from exploitation of known structure, we implement PtO in two ways:
\[
\begin{aligned}
\hat h_n^{\mathrm{direct}}(z;x_0)
&=\sum_{i=1}^n\omega_{n,i}(x_0)\left[c_h(z-y_i)^++b(x_i)(y_i-z)^+\right],\\
\hat h_n^{\mathrm{struct}}(z;x_0)
&=\sum_{i=1}^n\omega_{n,i}(x_0)\left[c_h(z-y_i)^++b(x_0)(y_i-z)^+\right].
\end{aligned}
\]
The direct formulation estimates the conditional expected objective from historical costs, including the discontinuities in $b(x)$. The structure-aware formulation estimates expected overage and shortage separately and applies the known penalty at the current context. The two learning targets of this formulation, $\E[(z-Y)^+| X=x]$ and $\E[(Y-z)^+| X=x]$, are both smooth in $x$ under the demand model, whereas the overall objective learned by the direct formulation is discontinuous. This comparison therefore examines the value of learning a smoother objective in PtO by exploiting known cost structure while holding the underlying decision problem fixed.

\subsubsection{Decision Quality and Computation.}

Figure~\ref{fig:newsvendor-regime-map} compares GtO with the best-performing PtO benchmark in each setting. Each cell reports the difference in relative cost, GtO minus the best PtO; positive values favor GtO. The best PtO score summarizes the benchmark results rather than specifying an additional method for deployment. Tables~\ref{tab:newsvendor-relative-cost-direct} and~\ref{tab:newsvendor-relative-cost} in Appendix~\ref{app:num-newsvendor-results} report the individual methods' scores for the direct and structure-aware formulations, respectively. The GtO scores are identical across the two~formulations.

\begin{figure}[htbp]
\FIGURE
{
\includegraphics[width=\textwidth]{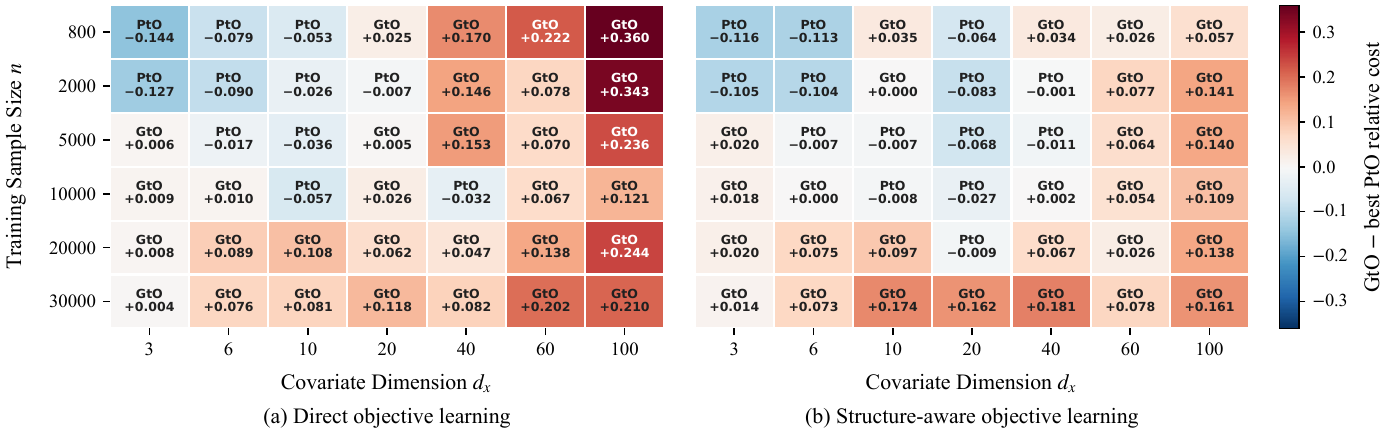}
}
{Relative performance of PtO and GtO across $n$ and $d_x$ under the direct and structure-aware PtO formulations
\label{fig:newsvendor-regime-map}}
{}
\end{figure}

Three patterns emerge. First, PtO is competitive with limited training data, while GtO's relative performance improves as $n$ increases. At $n=800$, the best PtO benchmark outperforms GtO in three of the seven covariate dimensions under each formulation; at $n=2000$, it does so in four of seven. At $n=20{,}000$, GtO performs better in all seven dimensions under direct learning and six of seven under structure-aware learning. At $n=30{,}000$, GtO performs better throughout. Thus, the larger-sample results favor GtO, although the ranking at smaller sample sizes depends on the context dimension and formulation.

Second, increasing the covariate dimension generally reduces decision performance, with a larger decline for PtO in several settings. At $n=30{,}000$, increasing $d_x$ from 3 to 100 reduces the best direct PtO score from 0.9899 to 0.6564 and the best structure-aware PtO score from 0.9804 to 0.7060, compared with a decline from 0.9943 to 0.8665 for GtO. The pattern is not monotonic in every setting, but these results illustrate the difficulty of high-dimensional objective fitting even when known cost structure is incorporated. They do not imply that high covariate dimension intrinsically favors GtO.

Third, incorporating objective structure changes the comparison, but does not uniformly improve finite-sample performance. At $n=5000$ and $d_x=20$, KRR's relative cost increases from 0.7670 under direct learning to 0.8404 under structure-aware learning, reversing its ranking against GtO's 0.7723. A similar reversal occurs at $d_x=40$, where KRR increases from 0.6402 to 0.8048, compared with GtO's 0.7935. Across all seven dimensions at this sample size, PtO leads in two settings under direct learning and four under structure-aware learning. However, at $n=30{,}000$ and $d_x=10$, KRR's score falls from 0.8993 to 0.8062. Exploiting the smoothness of the learning targets therefore provides an opportunity to improve PtO, rather than a guarantee of better decisions in every setting.

The gains in decision quality can require substantially more online computation. Table~\ref{tab:newsvendor-online-summary} reports the structure-aware results at $n=10{,}000$: PtO takes approximately 0.013 to 0.042 seconds per decision, whereas GtO takes 0.76 to 2.80 seconds. The empirical objectives are one-dimensional and piecewise linear, allowing breakpoint-based optimization even with signed weights when the objective is bounded below. Conditional scenario generation adds computation for GtO. The complete timing grids in Tables~\ref{tab:runtime-sec-direct} and~\ref{tab:runtime-sec-structure-aware} show that this latency difference persists under both formulations. Thus, the more favorable GtO decision performance in many settings comes with a higher online computational cost.

\begin{table}[htbp]
\TABLE
{Online time for the structure-aware newsvendor formulation at $n=10{,}000$ (seconds).
\label{tab:newsvendor-online-summary}}
{
\small
\begin{tabular}{@{}crrrrrr@{}}
\toprule
$d_x$ & SAA & PtO-KNN & PtO-KS & PtO-KRR & PtO-KRR$^+$ & GtO-CDM \\
\midrule
3   & 0.0214 & 0.0392 & 0.0331 & 0.0338 & 0.0251 & 0.7556 \\
6   & 0.0136 & 0.0268 & 0.0208 & 0.0294 & 0.0199 & 1.0644 \\
10  & 0.0221 & 0.0350 & 0.0373 & 0.0405 & 0.0285 & 1.3747 \\
20  & 0.0139 & 0.0218 & 0.0206 & 0.0146 & 0.0150 & 1.6678 \\
40  & 0.0224 & 0.0250 & 0.0420 & 0.0207 & 0.0202 & 2.6496 \\
60  & 0.0134 & 0.0211 & 0.0235 & 0.0214 & 0.0125 & 2.7344 \\
100 & 0.0216 & 0.0301 & 0.0298 & 0.0210 & 0.0314 & 2.7950 \\
\bottomrule
\end{tabular}
}
{\textit{Note.} Entries are taken from Table~\ref{tab:runtime-sec-structure-aware}. Online time includes conditional scenario generation for GtO.}
\end{table}

\subsection{Product Placement Problem}
\label{sec:product-placement} 

We next consider a product placement problem, in which a retailer allocates inventory across stores before demand is realized and subsequently transfers stock among stores to meet demand \citep{bertsimas22}. This setting introduces multivariate uncertainty, with the initial inventory decision having the same dimension as the demand vector. More importantly, this setting allows us to examine the computational consequences of signed PtO weights as the decision dimension increases. In the newsvendor problem, the decision is one-dimensional, so even when KRR produces negative weights, the piecewise-linear objective can be globally minimized by evaluating its breakpoints. In contrast, the product placement problem also has a piecewise-linear objective, but with a higher decision dimension, the number of candidate vertices can grow rapidly. Enumerating these candidates therefore becomes computationally impractical as $d_z$ increases. We therefore need to exploit the structure of the optimization problem to using specific optimization algorithm to solve it.

The product placement problem has a two-stage linear structure. With nonnegative scenario weights, the second-stage optimization problems can be incorporated into the first-stage optimization, yielding a single linear program (LP) that can be efficiently solved even in higher dimensions. However, signed KRR weights invalidate this reformulation and may result in a nonconvex optimization problem. We therefore use a difference-of-convex algorithm (DCA) with linear programming subproblems; the derivation is provided in Appendix~\ref{app:num-placement-solver}. Since DCA does not generally guarantee global optimality, the reported KRR results may also be affected by optimization error. Unlike the one-dimensional newsvendor problem, the computational cost of handling signed weights can become substantial as the decision dimension $d_z$ increases. Details of the network, demand model, and optimization formulations are provided in Appendix~\ref{app:num-placement}.

\subsubsection{Experimental Design.}

Let $z$ denote initial inventory at the $d_y$ stores and $y$ the realized demand vector. The recourse cost is
\[
Q(z,y)=\min_{f,p}\left\{g^\top f+r^\top p:\ z-Af+p\geq y,\ f\geq0,\ p\geq0\right\},
\]
where $f$ denotes transfers, $p$ unmet demand, $A$ the network incidence matrix, and $g$ and $r$ the transportation and shortage cost vectors. The retailer minimizes $\alpha^\top z+\mathbb{E}[Q(z,Y)\mid X=x_0]$ over $0\leq z\leq U$, where $\alpha$ denotes placement costs and $U$ inventory upper bounds. Placement, transportation, and shortage costs are 2, 1, and 10 per unit, respectively. We extend the newsvendor demand model to correlated multivariate demand, fix $n=10{,}000$ and $d_x=10$, and vary $d_y\in\{1,2,5,10,15\}$.

\subsubsection{Decision Quality and Solution Time.}
\label{subsubsec:Effect of Negative Weights in PtO}

Table~\ref{tab:product-placement-results} reports decision performance and online time together. We first compare the PtO methods. KRR consistently improves on clipped KRR: its relative cost lies between approximately 0.65 and 0.70, compared with 0.42--0.65 for clipped KRR. The improvement comes with longer solution times in every setting. At $d_y=15$, KRR achieves a relative cost of 0.6602 in 272.54 seconds, whereas clipped KRR achieves 0.4227 in 22.50 seconds. KNN and KS are faster still, but provide lower decision quality than raw KRR. These comparisons among PtO methods expose a clear tradeoff between decision quality and solution time.

\begin{table}[htbp]
\TABLE
{Decision performance and online time for product placement.
\label{tab:product-placement-results}
\label{tab:product-placement-relative-cost}
\label{tab:product-placement-running-time}}
{
\small
\begin{minipage}{\textwidth}
\centering
\textit{Panel A: Relative cost (higher is better).}\par\smallskip
\begin{tabular}{@{}crrrrrr@{}}
\toprule
$d_y$ & SAA & PtO-KNN & PtO-KS & PtO-KRR & PtO-KRR$^+$ & GtO-CDM \\
\midrule
1  & 0.0000 & 0.5913 & 0.1154 & 0.6618 & 0.4476 & 0.9268 \\
2  & 0.0000 & 0.5802 & 0.2294 & 0.6807  & 0.6129 & 0.9104 \\
5  & 0.0000 & 0.5750 & 0.1979 & 0.6983 & 0.6454 & 0.9243\\
10 & 0.0000 & 0.5632 & 0.1432 & 0.6454 & 0.5623 & 0.9263\\
15 & 0.0000 & 0.5391 & 0.1537 & 0.6602 & 0.4227 & 0.9182\\
\bottomrule
\end{tabular}
\par\medskip
\textit{Panel B: Online time in seconds (lower is better).}\par\smallskip
\begin{tabular}{@{}crrrrrr@{}}
\toprule
$d_y$ & SAA & PtO-KNN & PtO-KS &  PtO-KRR & PtO-KRR$^+$ & GtO-CDM \\
\midrule
1  & 0.0048 & 0.0914 & 0.0915 & 1.2329  & 0.1333 & 2.9496 \\
2  & 0.1872 & 0.4537 & 0.3212 & 4.8291  & 0.4981 & 3.8220 \\
5  & 0.2524 & 0.9574 & 1.2131 & 29.9059 & 2.6315 & 14.6501 \\
10 & 1.1088 & 2.1841 & 2.9841 & 57.9449  & 9.0928 & 75.1714 \\
15 & 3.8801 & 3.9280 & 3.9460 & 272.5360  & 22.5041 & 180.9274 \\
\bottomrule
\end{tabular}
\end{minipage}
}
{\textit{Note.} KRR uses DCA; the other methods use a LP formulation. Online time excludes training and includes query-specific computations.}
\end{table}

Now we compare GtO with PtO algorithms in Table~\ref{tab:product-placement-results}, in particular, we look at GtO-CDM. It achieves the highest relative cost in every tested setting, approximately 0.91--0.93, but is not uniformly the fastest approach. Its online times are comparable to raw PtO-KRR in several settings, and their ordering changes with dimension. At $d_y=1$, PtO-KRR takes 1.23 seconds compared with 2.95 seconds for GtO-CDM; at $d_y=15$, GtO-CDM takes 180.93 seconds compared with PtO-KRR's 272.54 seconds. Nonnegative-weight PtO remains faster than either at the latter dimension. Even with the increased computational cost associated with signed weights, PtO-KRR achieves substantially lower decision quality than GtO-CDM despite their comparable online times in several settings. This comparison further highlights the computational consequences of losing convexity in PtO. Further specialization of the PtO-KRR solver might improve its computational efficiency, but would require additional algorithmic effort.

As shown in Table~\ref{tab:product-placement-results}, increasing $d_y$ substantially raises online computation, whereas relative decision performance varies comparatively little over the tested range. In particular, GtO's relative cost stays above 0.91 as $d_y$ increases from 1 to 15; the experiment does not exhibit a crossover favoring PtO. Similar stability of PtO performance with respect to outcome dimension has been reported by \citet{bertsimas22}. Here, increasing $d_y$ also enlarges the placement and recourse problems, so the runtime changes capture more than the cost of learning higher-dimensional uncertainty. Over the tested range, outcome dimension has a more pronounced effect on computation than on the relative decision-performance ranking.

\subsection{Real Case Study: Fresh Retail Inventory}
\label{sec:real case}

We compare PtO and GtO in a fresh-retail inventory setting using the FreshRetailNet-LT dataset released by Dingdong Fresh \citep{wang2025freshretailnet}.\footnote{\url{https://huggingface.co/datasets/Dingdong-Inc/FreshRetailNet-LT}} The application combines several features common in operational decision making: demand varies across locations and operating conditions, stockouts censor observed sales, and contextual information includes both continuous and categorical variables. It therefore provides an opportunity to examine the usefulness of the two approaches beyond the setting of our theoretical analysis and to assess whether similar insights emerge with real operational data.

\subsubsection{Data and Experimental Design.}

FreshRetailNet-LT contains daily sales records for 576 perishable products across 1,057 stores in 18 major Chinese cities. We focus on product 156 because of its broad store coverage and long sales history, and select the 50 stores with the longest common observation period. These stores span 11 cities and are observed from June 5, 2023 to July 20, 2025, yielding a balanced panel of 777 days and 38,850 store-day observations.

Stockouts occur in approximately 20\% of the observations and may censor sales. We apply the TimesNet-based demand recovery procedure described by \citet{wang2025freshretailnet} to the hourly sales and stock-status data, then aggregate recovered demand to daily totals for training and evaluation. Since procurement costs and actual selling prices are unavailable, we normalize the unit overage cost to one and specify the shortage penalty as $b(x)=4((\operatorname{discount}(x)-0.7)/0.3)^+$, where $\operatorname{discount}(x)$ denotes the ratio of the discounted selling price to the regular price, which is included as a covariate in the dataset. The resulting inventory problem is
\[
\min_{z\geq0}\;
\mathbb{E}\left[(z-Y)^++b(X)(Y-z)^+\mid X=x_0\right].
\]
This penalty makes shortages more costly when the product sells closer to its regular price. The evaluation therefore measures decision performance using recovered demand and a specified cost function.

We construct 37 exogenous features from store location, calendar effects, promotions, discounts, and weather. City, weekday, and month use one-hot encoding; holidays and promotional activity use binary indicators; the remaining features are continuous. Following \citet{ban2019big}, we also incorporate the preceding 14 days of demand, summarized by their sample mean and the 13 differences between consecutive order statistics \citep{liyanage2005practical}. The resulting covariate vector has 51 components: 32 binary or one-hot indicators and 19 continuous variables.

All settings use the same chronological split. After the initial 14-day history window, training runs from June 19, 2023 to March 22, 2025, validation covers March 23--April 13, 2025, and testing covers April 14--July 20, 2025. Each store contributes 643 training days, 22 validation days, and 98 test days.

We consider three store settings to examine how the comparison changes as learning expands from a single location to a broader network. This reflects a practical modeling choice for retailers: whether to build a model using one store's history or pool observations across stores. Pooling provides more training data but also introduces greater heterogeneity. The one-store setting uses the highest-sales store among the 50 selected stores; the 11-store setting uses the highest-sales store in each city; and the 50-store setting uses all selected stores. Their respective training samples contain 643, 7,073, and 32,150 observations. These settings allow us to examine the joint effects of data availability and pooling scope, rather than isolate a sample-size effect.

This application extends beyond the continuous-covariate setting of our theoretical analysis because the contextual features include both continuous and categorical variables. We therefore use learning models that accommodate these mixed feature types. For GtO, we use TabCSDI \citep{zheng2022diffusion}, a diffusion model for mixed-type tabular data, to generate conditional demand scenarios. For PtO, we add random forests (RF) to the existing surface-fitting benchmarks. For KRR, details of incorporating one-hot-encoded categorical variables into the kernel are provided in Appendix~\ref{app:krr-categorical}. The subsequent optimization steps remain unchanged. The purpose is to examine whether the two approaches remain useful with learning models suited to the application and whether the tradeoffs identified earlier persist beyond the assumptions of the theory. Hyperparameters are selected using validation decision cost, as in the preceding~experiments.

Because the true conditional demand distribution is unavailable, we evaluate each method using its average realized cost on the test set, computed from recovered demand. For method $M$ and store setting $s$, we report normalized cost $\widetilde C_{M,s}=\overline C_{M,s}/\overline C_{\mathrm{SAA},s}$. SAA consequently has normalized cost one, and lower values indicate better performance.

\subsubsection{Decision Quality and Online Computation.}

Panel A of Table~\ref{tab:real-case-results} shows that the relative performance of PtO and GtO changes across the pooling settings. With one store, PtO-RF achieves the lowest normalized cost, 0.4134, followed by PtO-KRR at 0.4509. GtO improves on SAA but performs worse than every PtO benchmark, with a normalized cost of 0.8324. With 11 stores, GtO achieves 0.4752 compared with 0.5106 for RF, the best PtO benchmark, reducing cost by approximately 6.9\%. With 50 stores, GtO achieves 0.5006 compared with 0.5512 for PtO-KRR, a reduction of approximately 9.2\%.

\begin{table}[htbp]
\TABLE
{Decision performance and online time for fresh-retail inventory.
\label{tab:real-case-results}
\label{tab:realcase-performance}
\label{tab: Running time for real case}}
{
\small
\begin{minipage}{\textwidth}
\centering
\textit{Panel A: Normalized test cost (lower is better).}\par\smallskip
\begin{tabular}{@{}crrrrrrr@{}}
\toprule
Stores & SAA & PtO-KNN & PtO-KS & PtO-KRR & PtO-KRR$^+$ & PtO-RF & GtO-TabCSDI \\
\midrule
1  & 1.000000 & 0.733880 & 0.703226 & 0.450895 & 0.652547 & \textbf{0.413442} & 0.832392 \\
11 & 1.000000 & 0.726613 & 0.930284 & 0.522770 & 0.667793 & 0.510626 & \textbf{0.475210} \\
50 & 1.000000 & 0.688577 & 0.890121 & 0.551178 & 0.716294 & 0.563453 & \textbf{0.500594} \\
\bottomrule
\end{tabular}
\par\medskip
\textit{Panel B: Online time in seconds (lower is better).}\par\smallskip
\begin{tabular}{@{}crrrrrrr@{}}
\toprule
Stores & SAA & PtO-KNN & PtO-KS & PtO-KRR & PtO-KRR$^+$ & PtO-RF & GtO-TabCSDI \\
\midrule
1  & 0.0002 & 0.0028 & 0.0008 & 0.0115 & 0.0072 & 0.0121 & 1.0461 \\
11 & 0.0011 & 0.0048 & 0.0013 & 0.0149 & 0.0155 & 0.0962 & 7.1255 \\
50 & 0.0203 & 0.0189 & 0.0397 & 0.0297 & 0.0284 & 0.1063 & 18.0873 \\
\bottomrule
\end{tabular}
\end{minipage}
}
{\textit{Note.} Boldface identifies the lowest normalized cost in each store setting. Online times exclude model training, include query-specific computation, and are averaged over all test points.}
\end{table}

The shift toward GtO with broader pooling resembles the sample-size pattern in the synthetic experiments. However, because pooling changes both the training sample and the store population, the performance differences across settings cannot be attributed to sample size alone.

Panel B reports the computational tradeoff. GtO's online time increases from 1.05 seconds in the one-store setting to 7.13 seconds with 11 stores and 18.09 seconds with 50 stores. All PtO methods remain below 0.11 seconds in the three settings. With 50 stores, for example, GtO's 9.2\% cost reduction relative to PtO-KRR accompanies an online time of 18.09 seconds, compared with 0.030 seconds for PtO-KRR. Thus, PtO performs better in the smallest setting on both decision quality and latency, while broader pooling introduces a tradeoff between GtO's lower decision cost and PtO's faster response. These findings extend the empirical comparison to mixed contextual features and recovered demand, while illustrating similar tradeoffs between data availability, decision quality, and computation.

\section{Conclusion}
\label{sec:conclusion}

In this paper, we study a fundamental representation choice in contextual stochastic optimization: should one learn the conditional objective needed for a particular decision problem, or learn the underlying conditional distribution and generate scenarios for subsequent optimization? We formalize these two approaches as Predict-then-Optimize (PtO) and Generate-then-Optimize (GtO). Our main message is that neither paradigm is uniformly preferable. PtO benefits from statistical parsimony and avoids online scenario generation, whereas GtO learns a richer stochastic representation that can preserve the structure of the downstream optimization problem and be reused across decision tasks. The appropriate choice therefore depends on the statistical, computational, and operational characteristics of the problem.

Our theoretical analysis makes this trade-off explicit for convex contextual stochastic optimization. For a broad class of local PtO methods, nonnegative weighting preserves the convexity of the original problem but limits the ability of the estimator to exploit high-order smoothness in high-dimensional contexts. Global PtO methods such as KRR can achieve greater statistical efficiency by exploiting smoothness, but their signed weights need not preserve convexity. GtO avoids this tension because its downstream problem is an ordinary sample average approximation with nonnegative scenario weights and therefore inherits the convexity of the primitive stochastic program. Its statistical performance, however, depends on learning the richer conditional distribution. We then decompose the regret of GtO into an offline learning error and an online optimization error, providing a framework for directly comparing its statistical efficiency with that of PtO.

These results lead to a practical characterization of when each paradigm is attractive. PtO tends to be preferable when the conditional objective is sufficiently smooth, the uncertainty dimension is large, offline data are limited, the downstream task is stable, or online computation is particularly costly. GtO becomes more attractive when the conditional distribution is substantially smoother than the induced objective, the context dimension is large relative to the uncertainty dimension, preserving the structure of downstream optimization is important, sufficient training data are available, or the learned uncertainty model is intended to support multiple decision problems. Our numerical and empirical studies complement the theoretical analysis by illustrating how data availability, covariate dimension, learning formulation, and downstream problem structure affect the relative decision quality and online computational costs of PtO and GtO.

This study leaves several directions for future research. First, our analysis focuses on convex contextual stochastic optimization; extending the comparison to nonconvex, discrete, and multistage decision problems may reveal different trade-offs between statistical representation and optimization structure. Second, both PtO and GtO in this paper follow a sequential learning and optimization paradigm. An important direction is to study the predictive vs. generative representation choice within integrated or decision-focused learning frameworks. Third, reducing the online computational cost of GtO through faster conditional generators, more efficient scenario construction, or adaptive sampling may further expand the range of settings in which generative decision making is practically attractive. More broadly, this paper hopes that viewing contextual decision making through the choice of what statistical object to learn provides a useful perspective for designing data-driven optimization methods.

\bibliographystyle{informs2014}
\bibliography{GtO}

\begin{thebibliography}{8}
\providecommand{\natexlab}[1]{#1}
\providecommand{\url}[1]{\texttt{#1}}
\providecommand{\urlprefix}{URL }

\bibitem[{Dirksen(2015)}]{dirksen2015tail}
Dirksen S (2015) Tail bounds via generic chaining. \emph{Electron. J. Probab.} 20(53):1--29.

\bibitem[{Hu et~al.(2020{\natexlab{a}})Hu, Chen, \protect\BIBand{} He}]{hu2020sample}
Hu Y, Chen X, He N (2020{\natexlab{a}}) Sample complexity of sample average approximation for conditional stochastic optimization. \emph{SIAM J. Optim.} 30(3):2103--2133.

\bibitem[{Hu et~al.(2020{\natexlab{b}})Hu, Zhang, Chen, \protect\BIBand{} He}]{hu2020biased}
Hu Y, Zhang S, Chen X, He N (2020{\natexlab{b}}) Biased stochastic first-order methods for conditional stochastic optimization and applications in meta learning. Larochelle H, Ranzato M, Hadsell R, Balcan MF, Lin HT, eds., \emph{Adv. Neural Inform. Proc. Systems 33} (Curran Associates, New York), 2759--2770.

\bibitem[{Sadana et~al.(2025)Sadana, Chenreddy, Delage, Forel, Frejinger, \protect\BIBand{} Vidal}]{sadana25survey}
Sadana U, Chenreddy A, Delage E, Forel A, Frejinger E, Vidal T (2025) A survey of contextual optimization methods for decision-making under uncertainty. \emph{Eur. J. Oper. Res.} 320(2):271--289.

\bibitem[{Shapiro et~al.(2009)Shapiro, Dentcheva, \protect\BIBand{} Ruszczy{\'n}ski}]{ShapiroDentchevaRuszczynski2009}
Shapiro A, Dentcheva D, Ruszczy{\'n}ski A (2009) \emph{Lectures on Stochastic Programming: Modeling and Theory} (SIAM, Philadelphia, PA).

\bibitem[{Tuo \protect\BIBand{} Zou(2026)}]{tuo2024asymptotic}
Tuo R, Zou L (2026) Asymptotic theory for linear functionals of kernel ridge regression. \emph{Statist. Sinica} Forthcoming.

\bibitem[{van~der Vaart(1998)}]{vanderVaart1998asymptotic}
van~der Vaart AW (1998) \emph{Asymptotic Statistics} (Cambridge University Press, Cambridge, UK).

\bibitem[{Vershynin(2018)}]{Vershynin2018}
Vershynin R (2018) \emph{High-Dimensional Probability: An Introduction with Applications in Data Science} (Cambridge University Press, Cambridge, UK).

\end{thebibliography}

\ECTYPE{}

\ECRUNAUTHOR{Hong, Huang, and Li}
\ECRUNTITLE{Contextual Decision Making: Predictive or Generative}

\ECRRHFirstLine{\bfseries\theECRUNAUTHOR:}
\ECRRHSecondLine{\itshape\theECRUNTITLE}

\ECLRHFirstLine{\bfseries\theECRUNAUTHOR:}
\ECLRHSecondLine{\itshape\theECRUNTITLE}

\ECSwitch

\ECHead{Appendix}

\section{Proofs in Section~\ref{sec:PtO}}

In this section, we prove Proposition~\ref{prop:krr_matern_uniform_error} and Theorem~\ref{thm:pto_lower_bound} respectively.  

\subsection{Proof of Proposition~\ref{prop:krr_matern_uniform_error}}
\label{app:proof of prop 1}

To analyze the regret $\mathcal R_n(x_0)$, we adopt the decomposition
\begin{align}
&\mathcal{R}_n(x_0) \nonumber \\
&\,=\, \left[h(\hat z_n(x_0);x_0)-\hat h_n(\hat z_n(x_0);x_0)\right] +  \left[\hat h_n(\hat z_n(x_0);x_0)-\hat h_n(z^{\ast}(x_0);x_0)\right] + \left[\hat h_n(z^{\ast}(x_0);x_0)-h(z^{\ast}(x_0);x_0)\right] \nonumber\\
&\,\leq\, 2 \sup_{z \in \mathcal{Z}} \abs{h(z;x_0) -\hat{h}_n(z;x_0)}.
\label{eq:PtO_error_decomposition0}
\end{align}
Equation~\eqref{eq:PtO_error_decomposition0} shows that controlling the regret of the weighted PtO procedure reduces to bounding the approximation error between the estimated conditional objective $\hat h_n$ and the true conditional objective $h$, uniformly over $\mathcal Z$. For each fixed decision $z\in\mathcal Z$, $\hat h_n(z;\cdot)$ is the KRR estimator of $h(z;\cdot)$, constructed using the observed costs $c(z;X_i,Y_i)$ as regression responses. We therefore begin by analyzing the KRR estimation error for a fixed decision and then extend the resulting bound uniformly over the decision set.

For a fixed decision $z\in\mathcal Z$, define the observed cost, its conditional mean, and the centered observation noise by
\[ 
U_{z,i}:=c(z;X_i,Y_i), \qquad h_z(x):=h(z;x)=\mathbb E\left[U_{z,i}\mid X_i=x\right], \qquad \xi_{z,i}:=U_{z,i}-h_z(X_i).
\]
Then, \( U_{z,i}=h_z(X_i)+\xi_{z,i} \text{ and } \mathbb E[\xi_{z,i}\mid X_i]=0. \) Moreover, Assumption~\ref{ass:SAA-struct0} gives $|U_{z,i}|\leq M$ and $|h_z(X_i)|\leq M$, and hence $|\xi_{z,i}|\leq2M$. Therefore, by Hoeffding's lemma,
\begin{equation}
\mathbb E\left[ \exp(t\xi_{z,i}) \,\middle|\, X_i \right] \leq \exp(2M^2t^2), \quad t\in\mathbb R.
\label{eq:krr-conditional-subgaussian}
\end{equation}
Thus, conditional on $X_{1:n}$, the noise variables $\xi_{z,1},\ldots,\xi_{z,n}$ are independent, mean-zero, and sub-Gaussian with a sub-Gaussian bound independent of $z$. 

To separate the effect of observation noise from the remaining estimation error, we consider the conditional expectation of the KRR estimator given the observed covariates. Define $\bar h_{n,z}(x):=\mathbb E[\hat h_n(z;x)| X_{1:n}]$. Let $\mathbf c(z;X_{1:n},Y_{1:n}) = (c(z;X_1,Y_1),\ldots,c(z;X_n,Y_n))^\top$ denote the vector of observed responses. By Equation~\eqref{eq:KRRestimator}, the KRR estimator admits the closed-form expression
\[
\hat h_n(z;x) = k_n(x)^\top (K_n+n\lambda_n I_n)^{-1}\mathbf c(z,X_{1:n},Y_{1:n}).
\]
Since $K_n$ and $k_n(x)$ are determined by $X_{1:n}$ and $\mathbb E[\mathbf c(z,X_{1:n},Y_{1:n})| X_{1:n}]=(h_z(X_1),\ldots,h_z(X_n))^\top$, taking the conditional expectation gives
\[
\bar h_{n,z}(x) = k_n(x)^\top(K_n+n\lambda_n I_n)^{-1}
(h_z(X_1),\ldots,h_z(X_n))^\top.
\]
Thus, $\bar h_{n,z}$ coincides with the KRR estimator fitted to the noiseless responses $h_z(X_1),\ldots,h_z(X_n)$. We therefore decompose the objective approximation error as
\begin{equation}
\left\| \hat h_n(z;\cdot)-h_z \right\|_\infty \leq \underbrace{\left\|\hat h_n(z;\cdot)-\bar h_{n,z}\right\|_\infty}_{=:S_{n,z}} + \underbrace{\left\|\bar h_{n,z}-h_z\right\|_\infty}_{=:B_{n,z}}.
\label{eq:krr-error-decomposition}
\end{equation}
The first term $S_{n,z}$ is the stochastic error arising from the observation noise, while the second term $B_{n,z}$ is the conditional bias. We first establish a bound for $S_{n,z}$ on a high-probability event for the random design $X_{1:n}$ in Lemma~\ref{lem:krr-stochastic-supnorm}. We then bound $B_{n,z}$ on the same event and combine the two bounds to obtain a fixed-decision sup-norm error bound in Lemma~\ref{lem:fixed-decision-krr}. Finally, Lemma~\ref{lem:krr_uniformization} extends this result uniformly over $\mathcal Z$ and establishes the expected uniform approximation error needed to control the regret.

Let $d:=d_x$. For a given realization $X_{1:n}$, define the empirical norm $\|f\|_n:=(n^{-1}\sum_{i=1}^n f(X_i)^2)^{1/2}$. 
For a $t \geq 1$, define the event
\[
\mathcal E_n(t) := \left\{ \|f\|_{L^2} \leq \max\left\{ C_1\|f\|_n,\, C_2t n^{-s/d}\|f\|_{H^s} \right\} \text{ for all }f\in H^s(\mathcal X) \right\},
\]
where $C_1$ and $C_2$ are the constants in Theorem~S4.1 in the Supplementary Material of \citeEC{tuo2024asymptotic}.

\begin{lemma}[$S_{n,z}$ bound on a good-design event]
\label{lem:krr-stochastic-supnorm}
Suppose the conditions of Proposition~\ref{prop:krr_matern_uniform_error} hold. Then, for any fixed $A>0$, there exist constants $C_A'>0$, $C_A<\infty$, and an integer $N_A\geq 1$, all independent of $n$ and $z$, such that the following two statements hold for every $n\geq N_A$. (i) Let $t_n:=C_A'(\log n)^{s/d}$. The event $\mathcal E_n(t_n)$ holds with a large probability,
\begin{equation}
\mathbb P\bigl(\mathcal E_n(t_n)^c\bigr) \,\leq\, \frac{1}{2}n^{-A}.
\label{eq:krr-design-event-polynomial-tail}
\end{equation}
(ii) For every $z\in\mathcal Z$ and every realization $X_{1:n}\in\mathcal E_n(t_n)$,
\begin{equation} \mathbb P \left( S_{n,z} > C_A n^{-1/2} \lambda_n^{-\frac{d}{4s}} \sqrt{\log n} \,\middle|\, X_{1:n} \right) \,\leq\, \frac{1}{2}n^{-A}.
\label{eq:krr-stochastic-polynomial-tail}
\end{equation}
\end{lemma}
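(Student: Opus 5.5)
Part (i) follows directly from the sampling inequality of \citeEC{tuo2024asymptotic}. Theorem~S4.1 in their Supplementary Material gives, for i.i.d.\ design points with density bounded above and below (Assumption~\ref{ass:Regularity_density}) and $s>d/2$, a tail bound of the form $\mathbb P(\mathcal E_n(t)^c)\le C_3\exp(-C_4 t^{d/s})$ for $t\ge1$. I would substitute $t_n=C_A'(\log n)^{s/d}$, which gives $t_n^{d/s}=(C_A')^{d/s}\log n$. Hence $\mathbb P(\mathcal E_n(t_n)^c)\le C_3 n^{-C_4(C_A')^{d/s}}$. Choosing $C_A'$ so that $C_4(C_A')^{d/s}\ge A+1$, and $N_A$ so that $C_3 n^{-1}\le 1/2$ for $n\ge N_A$, yields \eqref{eq:krr-design-event-polynomial-tail}. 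Since $s$ is an integer exceeding $d/2$, $t_n\ge1$ for large $n$, so the theorem applies.

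For part (ii), fix $z$ and condition on $X_{1:n}\in\mathcal E_n(t_n)$. Then $\hat h_n(z;\cdot)-\bar h_{n,z}=\sum_i w_i(\cdot)\xi_{z,i}$, where $w(x)=(K_n+n\lambda_nI_n)^{-1}k_n(x)$. This is a linear functional of the independent sub-Gaussian noise in \eqref{eq:krr-conditional-subgaussian}. Equivalently, $g:=\hat h_n(z;\cdot)-\bar h_{n,z}$ is the KRR fit to the pure noise vector $\xi$. I would control its sup norm in two steps.

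\textbf{Step 1: pointwise and empirical control.} By the variational characterization of KRR applied to responses $\xi$, comparing $g$ with $f=0$ and then using the basic inequality, one gets $\|g\|_n^2+\lambda_n\|g\|_{\mathcal H}^2\le \frac{2}{n}\xi^\top g(X_{1:n})$. The right-hand side equals $\frac2n\xi^\top S_\lambda\xi$ with $S_\lambda=K_n(K_n+n\lambda_nI_n)^{-1}$. Hanson--Wright (or a sub-Gaussian quadratic form bound) gives, with conditional probability at least $1-\frac14 n^{-A}$,
\[
\|g\|_n^2+\lambda_n\|g\|_{\mathcal H}^2\lesssim \frac{M^2}{n}\bigl(\operatorname{tr}S_\lambda+\sqrt{A\log n\,\operatorname{tr}S_\lambda^2}+A\log n\bigr).
\]
On the good-design event, the empirical eigenvalues of $K_n/n$ are comparable to the population Mat\'ern eigenvalues $\mu_j\asymp j^{-2s/d}$. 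This gives the effective dimension $\operatorname{tr}S_\lambda\lesssim \lambda_n^{-d/(2s)}$. If this comparison is awkward to justify, I would instead bound the trace crudely using the norm equivalence of $\mathcal E_n$.

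\textbf{Step 2: from empirical/RKHS norms to the sup norm.} I would use the interpolation (Gagliardo--Nirenberg/Sobolev embedding) inequality $\|g\|_\infty\lesssim\|g\|_{L^2}^{1-d/(2s)}\|g\|_{H^s}^{d/(2s)}$. On $\mathcal E_n(t_n)$, $\|g\|_{L^2}\le\max\{C_1\|g\|_n,C_2t_nn^{-s/d}\|g\|_{H^s}\}$, and $\|g\|_{H^s}\asymp\|g\|_{\mathcal H}$. With $\delta^2:=\lambda_n^{-d/(2s)}\log n/n$, Step 1 gives $\|g\|_n\lesssim\delta$ and $\|g\|_{\mathcal H}\lesssim\delta\lambda_n^{-1/2}$. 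Plugging these into the interpolation inequality gives
\[
\|g\|_\infty\lesssim\delta\,\lambda_n^{-d/(4s)}=n^{-1/2}\lambda_n^{-d/(2s)}\sqrt{\log n}.
\]
This carries one factor $\lambda_n^{-d/(4s)}$ too many, so the crude route loses.

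\textbf{Main obstacle: sharpening to the target rate.} Getting exactly $n^{-1/2}\lambda_n^{-d/(4s)}\sqrt{\log n}$ is the expected hard step. My first attempt would be a pointwise argument followed by chaining. For fixed $x$, $g(x)=w(x)^\top\xi$ is sub-Gaussian with variance proxy $4M^2\|w(x)\|_2^2$. The key estimate is $\|w(x)\|_2^2\lesssim n^{-1}\lambda_n^{-d/(2s)}$ uniformly in $x$ on the good event. This is the KRR "equivalent kernel" variance bound, which I would take from \citeEC{tuo2024asymptotic} (their pointwise variance analysis). Given that estimate, I would extend from fixed points to the sup norm by a union bound over an $n^{-c}$-net of $[0,1]^d$, which has polynomially many points. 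The union bound contributes only $\sqrt{\log n}$ in combination with $A$. Between net points I would use the Lipschitz continuity of $g$, which follows from the crude bound $\|g\|_{\mathcal H}\le\|\xi\|_2/\sqrt{n\lambda_n}$ and the kernel's smoothness. Chaining \citepEC{dirksen2015tail} could replace the net if needed. Finally, I would absorb constants into $C_A$ and enlarge $N_A$, so that the total failure probability is at most $\frac12n^{-A}$, uniformly in $z$ because the sub-Gaussian constant $2M^2$ does not depend on $z$.
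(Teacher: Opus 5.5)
Part (i) is the paper's argument. For part (ii), your final plan is correct; you were right to discard the Hanson--Wright/interpolation route, which loses a factor $\lambda_n^{-d/(4s)}$. Its core is the same as the paper's: conditional sub-Gaussianity of $w(x)^\top\boldsymbol{\xi}_z$, combined with the uniform weight bound $\sup_{x}\|w(x)\|_2\lesssim n^{-1/2}\lambda_n^{-d/(4s)}$ on $\mathcal E_n(t_n)$.

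The paper does not treat that weight bound as a black box from Tuo and Zou, and you should not either, because it is the crux. Write $w(x)=-(n\lambda_n)^{-1}(\hat\phi_x-\phi_x)(X_{1:n})$, where $\hat\phi_x$ is the noiseless KRR fit of the kernel section $\phi_x=K_\nu(\cdot,x)$. This reduces the bound to $\|\hat\phi_x-\phi_x\|_n\lesssim\lambda_n^{1-d/(4s)}$, which is Lemma~\ref{lem:noiseless-krr-bound} with $\delta=1-d/(2s)$. The source condition there is the interpolation bound $|f(x)|\le C\|f\|_{L^2}^{1-d/(2s)}\|f\|_{\mathcal H_\nu}^{d/(2s)}$. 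The design condition $b\le a\lambda_n^{1/2}$ follows from $t_nn^{-s/d}=o(\lambda_n^{1/2})$.

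You genuinely differ from the paper in the passage from points to the supremum. The paper runs generic chaining (Dirksen's Theorem~3.2) for the semimetric $\mathfrak d_n(x,x')\propto\lambda_n^{-1}n^{-1/2}\|\hat\phi_{x,x'}-\phi_{x,x'}\|_n$. It bounds the diameter by the same weight estimate and the entropy through $\mathfrak d_n(x,x')\lesssim(n\lambda_n)^{-1/2}\|x-x'\|^\tau$. You instead take a union bound over an $n^{-c}$-net and control the gaps with the deterministic modulus $|g(x)-g(x')|\le\|g\|_{\mathcal H_\nu}\|\phi_x-\phi_{x'}\|_{\mathcal H_\nu}\lesssim\lambda_n^{-1/2}\|x-x'\|^\tau$, using $\|g\|_{\mathcal H_\nu}\le\|\boldsymbol{\xi}_z\|_2/\sqrt{n\lambda_n}\le 2M\lambda_n^{-1/2}$.

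Your version is more elementary and costs nothing here. The net has polynomially many points, and the statement already pays $\sqrt{\log n}$. The paper's $\gamma_2$ bound carries the same $\sqrt{\log(1/\lambda_n)}$ factor anyway.

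One correction: the modulus is H\"older of order $\tau<\min\{\nu,1\}$, not Lipschitz, since for small $\nu$ elements of $\mathcal H_\nu$ need not be Lipschitz. Take $c>1/\tau$ so that the between-net error $\lambda_n^{-1/2}n^{-c\tau}$ is negligible relative to $n^{-1/2}$.
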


The proof of Lemma~\ref{lem:krr-stochastic-supnorm} relies on the smoothing-regime bound in Lemma~S5.1 in the Supplementary Material of \citetEC{tuo2024asymptotic}. Applying this bound in our setting requires verifying several conditions. To make these conditions explicit and the subsequent argument easier to follow, we first restate Lemma~S5.1 of \citetEC{tuo2024asymptotic} in our notation as Lemma~\ref{lem:noiseless-krr-bound}.

\begin{lemma}[Lemma~S5.1 of \citealtEC{tuo2024asymptotic}]
\label{lem:noiseless-krr-bound}
Let $g\in\mathcal H_\nu$ and $\hat g$ be the KRR estimator fitted to the responses $g(X_1),\ldots,g(X_n)$ with regularization parameter $\lambda_n>0$. Suppose there exist constants $C_g>0$ and $\delta\in(0,1]$ such that
\begin{equation}
|\langle g,f\rangle_{\mathcal H_\nu}| \,\leq\, C_g\|f\|_{L^2}^{\delta}\|f\|_{\mathcal H_\nu}^{1-\delta} \quad \text{for all } f\in\mathcal H_\nu.
\label{eq:noiseless-krr-condition}
\end{equation}
Suppose further that the design $X_{1:n}$ satisfies \( \|f\|_{L^2} \leq \max\{ a\|f\|_n,\, b\|f\|_{\mathcal H_\nu} \} \text{ for all } f\in\mathcal H_\nu \) and for some constants $a,b>0$ satisfying $b\leq a\lambda_n^{1/2}$. Then
\begin{equation}
\|\hat g-g\|_n \,\leq\, 2C_g a^\delta\lambda_n^{(1+\delta)/2}.
\label{eq:noiseless-krr-bound}
\end{equation}
\end{lemma}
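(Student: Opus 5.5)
The plan is a basic-inequality argument that compares the KRR objective at $\hat g$ with its value at the truth $g$, followed by a self-bounding step. First I would fix the normalization implied by the closed form $(K_n+n\lambda_n I_n)^{-1}$: the estimator $\hat g$ minimizes $\|f-g\|_n^2+\lambda_n\|f\|_{\mathcal H_\nu}^2$ over $f\in\mathcal H_\nu$, because the responses are the noiseless values $g(X_i)$. The minimizer exists and lies in $\mathcal H_\nu$. Since $g\in\mathcal H_\nu$ is feasible, optimality gives
\[
\|\hat g-g\|_n^2+\lambda_n\|\hat g\|_{\mathcal H_\nu}^2\le \lambda_n\|g\|_{\mathcal H_\nu}^2 .
\]
Set $\Delta:=\hat g-g\in\mathcal H_\nu$ and expand $\|\hat g\|_{\mathcal H_\nu}^2=\|g\|_{\mathcal H_\nu}^2+2\langle g,\Delta\rangle_{\mathcal H_\nu}+\|\Delta\|_{\mathcal H_\nu}^2$. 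With $Q:=\|\Delta\|_n^2+\lambda_n\|\Delta\|_{\mathcal H_\nu}^2$, this yields
\[
Q\le -2\lambda_n\langle g,\Delta\rangle_{\mathcal H_\nu}\le 2\lambda_n C_g\|\Delta\|_{L^2}^{\delta}\|\Delta\|_{\mathcal H_\nu}^{1-\delta},
\]
where the last step applies condition~\eqref{eq:noiseless-krr-condition} with $f=\Delta$.

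Next I would express both norms on the right in terms of $Q$. Directly from the definition, $\|\Delta\|_n\le Q^{1/2}$ and $\|\Delta\|_{\mathcal H_\nu}\le (Q/\lambda_n)^{1/2}$. The design hypothesis together with $b\le a\lambda_n^{1/2}$ gives
\[
\|\Delta\|_{L^2}\le\max\{a\|\Delta\|_n,\; a\lambda_n^{1/2}\|\Delta\|_{\mathcal H_\nu}\}\le aQ^{1/2}.
\]
This is the step where the compatibility condition $b\le a\lambda_n^{1/2}$ enters: it is exactly what lets the two branches of the maximum be controlled by the single quantity $Q$. I expect it to be the only delicate point. Substituting both bounds gives
\[
Q\le 2\lambda_n C_g a^{\delta}Q^{\delta/2}\,\lambda_n^{-(1-\delta)/2}Q^{(1-\delta)/2}=2C_g a^{\delta}\lambda_n^{(1+\delta)/2}Q^{1/2}.
\]

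Finally, if $Q=0$ the claim is trivial. Otherwise, dividing by $Q^{1/2}$ gives $Q^{1/2}\le 2C_g a^\delta\lambda_n^{(1+\delta)/2}$. Since $\|\hat g-g\|_n\le Q^{1/2}$, this yields \eqref{eq:noiseless-krr-bound}. The argument uses only the variational characterization of $\hat g$, the source-type condition~\eqref{eq:noiseless-krr-condition}, and the sampling inequality on the design. No probabilistic input is needed, which is consistent with the lemma being applied later on the good-design event $\mathcal E_n(t_n)$.
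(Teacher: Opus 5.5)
Your proof is correct. It is the standard basic-inequality argument behind the cited Lemma~S5.1: the paper imports that lemma from Tuo and Zou without proof, and it uses the same comparison of regularized objectives in Equation~\eqref{eq:krr-noiseless-basic}. Your use of $b\le a\lambda_n^{1/2}$ to bound both branches of the sampling inequality by $aQ^{1/2}$ is exactly what reproduces the constant $2$ in \eqref{eq:noiseless-krr-bound}. As a minor aside, you could instead use the first-order condition $\langle \hat g-g,f\rangle_n+\lambda_n\langle \hat g,f\rangle_{\mathcal H_\nu}=0$ at $f=\Delta$, where $\langle u,v\rangle_n:=n^{-1}\sum_{i=1}^n u(X_i)v(X_i)$. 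This gives the identity $Q=-\lambda_n\langle g,\Delta\rangle_{\mathcal H_\nu}$, and hence the same bound with constant $1$.
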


With Lemma~\ref{lem:noiseless-krr-bound} in place, we now prove Lemma~\ref{lem:krr-stochastic-supnorm}.
\smallskip

\begin{proof}{Proof of Lemma \ref{lem:krr-stochastic-supnorm}}
Under Assumption~\ref{ass:Regularity_density}, Theorem~S4.1 of \citeEC{tuo2024asymptotic} implies that there exist constants $C_1,C_2,C_3,C_4>0$, independent of $n$ and the design points, such that for every $t\geq1$, \( \mathbb P(\mathcal E_n(t)^c) \leq C_3\exp\{-C_4t^{d/s}\}. \) For the prescribed $A>0$, let \( t_n=C_A'(\log n)^{s/d}, \) where $C_A'>0$ is chosen sufficiently large so that \( q_A:=C_4(C_A')^{d/s}>A. \) Then \( \mathbb P(\mathcal E_n(t_n)^c) \leq C_3\exp\{-C_4(C_A')^{d/s}\log n\} = C_3n^{-q_A}. \) Since $q_A>A$, there exists an integer $N_{A}\geq 1$ such that $C_3n^{-(q_A-A)}\leq 1/2$ for all $n\geq N_{A}$. Consequently, for every $n\geq N_{A}$, \( C_3n^{-q_A}= C_3n^{-(q_A-A)}n^{-A} \leq \frac{1}{2}n^{-A}. \) Therefore, \( \mathbb P(\mathcal E_n(t_n)^c) \leq \frac{1}{2}n^{-A}, \)
 which establishes Equation~\eqref{eq:krr-design-event-polynomial-tail} for all $n\geq N_{A}$.

We next prove Equation~\eqref{eq:krr-stochastic-polynomial-tail}. We follow the argument in the proof of Theorem~S2.3 of \citeEC{tuo2024asymptotic}, particularly their Equations~(S5.61)--(S5.62), to bound the stochastic error $S_{n,z}$. For each fixed $z\in\mathcal Z$, define the stochastic process $\mathbb G_{n,z}(x):=\hat h_n(z;x)-\bar h_{n,z}(x)$, indexed by $x\in\mathcal X$. Then $S_{n,z}=\sup_{x\in\mathcal X}|\mathbb G_{n,z}(x)|.$ Fix any $x_0 \in \mathcal{X}$. By the triangle inequality,
\begin{equation}
S_{n,z} \,\leq\, \sup_{x\in\mathcal X} |\mathbb G_{n,z}(x)-\mathbb G_{n,z}(x_0)| + |\mathbb G_{n,z}(x_0)|.
\label{eq:stochastic_error_decomposition}
\end{equation}
In the following, we bound the two terms in the right hand side of Equation~\eqref{eq:stochastic_error_decomposition} separately. For the first term, we equip $\mathcal X$ with a semimetric that controls the increments of $\mathbb G_{n,z}$ and apply a generic chaining inequality of \citeEC{dirksen2015tail}. The second term is controlled directly using a sub-Gaussian tail bound. 

Now we bound the first term in Equation~\eqref{eq:stochastic_error_decomposition}. Conditional on $X_{1:n}$, the process $\mathbb G_{n,z}$ is centered. Let $\boldsymbol{\xi}_z:=(\xi_{z,1},\ldots,\xi_{z,n})^\top$ denote the noise vector defined above. By the closed-form KRR representation and $U_{z,i}=h_z(X_i)+\xi_{z,i}$, we have \( \mathbb G_{n,z}(x) = k_n(x)^\top(K_n+n\lambda_n I_n)^{-1}\boldsymbol{\xi}_z. \) We now construct a semimetric using the KRR representation. For each $x\in\mathcal X$, define the kernel section $\phi_x(\cdot):=K_\nu(\cdot,x)$, and let $\hat\phi_x$ denote the KRR estimator fitted to the responses $\phi_x(X_1),\ldots,\phi_x(X_n)$. Since $\phi_x(X_{1:n})=k_n(x)$, the closed-form KRR representation gives \( \hat\phi_x(\cdot) =k_n(x)^\top (K_n+n\lambda_n I_n)^{-1} K_\nu(\cdot,X_{1:n})^\top. \) Evaluating at the training points and subtracting $\phi_x(X_{1:n})=k_n(x)$ yields
\[
(\hat\phi_x-\phi_x)(X_{1:n}) \,=\, \left[K_n(K_n+n\lambda_n I_n)^{-1}-I_n\right]k_n(x) \,=\, -n\lambda_n(K_n+n\lambda_n I_n)^{-1}k_n(x).
\]
Consequently, \( \mathbb G_{n,z}(x) = -\frac{1}{n\lambda_n}\sum_{i=1}^n(\hat\phi_x-\phi_x)(X_i)\xi_{z,i}. \) For $x,x'\in\mathcal X$, define $\phi_{x,x'}:=\phi_x-\phi_{x'}$ and $\hat\phi_{x,x'}:=\hat\phi_x-\hat\phi_{x'}$. Then
\begin{equation}
\mathbb G_{n,z}(x)-\mathbb G_{n,z}(x') \,=\, -\frac{1}{n\lambda_n} \sum_{i=1}^n (\hat\phi_{x,x'}-\phi_{x,x'})(X_i)\xi_{z,i}.
\label{eq:stochastic increment}
\end{equation}
Since the noise variables $\xi_{z,i}$ are conditionally independent and sub-Gaussian, the conditional fluctuations of these increments in Equation~\eqref{eq:stochastic increment} are controlled by $\|\hat\phi_{x,x'}-\phi_{x,x'}\|_n$. We therefore define the following semimetric on $\mathcal X$ to control the increments of $\mathbb G_{n,z}$:
\begin{equation}
\mathfrak d_n(x,x') \,:=\, 2\sqrt{2}M\lambda_n^{-1}n^{-1/2} \|\hat\phi_{x,x'}-\phi_{x,x'}\|_n.
\label{eq:krr-semimetric}
\end{equation}
This is a semimetric on $\mathcal X$. By Equation~\eqref{eq:krr-conditional-subgaussian} and the conditional independence of $\xi_{z,1},\ldots,\xi_{z,n}$,
\[
\begin{aligned}
\mathbb E\left[ \exp\left\{ \theta\bigl(\mathbb G_{n,z}(x)-\mathbb G_{n,z}(x')\bigr) \right\} \,\middle|\, X_{1:n} \right] \,\leq\, \exp\left\{ \frac{2M^2\theta^2}{n\lambda_n^2} \left\| \hat\phi_{x,x'}-\phi_{x,x'} \right\|_n^2 \right\} \,=\, \exp\left\{ \frac{\theta^2\mathfrak d_n^2(x,x')}{4} \right\}.
\end{aligned}
\]
A Chernoff bound therefore yields
\[
\mathbb P \left( \left| \mathbb G_{n,z}(x)-\mathbb G_{n,z}(x') \right| \geq u\,\mathfrak d_n(x,x') \,\middle|\, X_{1:n} \right) \,\leq\, 2e^{-u^2}, \quad u\geq0.
\]

In the terminology of \citeEC{dirksen2015tail}, a stochastic process $\{X_t:t\in T\}$ is a $\psi_2$-process with respect to a semimetric $\rho$ if its increments satisfy \( \mathbb P\left( |X_t-X_s| \geq u\,\rho(t,s) \right) \leq 2e^{-u^2} \text{ for all } s,t\in T, u\geq0.  \) Thus, conditional on $X_{1:n}$, $\{\mathbb G_{n,z}(x):x\in\mathcal X\}$ is a $\psi_2$-process with respect to $\mathfrak d_n$, and the semimetric is independent of $z$. This verifies the conditions required for Theorem~3.2 of \citeEC{dirksen2015tail}. Applying that theorem, we get a generic chaining inequality for the increment of the stochastic process $\{\mathbb G_{n,z}(x):x\in\mathcal X\}$. That is, for any fixed $x_0\in\mathcal X$ and every $u\geq1$,
\begin{equation}
\mathbb P\left( \sup_{x\in\mathcal X} \left| \mathbb G_{n,z}(x)-\mathbb G_{n,z}(x_0) \right| \geq e^{1/2} \left\{ C_D\gamma_2(\mathcal X,\mathfrak d_n) + uD_D\Delta_{\mathfrak d_n}(\mathcal X) \right\} \,\middle|\, X_{1:n} \right) \,\leq\, \exp(-u^2/2),
\label{eq:theorem 3.2 of Dirksen}
\end{equation}
where $C_D,D_D>0$ are universal constants. Here \( \Delta_{\mathfrak d_n}(\mathcal X) := \sup_{x,x'\in\mathcal X} \mathfrak d_n(x,x') \) is the diameter of $\mathcal X$ under the semimetric $\mathfrak d_n$, and $\gamma_2(\mathcal X,\mathfrak d_n)$ is the generic-chaining functional, which admits the standard entropy bound
\begin{equation}
\gamma_2(\mathcal X,\mathfrak d_n) \,\leq\, C\int_0^{\Delta_{\mathfrak d_n}(\mathcal X)} \sqrt{\log N(\mathcal X,\mathfrak d_n,\varepsilon)} \,\mathrm{d}\varepsilon,
\label{eq:entropy bound}
\end{equation}
where $C>0$ is a universal constant and $N(\mathcal X,\mathfrak d_n,\varepsilon)$ denotes the covering number of $\mathcal X$ under $\mathfrak d_n$ at radius $\varepsilon$. We next bound these two quantities on $\mathcal{E}_n(t_n)$.

We first bound the diameter $\Delta_{\mathfrak d_n}(\mathcal X)$. Recall that $\phi_x(\cdot)=K_\nu(\cdot,x)$. By the reproducing property, for every $f\in\mathcal H_\nu$, $\langle\phi_x,f\rangle_{\mathcal H_\nu}=f(x)$. Since $s>d/2$, the Sobolev interpolation inequality and the norm equivalence between $\mathcal H_\nu$ and $H^s(\mathcal X)$ imply
$$
|\langle\phi_x,f\rangle_{\mathcal H_\nu}| \,\leq\, C\|f\|_{L^2}^{1-\frac{d}{2s}}\|f\|_{\mathcal H_\nu}^{\frac{d}{2s}} \quad \text{for all } f\in\mathcal H_\nu,
$$
where $C$ is independent of $x$. Thus, condition~\eqref{eq:noiseless-krr-condition} holds for $g=\phi_x$ with $\delta=1-d/(2s)$ and $C_g=C$, uniformly over $x\in\mathcal X$. Moreover, on $\mathcal E_n(t_n)$, the norm equivalence between $\mathcal H_\nu$ and $H^s(\mathcal X)$ gives
\[
\|f\|_{L^2} \,\leq\, \max\left\{ C_1\|f\|_n,\, \widetilde C_2t_n n^{-s/d}\|f\|_{\mathcal H_\nu} \right\}, \quad f\in\mathcal H_\nu,
\]
for some constant $\widetilde C_2>0$. Since $t_n n^{-s/d}=o(\lambda_n^{1/2})$, we have $\widetilde C_2t_n n^{-s/d}\leq C_1\lambda_n^{1/2}$ for all sufficiently large $n$. Hence, the conditions of Lemma~\ref{lem:noiseless-krr-bound} are satisfied. Applying that lemma, we obtain
\begin{equation}
\|\hat\phi_x-\phi_x\|_n \,\leq\, C\lambda_n^{1-\frac{d}{4s}}
\label{eq:krr-kernel-section-error}
\end{equation}
uniformly over $x\in\mathcal X$. For any $x,x'\in\mathcal X$, \( \hat\phi_{x,x'}-\phi_{x,x'} = (\hat\phi_x-\phi_x) - (\hat\phi_{x'}-\phi_{x'}), \) and hence, by Equation~\eqref{eq:krr-kernel-section-error},
\begin{equation}
\left\| \hat\phi_{x,x'}-\phi_{x,x'} \right\|_n \,\leq\, C \lambda_n^{1-\frac{d}{4s}}.
\label{eq:eq:krr-semimetric-holder}
\end{equation}
Substituting this bound into Equation~\eqref{eq:krr-semimetric} yields \( \mathfrak d_n(x,x') \leq C n^{-1/2} \lambda_n^{-\frac{d}{4s}} \) uniformly over $x,x'\in\mathcal X$. Therefore,
\begin{equation}
\Delta_{\mathfrak d_n}(\mathcal X) \,\leq\, C n^{-1/2} \lambda_n^{-\frac{d}{4s}}.
\label{eq:krr-metric-diameter}
\end{equation}

We next bound $\gamma_2(\mathcal X,\mathfrak d_n)$. Fix any $\tau\in(0,\min\{\nu,1\})$. Since $s=\nu+d/2$, the Sobolev embedding and the norm equivalence between $\mathcal H_\nu$ and $H^s(\mathcal X)$ imply that
\[ |f(x)-f(x')| \,\leq\, C\|f\|_{\mathcal H_\nu}\|x-x'\|^\tau, \quad f\in\mathcal H_\nu,~ x,x'\in\mathcal X.
\]
By the reproducing property,
\[
\|\phi_{x,x'}\|_{\mathcal H_\nu} \,=\, \sup_{\|f\|_{\mathcal H_\nu}\leq1} |\langle\phi_{x,x'},f\rangle_{\mathcal H_\nu}| \,=\, \sup_{\|f\|_{\mathcal H_\nu}\leq1} |f(x)-f(x')| \,\leq\, C\|x-x'\|^\tau.
\]
Since $\hat\phi_{x,x'}$ is the KRR estimator fitted to the responses $\phi_{x,x'}(X_1),\ldots,\phi_{x,x'}(X_n)$, comparing its regularized empirical objective with that of $\phi_{x,x'}$ gives \( \|\hat\phi_{x,x'}-\phi_{x,x'}\|_n^2 + \lambda_n\|\hat\phi_{x,x'}\|_{\mathcal H_\nu}^2 \leq \lambda_n\|\phi_{x,x'}\|_{\mathcal H_\nu}^2. \) Consequently, \( \|\hat\phi_{x,x'}-\phi_{x,x'}\|_n \leq C\lambda_n^{1/2}\|x-x'\|^\tau. \) Substituting this bound into Equation~\eqref{eq:krr-semimetric} yields
\begin{equation}
\mathfrak d_n(x,x') \,\leq\, Cn^{-1/2}\lambda_n^{-1/2}\|x-x'\|^\tau, \quad x,x'\in\mathcal X.
\label{eq:krr-semimetric-euclidean}
\end{equation}

By Equation~\eqref{eq:krr-semimetric-euclidean}, any Euclidean cover of $\mathcal X$ with radius \( r_\varepsilon := (\varepsilon /(Cn^{-1/2}\lambda_n^{-1/2}) )^{1/\tau} \) is also an $\varepsilon$-cover under $\mathfrak d_n$. Hence, \( N(\mathcal X,\mathfrak d_n,\varepsilon) \leq N(\mathcal X,\|\cdot\|,r_\varepsilon). \) Since $\mathcal X$ is a bounded subset of $\mathbb R^d$, the standard Euclidean covering number bound gives
\begin{equation}
\log N(\mathcal X,\mathfrak d_n,\varepsilon) \,\leq\, d\log\left[ 1+C\left((n^{-1/2}\lambda_n^{-1/2})/\varepsilon \right)^{1/\tau} \right].
\label{eq:krr-covering-number}
\end{equation}
For convenience, let $a_n:=n^{-1/2}\lambda_n^{-d/(4s)}$ and $b_n:=n^{-1/2}\lambda_n^{-1/2}$. Combining Equations~\eqref{eq:entropy bound}, \eqref{eq:krr-metric-diameter}, and~\eqref{eq:krr-covering-number}, we obtain
\[
\begin{aligned}
\gamma_2(\mathcal X,\mathfrak d_n)
&\,\leq\, C\int_0^{Ca_n} \left({\log\left[1+C\left({b_n}/{\varepsilon}\right)^{1/\tau}\right]}\right)^{1/2}\,\mathrm{d} \varepsilon \,\leq\, Ca_n\int_0^1 \left({\log\left[1+C\left({b_n}/({a_nv})\right)^{1/\tau}\right]}\right)^{1/2}\,\mathrm{d}v\\
&\,\leq\, Ca_n\int_0^1 \left({ \log\left[1+C\left({b_n}/{a_n}\right)^{1/\tau}\right] +\frac{1}{\tau}\log(1/v) }\right)^{1/2}\,\mathrm{d}v\\
&\,\leq\, Ca_n\int_0^1 \left[ \left({\log\left[1+C\left({b_n}/{a_n}\right)^{1/\tau}\right]}\right)^{1/2} +\left(\frac{1}{{\tau}}{\log(1/v)}\right)^{1/2} \right]\mathrm{d}v\\
&\,=\, Ca_n \left[ \left({\log\left(1+C\lambda_n^{-\frac{\nu}{2s\tau}}\right)}\right)^{1/2} +\frac{\sqrt{\pi}}{2\sqrt{\tau}} \right]\\
&\,\leq\, Ca_n \left[ \sqrt{\log(1+C)+\frac{\nu}{2s\tau}\log(1/\lambda_n)} +\frac{\sqrt{\pi}}{2\sqrt{\tau}} \right]
\,\leq\,  Ca_n\sqrt{\log(C'/\lambda_n)}.
\end{aligned}
\]
In the preceding inequalities, $C>0$ denotes a generic constant that may differ from one occurrence to another but is independent of $n$, $z$, and the design points. The last inequality holds for all sufficiently large $n$, since $\lambda_n\to0$ and $\tau>0$ is fixed. Therefore,
\begin{equation}
\gamma_2(\mathcal X,\mathfrak d_n) \,\leq\, Cn^{-1/2}\lambda_n^{-\frac{d}{4s}} \sqrt{\log(C'/\lambda_n)}.
\label{eq:krr-gamma2-bound}
\end{equation}

Fix any $x_0\in\mathcal X$. Substituting Equations~\eqref{eq:krr-metric-diameter} and~\eqref{eq:krr-gamma2-bound} into Equation~\eqref{eq:theorem 3.2 of Dirksen}, we obtain, for every realization $X_{1:n}\in\mathcal E_n(t_n)$ and every $u\geq1$,
\begin{equation}
\mathbb P\left( \sup_{x\in\mathcal X} \left| \mathbb G_{n,z}(x)-\mathbb G_{n,z}(x_0) \right| > C n^{-1/2} \lambda_n^{-\frac{d}{4s}} \left[ \sqrt{\log(C'/\lambda_n)}+u \right] \,\middle|\, X_{1:n} \right) \,\leq\, e^{-cu^2},
\label{eq:krr-increment-tail}
\end{equation}
where $c,C>0$ are independent of $z$.

Now we bound the second term in Equation~\eqref{eq:stochastic_error_decomposition}. From \( \mathbb G_{n,z}(x_0) = -\frac{1}{n\lambda_n} \sum_{i=1}^n (\hat\phi_{x_0}-\phi_{x_0})(X_i)\xi_{z,i}, \) Equation~\eqref{eq:krr-conditional-subgaussian} implies, conditional on $X_{1:n}$, \[ 
\mathbb E\left[ \exp\left\{ \theta\mathbb G_{n,z}(x_0) \right\} \,\middle|\, X_{1:n} \right] \,\leq\, \exp\left\{ \frac{2M^2\theta^2}{n\lambda_n^2} \|\hat\phi_{x_0}-\phi_{x_0}\|_n^2 \right\}. 
\] 
Using Equation~\eqref{eq:krr-kernel-section-error}, for every $X_{1:n}\in\mathcal E_n(t_n)$, \( \frac{2M}{\sqrt n\,\lambda_n} \|\hat\phi_{x_0}-\phi_{x_0}\|_n \leq C n^{-1/2} \lambda_n^{-\frac{d}{4s}}. \) Therefore, the standard sub-Gaussian tail bound gives, for every $u\geq1$,
\begin{equation}
\mathbb P\left( |\mathbb G_{n,z}(x_0)| > C n^{-1/2} \lambda_n^{-\frac{d}{4s}} u \,\middle|\, X_{1:n} \right) \,\leq\, 2e^{-cu^2}.
\label{eq:krr-pointwise-stochastic-tail}
\end{equation}
Equations ~\eqref{eq:stochastic_error_decomposition}, ~\eqref{eq:krr-increment-tail} and~\eqref{eq:krr-pointwise-stochastic-tail}, together with a union bound, imply that for every realization $X_{1:n}\in\mathcal E_n(t_n)$ and every $u\geq1$,
\begin{equation}
\mathbb P\left( S_{n,z} > C n^{-1/2} \lambda_n^{-\frac{d}{4s}} \left[ \sqrt{\log(C'/\lambda_n)}+u \right] \,\middle|\, X_{1:n} \right) \,\leq\, 3e^{-cu^2}.
\label{eq:krr-stochastic-tail}
\end{equation}
Choose $u_n=C_A''\sqrt{\log n}$, where $C_A''>0$ is sufficiently large. Since $\lambda_n\asymp(\log n)/n$, \( \log(C'/\lambda_n)=O(\log n), \) and $C_A''$ can be chosen so that \( 3e^{-cu_n^2} \leq \frac{1}{2}n^{-A} \) for all sufficiently large $n$. Hence, after enlarging $C_A$ if necessary, Equation~\eqref{eq:krr-stochastic-tail} yields
\begin{equation*}
\mathbb P \left( S_{n,z} > C_A n^{-1/2} \lambda_n^{-\frac{d}{4s}} \sqrt{\log n} \,\middle|\, X_{1:n} \right) \,\leq\, \frac{1}{2}n^{-A}
\end{equation*}
for every realization $X_{1:n}\in\mathcal E_n(t_n)$ and all sufficiently large $n$. Together with Equation~\eqref{eq:krr-design-event-polynomial-tail}, this proves the two assertions of Lemma~\ref{lem:krr-stochastic-supnorm}. \Halmos
\end{proof}

\smallskip

Having established a bound for the stochastic error $S_{n,z}$ on the good-design event $\mathcal E_n(t_n)$, we next control the conditional bias $B_{n,z}$ on the same event and combine the two bounds to yield the fixed-decision sup-norm result in Lemma~\ref{lem:fixed-decision-krr}.

\begin{lemma}[Fixed-decision sup-norm bound for KRR]
\label{lem:fixed-decision-krr}
Suppose the conditions of Proposition~\ref{prop:krr_matern_uniform_error} hold. Then, for every fixed $A>0$, there exist constants $C_A<\infty$ and $n_A<\infty$, independent of $z$, such that for all $n\geq n_A$ and all $z\in\mathcal Z$,
\begin{equation}
\mathbb P_{\mathcal D_n} \left( \left\| \hat h_n(z;\cdot)-h(z;\cdot) \right\|_\infty > C_A \left( \frac{\log n}{n} \right)^{\frac{\nu}{2\nu+d_x}} \right) \,\leq\, n^{-A}.
\label{eq:fixed-krr-main-bound}
\end{equation}
\end{lemma}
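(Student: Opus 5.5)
The plan is to work with the decomposition~\eqref{eq:krr-error-decomposition}, $\|\hat h_n(z;\cdot)-h_z\|_\infty\le S_{n,z}+B_{n,z}$, on the good-design event $\mathcal E_n(t_n)$ from Lemma~\ref{lem:krr-stochastic-supnorm}. I will show that, with $\lambda_n\asymp \log n/n$, both terms are of the \emph{same} order $(\log n/n)^{\nu/(2\nu+d)}$. The key is that the extra $\sqrt{\log n}$ in Lemma~\ref{lem:krr-stochastic-supnorm} is absorbed exactly by the choice of $\lambda_n$. Throughout, $d=d_x$ and $s=\nu+d/2$. The exponent identity I will use is
\[
\frac{\nu}{2\nu+d}=\frac{s-d/2}{2s}=\frac12-\frac{d}{4s}.
\]

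\emph{Stochastic term.} Lemma~\ref{lem:krr-stochastic-supnorm}(ii) gives, for $X_{1:n}\in\mathcal E_n(t_n)$ and with conditional probability at least $1-\tfrac12 n^{-A}$,
\[
S_{n,z}\le C_A n^{-1/2}\lambda_n^{-d/(4s)}\sqrt{\log n}.
\]
Substituting $\lambda_n\asymp \log n/n$, the right-hand side becomes
\[
n^{-1/2+d/(4s)}(\log n)^{1/2-d/(4s)}=\Bigl(\frac{\log n}{n}\Bigr)^{1/2-d/(4s)}=\Bigl(\frac{\log n}{n}\Bigr)^{\nu/(2\nu+d)}.
\]
So no logarithmic factor is lost, and the constant is uniform in $z$.

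\emph{Bias term.} This bound is deterministic on $\mathcal E_n(t_n)$. Recall that $\bar h_{n,z}$ is the noiseless KRR fit to $h_z\in\mathcal H_\nu$, and $\|h_z\|_{\mathcal H_\nu}\le CB$ by Assumption~\ref{ass:krr_source} and norm equivalence.
\begin{enumerate}
\item Comparing the regularized objective at $\bar h_{n,z}$ with its value at $h_z$ gives
\[
\|\bar h_{n,z}-h_z\|_n^2+\lambda_n\|\bar h_{n,z}\|_{\mathcal H_\nu}^2\le\lambda_n\|h_z\|_{\mathcal H_\nu}^2 .
\]
Hence $\|\bar h_{n,z}-h_z\|_n\le C\lambda_n^{1/2}B$ and $\|\bar h_{n,z}-h_z\|_{\mathcal H_\nu}\le 2CB$.
\item Let $f=\bar h_{n,z}-h_z$. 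On $\mathcal E_n(t_n)$,
\[
\|f\|_{L^2}\le\max\bigl\{C_1\|f\|_n,\ \widetilde C_2t_nn^{-s/d}\|f\|_{\mathcal H_\nu}\bigr\}.
\]
Since $t_nn^{-s/d}=O((\log n)^{s/d}n^{-s/d})=o(\lambda_n^{1/2})$ (because $s/d>1/2$), this yields $\|f\|_{L^2}\le C\lambda_n^{1/2}B$ for $n$ large.
\item The Sobolev interpolation inequality already used in the proof of Lemma~\ref{lem:krr-stochastic-supnorm} (valid since $s>d/2$) gives
\[
\|f\|_\infty\le C\|f\|_{L^2}^{1-d/(2s)}\|f\|_{\mathcal H_\nu}^{d/(2s)} .
\]
Therefore $B_{n,z}\le CB\lambda_n^{1/2-d/(4s)}\asymp(\log n/n)^{\nu/(2\nu+d)}$.
\end{enumerate}
All constants in these three steps are independent of $z$.

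\emph{Combining.} Set $C_A$ as the sum of the two constants and let $E_z$ be the event in~\eqref{eq:fixed-krr-main-bound}. Then
\[
\mathbb P(E_z)\le\mathbb P(\mathcal E_n(t_n)^c)+\sup_{X_{1:n}\in\mathcal E_n(t_n)}\mathbb P\bigl(S_{n,z}>\cdot\mid X_{1:n}\bigr)\le\tfrac12n^{-A}+\tfrac12n^{-A}=n^{-A}.
\]
This holds for $n\ge n_A$, where $n_A$ dominates $N_A$ and the thresholds needed in the asymptotic comparisons of the bias step.

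The step requiring the most care is the bias sup-norm bound. The noiseless lemma (Lemma~\ref{lem:noiseless-krr-bound}) needs $\delta>0$ and so does not apply directly to $g=h_z$. I therefore plan to bypass it with the basic inequality plus the sampling inequality on $\mathcal E_n(t_n)$ plus interpolation. The points to check are that $t_nn^{-s/d}=o(\lambda_n^{1/2})$ and that the $\mathcal H_\nu$-norm of the difference stays bounded, so that interpolation yields exactly the exponent $\tfrac12-\tfrac d{4s}$.
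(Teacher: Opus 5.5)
Your proposal is correct and follows the paper's proof essentially step for step. You use the same split into $S_{n,z}$ and $B_{n,z}$ on $\mathcal E_n(t_n)$. For the bias you combine the basic KRR inequality, the sampling inequality (via $t_nn^{-s/d}=o(\lambda_n^{1/2})$) and Sobolev interpolation, bypassing Lemma~\ref{lem:noiseless-krr-bound} just as the paper does. You identify $n^{-1/2}\lambda_n^{-d/(4s)}\sqrt{\log n}\asymp(\log n/n)^{1/2-d/(4s)}$ for the stochastic term, and you close with the same $\tfrac12 n^{-A}+\tfrac12 n^{-A}$ bound, with all constants uniform in $z$.
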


\smallskip

\smallskip
\begin{proof}{Proof of Lemma \ref{lem:fixed-decision-krr}}

Recall that $d=d_x$ and $s=\nu+d/2$. By Assumption~\ref{ass:krr_source} and the norm equivalence between $\mathcal H_\nu$ and $H^s(\mathcal X)$, there exists a constant $B_{\mathcal H}<\infty$, independent of $z$, such that
\begin{equation}
\sup_{z\in\mathcal Z} \|h_z\|_{\mathcal H_\nu} \,\leq\, B_{\mathcal H}.
\label{eq:krr-rkhs-uniform-bound}
\end{equation}

Conditional on \(X_{1:n}\), \(\bar h_{n,z}\) is the KRR estimator based on the responses \(h_z(X_1),\ldots,h_z(X_n)\). By the defining optimality inequality for KRR,
\begin{equation}
\|\bar h_{n,z}-h_z\|_n^2 + \lambda_n\|\bar h_{n,z}\|_{\mathcal H_\nu}^2 \,\leq\, \lambda_n\|h_z\|_{\mathcal H_\nu}^2,
\label{eq:krr-noiseless-basic}
\end{equation}
and hence \( \|\bar h_{n,z}-h_z\|_n \leq \lambda_n^{1/2}\|h_z\|_{\mathcal H_\nu}, \text{ and } \|\bar h_{n,z}-h_z\|_{\mathcal H_\nu} \leq 2\|h_z\|_{\mathcal H_\nu}. \) Since $t_n n^{-s/d}=o(\lambda_n^{1/2})$, the definition of $\mathcal E_n(t_n)$ and the norm equivalence between $H^s(\mathcal X)$ and $\mathcal H_\nu$ imply, on $\mathcal E_n(t_n)$,
$$
\|\bar h_{n,z}-h_z\|_{L^2} \,\leq\, C\lambda_n^{1/2}\|h_z\|_{\mathcal H_\nu}.
$$
Applying the Sobolev interpolation inequality and Equation~\eqref{eq:krr-rkhs-uniform-bound}, we obtain the following bound for $B_{n,z}$ on $\mathcal E_n(t_n)$:
\begin{equation}
\begin{aligned}
\|\bar h_{n,z}-h_z\|_\infty
\,\leq\, C \|\bar h_{n,z}-h_z\|_{L^2}^{1-\frac{d}{2s}} \|\bar h_{n,z}-h_z\|_{H^s}^{\frac{d}{2s}} \,\leq\, C\lambda_n^{\frac12-\frac{d}{4s}} \|h_z\|_{\mathcal H_\nu}\,\leq\, C\lambda_n^{\frac12-\frac{d}{4s}}.
\end{aligned}
\label{eq:krr-bias-high-prob}
\end{equation}
where $C$ is independent of $z$.

We now compare $S_{n,z}$ and $B_{n,z}$. Since $\lambda_n\asymp(\log n)/n$, we know $$
\lambda_n^{\frac12-\frac{d}{4s}} \,\asymp\, \left( \frac{\log n}{n} \right)^{\frac12-\frac{d}{4s}} \quad \text{and} \quad 
n^{-1/2}
\lambda_n^{-\frac{d}{4s}}
\sqrt{\log n} \,\asymp\, \left( \frac{\log n}{n}
\right)^{\frac12-\frac{d}{4s}}.
$$
Furthermore, because $s=\nu+d/2$, we have
\begin{equation}
\frac12-\frac{d}{4s} \,=\,
\frac{\nu}{2\nu+d}.
\label{eq:krr-exponent-identity}
\end{equation}
It follows from Equations~\eqref{eq:krr-bias-high-prob}, \eqref{eq:krr-stochastic-polynomial-tail}, \eqref{eq:krr-error-decomposition} and~\eqref{eq:krr-exponent-identity} that, for every realization of $X_{1:n}$ belonging to $\mathcal E_n(t_n)$,
\begin{equation}
\mathbb P \left( \left\| \hat h_n(z;\cdot)-h_z \right\|_\infty > C_A \left( \frac{\log n}{n} \right)^{\frac{\nu}{2\nu+d}} \,\middle|\, X_{1:n} \right) \,\leq\, \frac{1}{2}n^{-A}.
\label{eq:krr-conditional-final-bound}
\end{equation}
Combining Equation~\eqref{eq:krr-conditional-final-bound} with the design-event probability bound~\eqref{eq:krr-design-event-polynomial-tail} yields
\begin{equation}
\mathbb P_{\mathcal D_n}
\left(
\left\|
\hat h_n(z;\cdot)-h_z
\right\|_\infty
>
C_A
\left(
\frac{\log n}{n}
\right)^{\frac{\nu}{2\nu+d}}
\right)
\,\leq\,
n^{-A}.
\label{eq:krr-unconditional-final-bound}
\end{equation}
Recalling that $d=d_x$, Equation~\eqref{eq:krr-unconditional-final-bound} is exactly Equation~\eqref{eq:fixed-krr-main-bound}. All constants above depend only on $A$, $M$, the uniform Sobolev bound in Assumption~\ref{ass:krr_source}, the density bounds in Assumption~\ref{ass:Regularity_density}, and the kernel and domain parameters, and therefore can be chosen independently of $z$. This completes the proof. \Halmos
\end{proof}
\smallskip

Lemma~\ref{lem:fixed-decision-krr} establishes a high-probability sup-norm error bound for the estimated objective function at each fixed $z\in\mathcal Z$, with constants independent of $z$. To control the supremum over $\mathcal Z$ in Equation~\eqref{eq:PtO_error_decomposition0}, we need to establish a bound that holds simultaneously for all $z\in\mathcal Z$. We therefore use a covering-number argument to extend the fixed-decision result and obtain the expected uniform approximation error in Lemma~\ref{lem:krr_uniformization}.

\begin{lemma}[Uniform approximation error for KRR]
\label{lem:krr_uniformization}
Suppose the conditions of Proposition~\ref{prop:krr_matern_uniform_error} hold. Then, the KRR-weighted objective $\hat h_n$ in Equation~\eqref{eq:KRRestimator} satisfies
\begin{equation}
\label{eq: uniform approx error of h}
\mathbb E_{\mathcal D_n,X} \left[ \sup_{z\in\mathcal Z} \Big|\hat h_n(z;X)-h(z;X)\Big| \right] = \mathcal{O}\left( n^{-\frac{\nu}{2\nu+d_x}}\sqrt{\log n} \right).
\end{equation}
\end{lemma}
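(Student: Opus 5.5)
Lemma~\ref{lem:krr_uniformization} follows from Lemma~\ref{lem:fixed-decision-krr} by discretizing $\mathcal Z$, applying a union bound over the net, and controlling the off-net error through Lipschitz continuity in $z$. Let $\epsilon_n := (\log n/n)^{\nu/(2\nu+d_x)}$ and let $\delta>0$ be a mesh size to be chosen. Since $\mathcal Z\subset\R^{d_z}$ is compact with diameter $D$, it has a $\delta$-net $\{z_1,\ldots,z_{N_\delta}\}$ with $N_\delta\le(1+2D/\delta)^{d_z}$. Fix $A>0$, for instance $A=1$. Lemma~\ref{lem:fixed-decision-krr} applies to each $z_j$ with constants independent of $z$, so the union bound gives
\[
\mathbb P_{\mathcal D_n}\Bigl(\max_{j\le N_\delta}\|\hat h_n(z_j;\cdot)-h(z_j;\cdot)\|_\infty> C_{A'}\epsilon_n\Bigr)\le N_\delta\, n^{-A'} .
\]
We will take $\delta$ polynomial in $n$, so $N_\delta$ is polynomial in $n$. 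Choosing $A'$ larger than $A$ by the degree of that polynomial then makes the right-hand side at most $n^{-A}$.

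The off-net step is the main obstacle. For the true objective, Assumption~\ref{ass:SAA-struct0} gives $|h(z;x)-h(z_j;x)|\le G\|z-z_j\|$ directly. For the estimator, the KRR weights are signed, so the bound becomes
\[
|\hat h_n(z;x)-\hat h_n(z_j;x)|\le G\|z-z_j\|\,\|\omega^{\mathrm{KRR}}_n(x)\|_1 .
\]
We therefore need a crude, polynomial-in-$n$ bound on $\|\omega^{\mathrm{KRR}}_n(x)\|_1$, uniform in $x$ and in the design. The plan is to use $\|(K_n+n\lambda_nI_n)^{-1}\|_{\mathrm{op}}\le (n\lambda_n)^{-1}$ together with $\|k_n(x)\|_2\le \sqrt n\,\sup K_\nu$, which give
\[
\|\omega_n\|_1\le\sqrt n\,\|\omega_n\|_2\le \sqrt n\cdot \sqrt n\,\sup K_\nu/(n\lambda_n)=O\bigl(n/\log n\bigr)
\]
deterministically. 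Choosing $\delta=n^{-2}$ then makes the discretization error $O(n^{-1})=o(\epsilon_n)$, while $\log N_\delta=O(\log n)$, which is compatible with the union bound above.

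To convert to expectation, let $\mathcal G_n$ be the good event on which the net bound holds. On $\mathcal G_n$,
\[
\sup_{z}\|\hat h_n(z;\cdot)-h(z;\cdot)\|_\infty\le C\epsilon_n+O(n^{-1}).
\]
On $\mathcal G_n^c$, we use the crude bound $|\hat h_n|\le M\|\omega_n\|_1=O(n)$ and $|h|\le M$. Taking $A=2$ makes the contribution of $\mathcal G_n^c$ equal to $O(n)\cdot n^{-2}=O(n^{-1})$. Since the sup-norm bound is uniform in $x$, taking $\mathbb E_{\mathcal D_n,X}$ yields $O(\epsilon_n)$. Finally, $\epsilon_n=n^{-\nu/(2\nu+d_x)}(\log n)^{\nu/(2\nu+d_x)}\le n^{-\nu/(2\nu+d_x)}\sqrt{\log n}$ because $\nu/(2\nu+d_x)<1/2$, which gives \eqref{eq: uniform approx error of h}. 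Measurability of the supremum over $z$ is handled by continuity of $\hat h_n$ and $h$ in $z$, which lets us restrict to a countable dense subset of $\mathcal Z$.
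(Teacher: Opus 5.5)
Your proposal is correct and follows essentially the same route as the paper. Both arguments use a polynomial-size net on $\mathcal Z$, bound the off-net error with the deterministic estimate $\sup_x\|\omega^{\mathrm{KRR}}_n(x)\|_1\le C/\lambda_n$ (operator norm of $(K_n+n\lambda_nI_n)^{-1}$ times $\|k_n(x)\|_2$), apply a union bound with Lemma~\ref{lem:fixed-decision-krr} at an enlarged exponent $A$, and use the crude bound $M(1+C/\lambda_n)$ on the bad event; the only cosmetic difference is your mesh $\delta=n^{-2}$ versus the paper's $\varepsilon_n=\lambda_n(\log n/n)^{\nu/(2\nu+d_x)}$, and both keep the net polynomial in $n$ and the discretization error within the target rate.
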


\smallskip
\begin{proof}{Proof of Lemma \ref{lem:krr_uniformization}}

Fix $\varepsilon>0$. By the compactness of $\mathcal Z$ in Assumption~\ref{ass:SAA-struct0}, there exists an $\varepsilon$-net $V_\varepsilon\subset\mathcal Z$ with $|V_\varepsilon|\le C\varepsilon^{-d_z}$ for all sufficiently small $\varepsilon$, where $C$ is independent of $\varepsilon$ (Chapter~4.2 of \citealtEC{Vershynin2018}). For any $z\in\mathcal Z$, choose $v\in V_\varepsilon$ such that $\|z-v\|\le\varepsilon$.
By the uniform Lipschitz condition in Assumption~\ref{ass:SAA-struct0}, we have $|h_z(x)-h_v(x)|\le G\varepsilon$ for all $x\in\mathcal X$. Moreover, by Equation~\eqref{eq:KRRestimator},
\[
|\hat h_n(z;x)-\hat h_n(v;x)| \,=\, \left|\sum_{i=1}^n\omega_{n,i}^{\rm KRR}(x)\big\{c(z;X_i,Y_i)-c(v;X_i,Y_i)\big\}\right| \,\le\, G\varepsilon\sum_{i=1}^n|\omega_{n,i}^{\rm KRR}(x)|.
\]

We next bound the $\ell_1$-norm of the KRR weights. Since $K_n\succeq0$ and $\lambda_n>0$, we have $\|(K_n+n\lambda_n I_n)^{-1}\|_{\rm op}\le (n\lambda_n)^{-1}$, where $\|A\|_{\rm op}:=\sup_{\|u\|_2=1}\|Au\|_2$ denotes the operator norm induced by the Euclidean norm. Moreover, the boundedness of $K_\nu$ on $\mathcal X$ implies $\sup_{x\in\mathcal X}\|k_n(x)\|_2\le C\sqrt n$. By the Cauchy--Schwarz inequality, we have \( \sum_{i=1}^n|\omega_{n,i}^{\rm KRR}(x)| \le \sqrt n\|\omega^{\rm KRR}(x)\|_2. \) Using the closed-form representation of the KRR weights and the definition of the operator norm, we obtain
\[
\begin{aligned}
\sup_{x\in\mathcal X}\sum_{i=1}^n|\omega_{n,i}^{\rm KRR}(x)|
&\,\le\, \sqrt n\sup_{x\in\mathcal X}\|\omega^{\rm KRR}(x)\|_2 \,=\, \sqrt n\sup_{x\in\mathcal X}\|(K_n+n\lambda_n I_n)^{-1}k_n(x)\|_2\\
&\,\le\, \sqrt n\|(K_n+n\lambda_n I_n)^{-1}\|_{\rm op}\sup_{x\in\mathcal X}\|k_n(x)\|_2 \,\le\, \sqrt n\cdot\frac{1}{n\lambda_n}\cdot C\sqrt n \,=\, \frac{C}{\lambda_n}.
\end{aligned}
\]

For any $z\in\mathcal Z$, choose $v\in V_\varepsilon$ such that $\|z-v\|\le\varepsilon$. By the triangle inequality and the preceding bounds, for every $x\in\mathcal X$,
\[
\begin{aligned}
|\hat h_n(z;x)-h_z(x)|
&\,\le\, |\hat h_n(v;x)-h_v(x)| +|\hat h_n(z;x)-\hat h_n(v;x)| +|h_v(x)-h_z(x)|\\
&\,\le\, |\hat h_n(v;x)-h_v(x)| +G\varepsilon\left(1+\frac{C}{\lambda_n}\right).
\end{aligned}
\]
Taking the supremum over $x\in\mathcal X$ and $z\in\mathcal Z$, we obtain
\begin{equation}
\label{eq:krr-uniformization-discretization}
\sup_{z\in\mathcal Z}\|\hat h_n(z;\cdot)-h_z\|_\infty \,\le\, \max_{v\in V_\varepsilon}\|\hat h_n(v;\cdot)-h_v\|_\infty + G\varepsilon\left(1+\frac{C}{\lambda_n}\right).
\end{equation}
Choose $\varepsilon_n=\lambda_n\left({\log n}/{n}\right)^{\frac{\nu}{2\nu+d_x}}$. Since $\lambda_n\asymp(\log n)/n$, the discretization term in Equation~\eqref{eq:krr-uniformization-discretization} satisfies \( G\varepsilon_n\left(1+{C}/{\lambda_n}\right) \le C\left({\log n}/{n}\right)^{\frac{\nu}{2\nu+d_x}}. \) Moreover, since $\lambda_n\asymp(\log n)/n$, we have 
\[
\varepsilon_n^{-1} \,=\, \mathcal{O}\left(\left({n}/{\log n}\right)^{1+\frac{\nu}{2\nu+d_x}}\right).
\]
Therefore, the covering-number bound gives \( |V_{\varepsilon_n}| \le C\varepsilon_n^{-d_z} \le Cn^{d_z(1+\frac{\nu}{2\nu+d_x})}. \) Applying the fixed-decision bound \eqref{eq:fixed-krr-main-bound} to all points in $V_{\varepsilon_n}$ and using the union bound, we obtain
\[
\mathbb P_{\mathcal D_n}\left( \max_{v\in V_{\varepsilon_n}}\|\hat h_n(v;\cdot)-h_v\|_\infty >C_A\left(\frac{\log n}{n}\right)^{\frac{\nu}{2\nu+d_x}} \right) \,\le\, |V_{\varepsilon_n}|n^{-A} \,\le\, Cn^{d_z\left(1+\frac{\nu}{2\nu+d_x}\right)-A}.
\]
Combining this bound with Equation~\eqref{eq:krr-uniformization-discretization} and the choice of $\varepsilon_n$, we obtain
\begin{equation}
\label{eq:krr-uniformization-tail}
\mathbb P_{\mathcal D_n}\left( \sup_{z\in\mathcal Z}\|\hat h_n(z;\cdot)-h_z\|_\infty >(C_A+C)\left(\frac{\log n}{n}\right)^{\frac{\nu}{2\nu+d_x}} \right) \,\le\, Cn^{d_z\left(1+\frac{\nu}{2\nu+d_x}\right)-A}.
\end{equation}
Since Equation~\eqref{eq:fixed-krr-main-bound} holds for any fixed $A>0$, we choose $A>d_z(1+\nu/(2\nu+d_x))+2$. Then, for some $\delta>0$, the right-hand side of Equation~\eqref{eq:krr-uniformization-tail} is bounded by $Cn^{-2-\delta}$.

We next derive the expected uniform approximation error. By Assumption~\ref{ass:SAA-struct0}, we have $|c(z;x,y)|\le M$ and $|h_z(x)|\le M$ uniformly over $z\in\mathcal Z$ and $x\in\mathcal X$. Together with the preceding bound on the KRR weights, this gives \( \sup_{z\in\mathcal Z}\|\hat h_n(z;\cdot)-h_z\|_\infty \le M\left(1+{C}/{\lambda_n}\right). \) Combining this deterministic bound with Equation~\eqref{eq:krr-uniformization-tail}, we obtain
\[
\mathbb E_{\mathcal D_n}\left[ \sup_{z\in\mathcal Z}\|\hat h_n(z;\cdot)-h_z\|_\infty \right] \,\le\, (C_A+C)\left(\frac{\log n}{n}\right)^{\frac{\nu}{2\nu+d_x}} + CM(1+\lambda_n^{-1})n^{-2-\delta} \,=\, \mathcal O\left(\left(\frac{\log n}{n}\right)^{\frac{\nu}{2\nu+d_x}}\right),
\]
where the last equality follows from $\lambda_n\asymp(\log n)/n$ and
\[
(1+\lambda_n^{-1})n^{-2-\delta} \,=\, \mathcal O\left(\frac{n^{-1-\delta}}{\log n}\right) \,=\, o\left(\left(\frac{\log n}{n}\right)^{\frac{\nu}{2\nu+d_x}}\right).
\]

Finally, because \( \sup_{z\in\mathcal Z}|\hat{h}_n(z;X) - h(z;X)| \le \sup_{z\in\mathcal Z}\|\hat{h}_n(z;\cdot) - h(z;\cdot)\|_\infty\), we have
\[
\mathbb E_{\mathcal D_n,X} \left[ \sup_{z\in\mathcal Z} |\hat h_n(z;X)-h(z;X)| \right] \,=\, \mathcal O\left(\left(\frac{\log n}{n}\right)^{\frac{\nu}{2\nu+d_x}}\right).
\]
Since \(\nu/(2\nu+d_x)<1/2\), we obtain Equation~\eqref{eq: uniform approx error of h}. Therefore, we complete the proof. \Halmos
\end{proof}
\smallskip

It remains to combine the uniform approximation bound in Lemma~\ref{lem:krr_uniformization} with the regret decomposition in Equation~\eqref{eq:PtO_error_decomposition0} to prove Proposition~\ref{prop:krr_matern_uniform_error}.

\smallskip
\proof{Proof of Proposition \ref{prop:krr_matern_uniform_error}}
For each \(x\in\mathcal{X}\), let $ z^*(x)\in\arg\min_{z\in\mathcal Z}h(z;x).$ Such a minimizer exists because \(\mathcal Z\) is compact and \(h(\cdot;x)\) is continuous. Similarly, \(\hat z_n(x)\) exists because \(\hat h_n(\cdot;x)\) is continuous on compact \(\mathcal Z\). By Equation~\eqref{eq:PtO_error_decomposition0}, we have \( h(\hat z_n(X);X)-h(z^*(X);X) \le 2\sup_{z\in\mathcal Z} |\hat h_n(z;X)-h(z;X)|. \) Taking expectations over \((\mathcal D_n,X)\) on the both sides yields
\[
\mathbb E_{\mathcal D_n,X} \Big[ h(\hat z_n(X);X)-h(z^*(X);X) \Big] \,\le\, 2\, \mathbb E_{\mathcal D_n,X} \left[ \sup_{z\in\mathcal Z} |\hat h_n(z,X)-h(z;X)| \right].
\]
Applying Lemma~\ref{lem:krr_uniformization} to the right-hand side of the above inequality, we complete the proof.
\Halmos
\endproof

\subsection{Proof of Theorem~\ref{thm:pto_lower_bound}}\label{app: proof of theorem 1}

We prove Theorem~\ref{thm:pto_lower_bound} by constructing a finite subclass of problem instances in $\mathcal P_\beta$. The subclass consists of a common quadratic cost function and a finite collection of data-generating models for $(X,Y)$, whose induced conditional objectives are infinitely differentiable in $x$. We show that, for any PtO procedure satisfying Assumption~\ref{ass:weights_pto}, the expected regret averaged over this subclass is bounded below by a constant multiple of $n^{-4/(d_x+4)}$. Since the worst-case expected regret over $\mathcal P_\beta$ is at least this average, the desired lower bound~follows.

\smallskip
\begin{proof}{Proof of Theorem~\ref{thm:pto_lower_bound}}
For notational simplicity, write $d=d_x$. To construct the finite subclass of problem instances we consider the following collection of conditional mean functions: \( \mathcal F_0:=\{f_1,\ldots,f_d,f_{\mathrm q}\}, \) where, for $x=(x_1,\ldots,x_d)\in[0,1]^d$, \( f_j(x)=x_j \text{ for all } j=1,\ldots,d, \text{ and } f_{\mathrm q}(x)=\frac12\|x\|^2. \)

For each $f\in\mathcal F_0$, we construct a problem instance by specifying the joint distribution of $(X,Y)$. Let \( X\sim\operatorname{Unif}([0,1]^d). \) Fix a constant $\sigma>0$, which determines the noise level in the outcome. Let $\varepsilon$ be independent of $X$ and satisfy \( \mathbb P(\varepsilon=\sigma) = \mathbb P(\varepsilon=-\sigma) = \frac12. \) For each $f\in\mathcal F_0$, define \( Y=f(X)+\varepsilon, \) and let $P_f$ denote the resulting joint distribution of $(X,Y)$. Since $\varepsilon$ has mean zero and variance $\sigma^2$,
\[
\mathbb E_f\left[Y\mid X=x\right] \,=\, f(x), \qquad \operatorname{Var}_f\left(Y\mid X=x\right) \,=\, \sigma^2
\]
for every $x\in[0,1]^d$. Let \( F_{\max}:=\max\left\{1,\frac d2\right\} \text{ and } R:=F_{\max}+\sigma. \) Since \(0\le f(x)\le F_{\max}\) for every \(f\in\mathcal F_0\) and \(x\in[0,1]^d\), and \(Y=f(X)+\varepsilon\) with \(\varepsilon\in\{-\sigma,\sigma\}\), the support of \(Y\) under every \(P_f\) is contained in $[-R,R]$. We therefore define \( \mathcal Y=[-R,R] \) as the outcome space for \(Y\).

For the problem instances constructed here, we take the feasible set to be \( \mathcal Z=[-R,R]. \) For all instances in the subclass, use the same cost function
\[
c(z;x,y) \,=\, \frac12 z^2-zy, \quad \text{for all } z\in\mathcal Z.
\]
The cost does not depend explicitly on $x$. It is convex in $z$, and
for $z_1,z_2\in\mathcal Z$ and $y\in\mathcal Y$,
\[
|c(z_2;x,y)-c(z_1;x,y)| \,=\, |z_2-z_1| \left| \frac{z_1+z_2}{2}-y \right|  \,\le\, 2R|z_2-z_1|,
\]
which is uniformly Lipschitz in $z$. Moreover, \(|c(z;x,y)| \le \frac12 R^2+R^2 = \frac32R^2. \) Hence all instances satisfy Assumption~\ref{ass:SAA-struct0} with fixed constants $D=2R$, $G=2R$, and $M=3R^2/2$.

The induced conditional objective under $P_f$ is \( h_f(z;x) = \mathbb E_f\left[ c(z;X,Y)| X=x \right] = \frac12 z^2-zf(x). \) Since every $f\in\mathcal F_0$ is a polynomial of degree at most two, $x\mapsto h_f(z;x)$ is $\beta$-times continuously differentiable for every nonnegative integer $\beta$, with all derivatives up to order $\beta$ uniformly bounded over $z\in\mathcal Z$ and $x\in[0,1]^d$. In addition, the covariate density is identically one on $[0,1]^d$, and the conditional variance of $Y$ is exactly $\sigma^2$. Therefore, \( \{(P_f,c):f\in\mathcal F_0\}\subset\mathcal P_\beta. \)
It is therefore sufficient to lower bound the average regret over this finite subclass. In particular,
\begin{eqnarray}
\lefteqn{ \sup_{(P,c)\in\mathcal P_\beta} \mathbb E \left[ \mathbb E \left[ h(\hat z_n(X);X)-h(z^\ast(X);X) \,\big|\, \mathcal{D}_n \right] \right] }\nonumber\\
&\ge & \max_{f\in\mathcal F_0} \mathbb E_f \left[ h_f(\hat z_n(X);X) - \min_{z\in\mathcal Z}h_f(z;X) \right] \,\ge\, \frac{1}{|\mathcal F_0|} \sum_{f\in\mathcal F_0} \mathbb E_f \left[ h_f(\hat z_n(X);X) - \min_{z\in\mathcal Z}h_f(z;X) \right].~~~~~
\end{eqnarray}
We next lower bound the last quantity. The analysis is divided into four steps.

\smallskip

Step 1: We first show that, under the constructed quadratic objective, the decision regret is proportional to the squared estimation error. We then apply the standard bias--variance decomposition to express the expected regret in terms of the squared bias and variance of the weighted estimator.

Fix a query point $x\in[0,1]^d$ and training covariates $X_{1:n}=(X_1,\ldots,X_n)$. By Assumption~\ref{ass:weights_pto}, the weights depend only on $(x,X_{1:n})$ and are then deterministic after conditioning on $x$ and $X_{1:n}$. Define the normalized weights \( \widetilde\omega_{n,i}(x) := \omega_{n,i}(x)/\sum_{j=1}^n\omega_{n,j}(x)\). Assumption~\ref{ass:weights_pto} implies \( \widetilde\omega_{n,i}(x)\ge0\) and \(\sum_{i=1}^n\widetilde\omega_{n,i}(x)=1.\)

Under $P_f$, the empirical PtO objective can be written as 
\[
\hat h_{n,f}(z;x) \,=\, \sum_{i=1}^n \omega_{n,i}(x) \left( \frac12 z^2-zy_i \right) \,=\, \Bigg(\sum_{i=1}^n\omega_{n,i}(x)\Bigg) \Bigg[ \frac12z^2 - z\sum_{i=1}^n \widetilde\omega_{n,i}(x)y_i \Bigg]. 
\]
Since every $y_i\in[-R,R]$ and the normalized weights form a convex combination, \( \sum_{i=1}^n\widetilde\omega_{n,i}(x)y_i\in[-R,R]=\mathcal Z. \) Hence the minimizer over the compact feasible set is exactly $ \hat z_n(x) = \sum_{i=1}^n \widetilde\omega_{n,i}(x)y_i.$ Similarly, since $f(x)\in\mathcal Z$, $ z_f^*(x) = \arg\min_{z\in\mathcal Z} h_f(z;x) = f(x).$ Consequently,
$$
h_f(\hat z_n(x);x)-h_f(z_f^*(x);x) \,=\, \frac12 \left[ \hat z_n(x)-f(x) \right]^2.
$$

Define the conditional bias
\begin{equation}
B_f(x) \,:=\, \sum_{i=1}^n \widetilde\omega_{n,i}(x)f(X_i) -f(x)
\end{equation}
and the variance proxy $ S(x) := \sum_{i=1}^n \widetilde\omega_{n,i}^2(x).$ Since \( Y_i=f(X_i)+\varepsilon_i, \) where $\varepsilon_1,\ldots,\varepsilon_n$ are conditionally independent, mean zero, and have variance $\sigma^2$, we have \( \hat z_n(x)-f(x) = B_f(x) + \sum_{i=1}^n \widetilde\omega_{n,i}(x)\varepsilon_i. \) Therefore, we have the decomposition,
\begin{align}
\mathbb E_f \left[ h_f\big(\hat z_n(x);x\big)-h_f\big(z_f^*(x);x\big) \,\big|\, X_{1:n} \right] \,=\,  \frac12 \mathbb E_f \Bigg[ \Big( B_f(x) + \sum_{i=1}^n \widetilde\omega_{n,i}(x)\varepsilon_i \Big)^2 \,\Big|\, X_{1:n} \Bigg] \,=\, \frac12 B_f(x)^2 + \frac{\sigma^2}{2} S(x). \label{eq:lower-bound-bias-variance}
\end{align}

\smallskip

Step 2: To establish a lower bound on the average squared bias over the finite subclass, we exploit its construction to relate this average to a geometric quantity that depends only on the training covariates and the PtO weights, rather than on the individual problem instances. Specifically, define the spatial dispersion of the weights around the query point by
\[
V(x) \,:=\, \sum_{i=1}^n
\widetilde\omega_{n,i}(x)\|X_i-x\|^2.
\]
A smaller $V(x)$ indicates that the weights are concentrated on observations close to the query point, whereas a larger $V(x)$ indicates that more weight is assigned to observations
farther away.

For the quadratic function $f_{\mathrm q}$, the conditional bias is \( B_{\mathrm q}(x) = \frac 12 ( \sum_{i=1}^n \widetilde\omega_{n,i}(x)\|X_i\|^2 - \|x\|^2 ), \) whereas for $f_j(x)=x_j$, the conditional bias is \( B_j(x) = \sum_{i=1}^n \widetilde\omega_{n,i}(x)X_{i,j} - x_j. \) Expanding $\|X_i-x\|^2$ and using $\sum_i\widetilde\omega_{n,i}(x)=1$ gives the exact identity $ V(x) = 2 B_{\mathrm q}(x) - 2 \sum_{j=1}^d x_j B_j(x). $ Since $x\in[0,1]^d$, we have $|x_j|\le1$, and hence \( \frac 12 V(x) \le |B_{\mathrm q}(x)| + \sum_{j=1}^d|B_j(x)|. \) By the Cauchy--Schwarz inequality,
\[ 
\Bigg( |B_{\mathrm q}(x)| + \sum_{j=1}^d|B_j(x)| \Bigg)^2 \,\le\, (d+1) \Bigg( B_{\mathrm q}(x)^2 + \sum_{j=1}^d B_j(x)^2 \Bigg).
\]
Therefore,
\begin{equation}
\frac{1}{d+1} \sum_{f\in\mathcal F_0} B_f(x)^2 \,\ge\, \frac{1}{4(d+1)^2} V(x)^2 \,=:\, C_1V(x)^2.
\label{eq:lower-bound-bias-dispersion}
\end{equation}

Define the average conditional regret over the finite subclass by
\[
\overline R_n(x) \,:=\, \frac{1}{d+1} \sum_{f\in\mathcal F_0} \mathbb E_f \left[ h_f(\hat z_n(x);x)-h_f(z_f^*(x);x) \mid X_{1:n} \right].
\]
Combining Equations~\eqref{eq:lower-bound-bias-variance} and \eqref{eq:lower-bound-bias-dispersion}, we obtain
\begin{equation}
\overline R_n(x) \,\ge\, \frac{C_1}{2}V(x)^2 + \frac{\sigma^2}{2}S(x).
\label{eq:lower-bound-regret-bv}
\end{equation}

\smallskip

Step 3: In this step, we give a lower bound on $\overline R_n(x)$ by establishing a tradeoff between the spatial dispersion $V(x)=\sum_{i=1}^n\widetilde\omega_{n,i}(x)\|X_i-x\|^2$ and the variance proxy $S(x)=\sum_{i=1}^n\widetilde\omega_{n,i}^2(x)$. By the Cauchy--Schwarz inequality, reducing $S(x)$ requires spreading the weights across more observations, which, combined with the spatial distribution of the training covariates, forces a larger $V(x)$ on a suitable event. This tradeoff yields a lower bound on $\overline R_n(x)$.

For a fixed query point $x$, let \( B(x,r) := \{u\in\mathbb R^d:\|u-x\|\le r\} \) and define \( N_n(x,r) := \sum_{i=1}^n \mathbb I\{X_i\in B(x,r)\}. \) Let $v_d$ denote the volume of the Euclidean unit ball in $\mathbb R^d$. Since $X$ is uniform on $[0,1]^d$, \( \mathbb P(X_i\in B(x,r)) \le v_dr^d \) uniformly over $x\in[0,1]^d$. For $k=1,\ldots,n$, define $r_k := ( \frac{k}{4ev_dn} )^{1/d}.$ Then \( p_k := \mathbb P(X_i\in B(x,r_k)) \le \frac{k}{4en}. \) Since $N_n(x,r_k)\sim\operatorname{Bin}(n,p_k)$, it is stochastically dominated by \( Z_k \sim \operatorname{Bin} ( n,\frac{k}{4en} ).\) Writing $\mu_k=\mathbb E[Z_k]=k/(4e)$ and applying the standard Chernoff bound,
$$
\mathbb P\bigl(N_n(x,r_k)\ge k\bigr) \,\le\, \mathbb P(Z_k\ge k) \,\le\, \left( \frac{e\mu_k}{k} \right)^k \,=\, 4^{-k}.
$$

Define the event \( \mathcal E(x) := \bigcap_{k=1}^n \{N_n(x,r_k)<k\}. \) By the union bound, \( \mathbb P(\mathcal E(x)^c) \le \sum_{k=1}^n4^{-k} < \frac13 \) and
\begin{equation}
\mathbb P(\mathcal E(x)) \,\ge\, \frac23.
\label{eq:lower-bound-sparse-event-prob}
\end{equation}
Let $R_{(k)}(x)$ denote the distance from $x$ to its $k$th nearest training covariate. Then, \( \mathcal E(x) = \left\{ R_{(k)}(x)>r_k,~ k=1,\ldots,n \right\}. \) This event ensures that the $k$th nearest training covariate cannot be too close to the query point $x$, preventing the training covariates from being excessively concentrated around $x$. Consequently, spreading the weights across more observations to reduce $S(x)$ necessarily assigns substantial weight to observations farther from $x$, providing a lower bound on the spatial dispersion $V(x)$. We now formalize this intuition by considering two cases according to the value of $S(x)$.

\textit{Case 1: $S(x)\le1/8$.} By Cauchy--Schwarz inequality, \( 1 = ( \sum_{i=1}^n\widetilde\omega_{n,i}(x) )^2 \le nS(x), \) so $S(x)\ge1/n$. Define \( K := \lfloor \frac{1}{4S(x)} \rfloor. \) On $\{S(x)\le1/8\}$, we have $K\ge2$, while $S(x)\ge1/n$ ensures $K\le n$. Let $I_K$ denote the indices of the $K$ nearest training covariates to $x$. By Cauchy--Schwarz inequality,
\begin{equation}
W_{\mathrm{in}} \,:=\, \sum_{i\in I_K} \widetilde\omega_{n,i}(x) \,\le\, \sqrt{ K \sum_{i\in I_K} \widetilde\omega_{n,i}^2(x) } \,\le\, \sqrt{KS(x)} \,\le\, \frac12.
\end{equation}
Hence the total weight outside the $K$ nearest neighbors satisfies \( W_{\mathrm{out}}\ge\frac12. \) On the event $\mathcal E(x)$, every point outside $I_K$ is at distance at least $R_{(K)}(x)>r_K$ from $x$. Therefore,
\begin{equation}
V(x) \,=\, \sum_{i=1}^n \widetilde\omega_{n,i}(x) \|X_i-x\|^2 \,\ge\, W_{\mathrm{out}}r_K^2 \,\ge\, \frac12 \left( \frac{K}{4ev_dn} \right)^{2/d}.
\label{eq:lower-bound-dispersion-knn}
\end{equation}
Since $1/(4S(x))\ge2$, the inequality $\lfloor u\rfloor\ge u/2$ for $u\ge2$ gives \( K \ge \frac{1}{8S(x)}. \) Substituting this into Equation~\eqref{eq:lower-bound-dispersion-knn} yields
\begin{equation}
V(x)^2 \,\ge\, c_2 \bigl(nS(x)\bigr)^{-4/d},
\label{eq:lower-bound-dispersion-S}
\end{equation}
where \( c_2 := \frac14 (32ev_d)^{-4/d} >0. \)
Combining Equations~\eqref{eq:lower-bound-regret-bv} and \eqref{eq:lower-bound-dispersion-S}, on $\{S(x)\le1/8\}\cap\mathcal E(x)$ we have
$$ \overline R_n(x) \,\ge\, A n^{-4/d}S(x)^{-4/d} + B S(x), $$
where \( A:=\frac{C_1c_2}{2}, \text{ and } B:=\frac{\sigma^2}{2}. \) Let $q:=4/d$. For $s>0$, define \( g_n(s) = A n^{-q}s^{-q}+Bs. \) Elementary calculus gives the minimizer \( s_n^* = ( qA/B )^{1/(q+1)} n^{-q/(q+1)}, \) and hence
\begin{equation}
\inf_{s>0}g_n(s) \,=\, c_3 n^{-q/(q+1)} \,=\, c_3 n^{-4/(d+4)},
\end{equation}
where \(c_3:=A\left( qA/B \right)^{-q/(q+1)}+B\left(qA/B \right)^{1/(q+1)}>0.\) Thus, on $\{S(x)\le1/8\}\cap\mathcal E(x)$, $\overline R_n(x) \ge c_3 n^{-4/(d+4)}.$

\textit{Case 2: $S(x)>1/8$.}
In this case, the variance term in Equation~\eqref{eq:lower-bound-regret-bv} alone gives \( \overline R_n(x) \ge \frac{\sigma^2}{2}S(x) > \frac{\sigma^2}{16}. \) Since $n^{-4/(d+4)}\le1$ for $n\ge1$, $\overline R_n(x) \ge \frac{\sigma^2}{16} n^{-4/(d+4)}.$ The two cases partition the sample space. Therefore, with \( c_4 := \min \{ c_3,\frac{\sigma^2}{16} \} >0, \) we obtain, on $\mathcal E(x)$, $\overline R_n(x) \ge c_4 n^{-4/(d+4)}.$

\smallskip

Step 4: We finally complete the minimax lower bound. 
Taking expectation with respect to the common training-covariate law and using $\overline R_n(x)\ge0$ together with Equation~\eqref{eq:lower-bound-sparse-event-prob},
\begin{equation}
\mathbb E_{X_{1:n}} [\overline R_n(x)] \,\ge\, \mathbb E_{X_{1:n}} \left[ \overline R_n(x) \mathbb I_{\mathcal E(x)} \right] \,\ge\, c_4n^{-4/(d+4)} \mathbb P(\mathcal E(x)) \,\ge\, \frac23c_4n^{-4/(d+4)}.
\label{eq:lower-bound-pointwise-expectation}
\end{equation}
The bound is uniform in $x\in[0,1]^d$. Let $X$ be an independent test covariate drawn from the same uniform distribution. Integrating Equation~\eqref{eq:lower-bound-pointwise-expectation} with respect to $X$ gives
$$ 
\frac{1}{d+1} \sum_{f\in\mathcal F_0} \mathbb E_f \left[ h_f\big(\hat z_n(X);X\big) - h_f\big(z_f^*(X);X\big) \right] \,\ge\, \frac23c_4n^{-4/(d+4)}. 
$$
Consequently,
\[
\max_{f\in\mathcal F_0} \mathbb E_f \left[ h_f\big(\hat z_n(X);X\big) - h_f\big(z_f^*(X);X\big) \right] \,\ge\, \frac23c_4n^{-4/(d+4)}.
\] Since
\( \{(P_f,c):f\in\mathcal F_0\} \subset\mathcal P_\beta, \)
we conclude that
\[
\sup_{(P,c)\in\mathcal P_\beta} \mathbb E \left[ h\big(\hat z_n(X);X\big) - h\big(z^*(X);X\big) \right] \,\ge\, c_0 n^{-4/(d+4)},
\]
where \( c_0:=\frac23c_4>0. \) Finally, by the tower property,
\[
\mathbb E \left[ h\big(\hat z_n(X);X\big)-h\big(z^*(X);X\big) \right] \,=\, \mathbb E \left[ \mathbb E \left[ h\big(\hat z_n(X);X\big)-h\big(z^*(X);X\big) \mid D_n \right] \right].
\]
Therefore,
\[
\sup_{(P,c)\in\mathcal P_\beta} \mathbb E \left[ \mathbb E \left[ h\big(\hat z_n(X);X\big) - h\big(z^*(X);X\big) \mid D_n \right] \right] \,\ge\, c_0n^{-4/(d_x+4)}.
\]
Therefore, we complete the proof. \Halmos
\end{proof}

\section{Proofs in Section~\ref{sec:GtO}}

In this section, we prove Lemmas~\ref{lem:SAA-error0}--\ref{lem:model-error0}, Theorem~\ref{thm:regret-GtO0}, and Remark~\ref{remark: strongly convex} respectively.

\subsection{Proof of Lemma~\ref{lem:SAA-error0}}\label{app: proof of lemma 1}

\proof{Proof.}
We suppress the conditioning on $\mathcal{D}_n$ and $X=x_0$ in the notation and write expectations and probabilities with respect to the simulation draws $\hat{Y}_1,\hat{Y}_2,\dots,\hat{Y}_m$. For any $z\in\mathcal{Z}$, we have $\bar{h}(z;x_0) \le \hat{h}(z;x_0) + \sup_{z\in\mathcal{Z}} |\hat{h}(z;x_0)-\bar{h}(z;x_0) |$ and $\hat{h}(z;x_0)  \le \bar{h}(z;x_0) + \sup_{z\in\mathcal{Z}} |\hat{h}(z;x_0) -\bar{h}(z;x_0)|$. Then,
\begin{equation*}\label{eq:difh}
\begin{aligned}
	\bar{h}(\hat z_{n,m}^\ast(x_0);x_0) &\,\leq\, \hat{h}(\hat z_{n,m}^\ast(x_0);x_0) + \sup_{z\in\mathcal{Z}} |\hat{h}(z;x_0) - \bar{h}(z;x_0) | \\
	&\,\leq\, \hat{h}(\bar z_n^\ast(x_0);x_0) + \sup_{z\in\mathcal{Z}} |\hat{h}(z;x_0) - \bar{h}(z;x_0)  |\ \leq\ \bar{h}(\bar z_n^\ast(x_0);x_0) + 2 \sup_{z\in\mathcal{Z}} |\hat{h}(z;x_0) - \bar{h}(z;x_0) |
\end{aligned}
\end{equation*}
which implies 
\begin{equation}\label{eq:SAA-regret-reduction}
	\bar h(\hat z_{n,m}^\ast(x_0);x_0) - \bar h(\bar z_n^\ast(x_0);x_0)
	\;\le\;
	2\sup_{z\in\mathcal{Z}} |\hat h(z;x_0) - \bar h(z;x_0)|.
\end{equation}
Define the empirical objective error process
\(
H_m(z)
:=
\hat h(z;x_0) - \bar h(z;x_0) 
=
\frac{1}{m}\sum_{j=1}^m c(z; x_0,\hat{Y}_j) - \E[c(z; x_0,\hat{Y})],
\)
where $\hat{Y}, \hat{Y}_1,\hat{Y}_2,\ldots,\hat{Y}_m \sim \hat{F}_{Y|X}(\cdot|x_0)$ are i.i.d.
Taking expectations on both sides of Equation~\eqref{eq:SAA-regret-reduction},
\begin{equation}\label{eq:SAA-reduction0}
\E\left[ \bar h(\hat z_{n,m}^\ast(x_0);x_0) - \bar h(\bar z_n^\ast(x_0);x_0) \right]
\;\le\;
2\; \E\left[ \sup_{z\in\mathcal{Z}} |H_m(z)|\right].
\end{equation}
Thus it remains to bound  $\E[\sup_{z\in\mathcal{Z}} |H_m(z)|]$.

We bound the expected supremum by a multiscale covering argument. Fix a reference point $z_{\mathrm{ref}}\in\mathcal Z$ and define $r_k:=D2^{-k}$ for $k\in\mathbb{N}$, set $\mathcal N_0:=\{z_{\mathrm{ref}}\}$. Without loss of generality, we assume $D>0$. For each $k\geq1$, choose an $r_k$-net $\mathcal N_k\subseteq\mathcal Z$, and write $N_k:=|\mathcal N_k|$. The volumetric covering bound (see Proposition~4.2.12 of \citealtEC{Vershynin2018}) gives
\[
N_k \,\leq\, \left(1+\frac{2D}{r_k}\right)^{d_z} \,=\, \left(1+2^{k+1}\right)^{d_z} \,\leq\,  2^{(k+2)d_z}.
\]
For every $z\in\mathcal Z$, choose $\pi_k(z)\in\mathcal N_k$ such that $\|z-\pi_k(z)\|\leq r_k$, and set $\pi_0(z):=z_{\mathrm{ref}}$. Then
\[
\|\pi_k(z)-\pi_{k-1}(z)\|
\,\leq\, 
r_k+r_{k-1}
\,=\, 
3r_k,
\quad k\geq1.
\]
Define the nonempty finite set
\(
\mathcal A_k:=
\{
(u,v)\in\mathcal N_k\times\mathcal N_{k-1}:
\|u-v\|\leq3r_k
\}.
\)
In particular, $|\mathcal A_k|\leq N_kN_{k-1}$.

We first control the empirical increments on $\mathcal A_k$.
For $(u,v)\in\mathcal A_k$, define
\(
W_j(u,v):=
c(u;x_0,\hat Y_j)-c(v;x_0,\hat Y_j),~j=1,2,\ldots,m.
\)
By Assumption~\ref{ass:SAA-struct0},
\(
|W_j(u,v)|
\leq G\|u-v\|
\leq3Gr_k.
\)
Moreover,
\[
H_m(u)-H_m(v)
\,=\,
\frac1m\sum_{j=1}^m
\bigl(W_j(u,v)-\mathbb E[W_j(u,v)]\bigr).
\]
Hoeffding's lemma and independence of the simulation draws imply
that, for every $\lambda\in\mathbb R$,
\[
\mathbb E\left[\exp\!\left\{
\lambda\bigl(H_m(u)-H_m(v)\bigr)
\right\}\right]
\,\leq\,
\exp\!\left\{
\frac{\lambda^2}{2}
\left(\frac{3Gr_k}{\sqrt m}\right)^2
\right\}.
\]
Thus every increment is centered and sub-Gaussian with parameter
$\sigma_k:=3Gr_k/\sqrt m$.
For every $\lambda>0$,
\[
\mathbb E \left[ \max_{(u,v)\in\mathcal A_k}
|H_m(u)-H_m(v)| \right]
\,\leq\,
\frac1\lambda
\log\mathbb E \left[
\exp\!\left\{
\lambda\max_{(u,v)\in\mathcal A_k}|H_m(u)-H_m(v)|
\right\}\right]
\,\leq\,
\frac{\log(2|\mathcal A_k|)}{\lambda}
+\frac{\lambda\sigma_k^2}{2},
\]
where the first inequality holds by Jensen's inequality and the second inequality holds by applying the exponential bound to both signs of all increments. 
Optimizing over $\lambda>0$ yields
\[
\mathbb E \left[\max_{(u,v)\in\mathcal A_k}
|H_m(u)-H_m(v)| \right]
\,\leq\,
\sigma_k\sqrt{2\log(2|\mathcal A_k|)}.
\]
Since $N_0=1$ and $N_k\leq2^{(k+2)d_z}$ for $k\geq1$, we have
\(
\log(2|\mathcal A_k|)
\leq \log(2N_kN_{k-1})
\leq [1+(2k+3)d_z]\log2
\leq 2d_z(k+2)\log2.
\)
Consequently,
\[
\mathbb E \left[\max_{(u,v)\in\mathcal A_k}
|H_m(u)-H_m(v)| \right]
\,\leq\,
\frac{6GD\sqrt{d_z\log2}}{\sqrt m}
2^{-k}\sqrt{k+2}.
\]

We next sum these increments across scales.
Because both $\hat h$ and $\bar h$ are $G$-Lipschitz on $\mathcal Z$, we have
\(
|H_m(z)-H_m(z')|
\leq2G\|z-z'\|.
\)
For any integer $K\geq1$, the telescoping identity
\[
H_m(z)
\,=\,
H_m(z_{\mathrm{ref}})
+\sum_{k=1}^K
\bigl(H_m(\pi_k(z))-H_m(\pi_{k-1}(z))\bigr)
+H_m(z)-H_m(\pi_K(z))
\]
therefore gives
\[
\sup_{z\in\mathcal Z}|H_m(z)|
\,\leq\,
|H_m(z_{\mathrm{ref}})|
+\sum_{k=1}^K
\max_{(u,v)\in\mathcal A_k}|H_m(u)-H_m(v)|
+2Gr_K.
\]
At the fixed reference point $z_{\mathrm{ref}}$, independence and the bounded-cost
assumption give
\[
\mathbb E \left[ |H_m(z_{\mathrm{ref}})| \right]
\,\leq\,
\sqrt{\mathbb E \left[ |H_m(z_{\mathrm{ref}})|^2 \right]}
\,=\,
\sqrt{
\frac{
\operatorname{Var}\left(c(z_{\mathrm{ref}};x_0,\hat Y)\right)
}{m}
}
\,\leq\,
\frac{M}{\sqrt m}.
\]
Taking expectations in the supremum bound yields
\[
\mathbb E \left[\sup_{z\in\mathcal Z}|H_m(z)| \right]
\,\leq\,
\frac{M}{\sqrt m}
+
\frac{6GD\sqrt{d_z\log2}}{\sqrt m}
\sum_{k=1}^K2^{-k}\sqrt{k+2}
+
2GD2^{-K}.
\]
Since $\sqrt{k+2}\leq k+1$ for $k\geq1$ and
\(
\sum_{k=1}^{\infty}2^{-k}\sqrt{k+2}
\leq
\sum_{k=1}^{\infty}2^{-k}(k+1)
=3.
\)
Letting $K\to\infty$, we conclude that
\[
\mathbb E \left[ \sup_{z\in\mathcal Z}|H_m(z)| \right]
\,\leq\,
\frac{M+18GD\sqrt{d_z\log2}}{\sqrt m}.
\]

Finally, applying the regret comparison established above gives
\[
\mathbb E\!\left[
\bar h(\hat z_{n,m}^*(x_0);x_0)
-\bar h(\bar z_n^*(x_0);x_0)
\right]
\,\leq\,
2\mathbb E \left[ \sup_{z\in\mathcal Z}|H_m(z)| \right]
\,\leq\,
\frac{2M+36GD\sqrt{d_z\log2}}{\sqrt m}.
\]
Thus we may take
$C_{\mathrm{SAA}}:=2M+36GD\sqrt{d_z\log2}$.
Therefore, we complete the proof. \Halmos
\endproof

\subsection{Proof of Lemma~\ref{lem:model-error0}}\label{app: proof of lemma 2}

\proof{Proof.}
By Assumption~\ref{ass:SAA-struct0}, we have $|c(z; x_0,y)|\le M$ for all $y$, so a standard characterization of total variation distance (via test functions with bounded range) yields
\begin{equation}\label{eq:LipofExp0}
\big|h(z;x_0) - \bar{h}(z;x_0)\big|
\,\le\, 2M\,\mathrm{TV}(F_{Y|X}(\cdot|x_0),\hat{F}_{Y|X}(\cdot|x_0))
\quad\text{for all }z\in\mathcal{Z}.
\end{equation}
Taking expectations with respect to $X$ and then with respect to $\mathcal{D}_n$ yields
\[
\E\Big[\E\Big[\sup_{z\in\mathcal{Z}}|h(z;X) - \bar{h}(z;X) | \,\big|\, \mathcal{D}_n \Big]\Big] \,\le\, 2M\, \E\Big[\E\big[\mathrm{TV}(\hat{F}_{Y|X}(\cdot|X),F_{Y|X}(\cdot|X)) \,\big|\, \mathcal{D}_n \big]\Big].
\]
Applying Assumption~\ref{ass: TV-rate}, we have Equation~\eqref{eq:model-error-bound-regret}. Therefore, we complete the proof. \Halmos
\endproof

\subsection{Proof of Theorem~\ref{thm:regret-GtO0}}\label{app: proof of theorem 2}

\proof{Proof.}
Taking expectations with respect to $X$ and then with respect to $\mathcal{D}_n$ on both sides of Equation~\eqref{eq:error_decomposition0}, and applying Lemmas~\ref{lem:SAA-error0} and~\ref{lem:model-error0} yields
\[
\E\Big[\E\Big[\mathcal{R}(X) \,\big|\, \mathcal{D}_n \Big]\Big] \;\le\; C_{\mathrm{SAA}}\,m^{-1/2} + 4M\,C_{\mathrm{M}}\,n^{-\alpha_0}(\log n)^{\alpha_1}.
\]
Setting $C := C_{\mathrm{SAA}} + 4M C_{\mathrm{M}}$ gives Equation~\eqref{eq:regret-bound-GtO}. Therefore, we complete the proof. \Halmos
\endproof

\subsection{Proof of Remark~\ref{remark: strongly convex}}
\label{app:stronglyconvex}

In this section, we prove Equation~\eqref{eq:decision-error-bound0} in Remark~\ref{remark: strongly convex}. We continue to use the notations in Section~\ref{sec:GtO}. Specifically, we denote $F_{Y|X}(\cdot|x)$ as the conditional distribution of $Y$ conditioned on $X=x$. On the offline stage, we use the historical data $\mathcal{D}_n =\{(x_i,y_i)\}_{i=1}^n$ to learn the conditional distribution or a simulator for $Y|\{X=x\}$, denoted as $\hat{F}_{Y|X}(\cdot|x)$. On the online stage, when a new covariate $X=x_0$ arrives, we generate $m$ i.i.d. samples from $\hat{F}_{Y|X}(\cdot|x_0)$, denoted as $\hat{Y}_1,\hat{Y}_2,\ldots,\hat{Y}_m\sim \hat{F}_{Y|X}(\cdot|x_0)$. Then, we can use SAA to construct a sample objective function $\hat{h}$ and solve the simulation-based CSO problem accordingly as
\begin{equation}
\hat{z}_{n,m}^\ast(x_0) \in \arg\min_{z \in \mathcal{Z}} \left\{ \hat{h}(z;x_0):= \frac{1}{m}\sum_{j=1}^m c\big(z;x_0,\hat{Y}_j\big) \right\}.
\end{equation}

To isolate the impact of the model error from the SAA error, we define an auxiliary decision rule as
\begin{equation}
\bar{z}_n^{\ast}(x_0) \in \arg\min_{z \in \mathcal{Z}} \Big\{ \bar{h}(z;x_0):=  \E\big[c(z;X,\hat{Y})\,|\,X = x_0 \big]\Big\},
\end{equation}
where $\hat{Y}\sim\hat{F}_{Y|X}(\cdot|x_0)$. Then we have the basic decomposition
\begin{equation}\label{eq:error_decomposition}
\underbrace{\| \hat{z}_{n,m}^\ast(x_0) - z^\ast(x_0) \|}_{\text{decision error}} \,\leq\, \underbrace{\| \hat{z}_{n,m}^\ast(x_0) - \bar{z}_n^{\ast}(x_0) \|}_{\text{SAA error}} \,+\, \underbrace{\| \bar{z}_n^{\ast}(x_0) - z^\ast(x_0) \|}_{\text{model error}}. 
\end{equation}
We discuss the SAA error and model error separately, and then summarize their combined impact on the decision error. Throughout this subsection, Assumption~\ref{ass:SAA-struct0} remains in force. We
additionally impose the following differentiability, smoothness, and
strong-convexity conditions.

\begin{assumption}\label{ass:SAA-struct}
	Suppose that
	\begin{enumerate}
		\item[(a)] The feasible set $\mathcal{Z}\subset\R^{d_z}$ is nonempty, convex, and compact, with diameter
		$$
		\mathrm{diam}(\mathcal{Z})
		:= \sup\big\{\|z_1-z_2\| : z_1,z_2\in\mathcal{Z}\big\} \le D < \infty.
		$$
		\item[(b)] For every $(x,y)\in\mathcal{X}\times\mathcal{Y}$, the cost function $c(\cdot; x,y)$ is continuously differentiable on $\mathcal{Z}$, and there exists a constant $G<\infty$ such that for all $z\in\mathcal{Z}$ and all $(x,y)\in\mathcal{X}\times\mathcal{Y}$,
		\[
		\|\nabla_z c(z; x,y)\| \le G.
		\]
		\item[(c)] The cost function $c(\cdot; x,y)$ is uniformly $\mu$-strongly convex and has uniform $L$-Lipschitz gradient on $\mathcal{Z}$, i.e., there exist constants $\mu> 0$ and $L\geq 0$ such that for all $z_1,z_2\in\mathcal{Z}$ and all $(x,y)\in\mathcal{X}\times\mathcal{Y}$,
		\begin{align*}
			c(z_2; x,y) - c(z_1; x,y)
			&\ge 
			\nabla_z c(z_1; x,y)^\top (z_2-z_1)
			+ \frac{\mu}{2} \|z_2-z_1\|^2, \\
			\|\nabla_z c(z_2; x,y) - \nabla_z c(z_1; x,y)\|
			&\le L \|z_2-z_1\|.
		\end{align*}
	\end{enumerate}
\end{assumption}

Assumption~\ref{ass:SAA-struct}(a) is standard in stochastic programming and guarantees that minimizers exist, see, e.g., \citetEC[Chapter~5]{ShapiroDentchevaRuszczynski2009}.
Assumptions~\ref{ass:SAA-struct}(b)--(c) impose uniform smoothness and strong convexity of the cost function with respect to $z$, uniformly over $(x,y)$; such conditions are routinely used in the analysis of M-estimators and SAA problems to obtain stability and convergence rates for the minimizers; see, e.g., \citetEC[Chapter~5]{ShapiroDentchevaRuszczynski2009} and \citetEC[Chapter~5]{vanderVaart1998asymptotic}; they are also used in the literature of CSO problem, see, e.g., \citeEC{hu2020sample,hu2020biased} and some references in the survey paper \citeEC{sadana25survey}.

\subsubsection{The SAA Error.}

We analyze the first term in the right-hand side of Equation~\eqref{eq:error_decomposition}, i.e., the term
$
\|\hat{z}_{n,m}^\ast(x_0) - \bar{z}_n^{\ast}(x_0)\|.
$
We fix the covariate value $x_0$, and treat the offline data $\mathcal{D}_n$ and the learned distribution or simulator $\hat{F}_{Y|X}(\cdot|x_0)$  as given. All expectations are taken with respect to the simulation draws $\hat{Y}_1,\dots,\hat{Y}_m$ conditional on $\mathcal{D}_n$ and $X=x_0$.

\begin{lemma}\label{lem:SAA-error}
	Suppose Assumption~\ref{ass:SAA-struct} holds. Fix a covariate value $x_0$ and a realization of the learned conditional distribution or simulator $\hat{F}_{Y|X}(\cdot|x_0)$.
	Then there exists a constant $C_{\mathrm{SAA}}<\infty$, depending only on $(\mu,L,G,D,d_z)$, such that for all $m\ge 2$,
	\begin{equation}\label{eq:SAA-error-bound}
		\E\left[\|\hat{z}_{n,m}^\ast- \bar{z}_n^{\ast}\| \,\big|\, \mathcal{D}_n,X=x_0 \right]
		\;\le\;
		C_{\mathrm{SAA}}\,m^{-1/2}.
	\end{equation}
\end{lemma}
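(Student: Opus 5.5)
The plan is to convert the strong convexity of $\hat h(\cdot;x_0)$ and $\bar h(\cdot;x_0)$ into a bound on the distance between minimizers. That distance is controlled by the sup-norm of the gradient error process. The sup is then bounded by the same multiscale chaining argument used in the proof of Lemma~\ref{lem:SAA-error0}.

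\emph{Step 1: reduction to the gradient process.} By Assumption~\ref{ass:SAA-struct}(b)--(c), both $\hat h(\cdot;x_0)$ and $\bar h(\cdot;x_0)$ are $\mu$-strongly convex, continuously differentiable, and have $L$-Lipschitz gradients. Differentiation under the expectation is justified by the uniform gradient bound $G$ and dominated convergence. Write $\hat z:=\hat z_{n,m}^\ast$ and $\bar z:=\bar z_n^\ast$. The first-order optimality conditions on the convex set $\mathcal Z$ give $\nabla\hat h(\hat z)^\top(\bar z-\hat z)\ge0$ and $\nabla\bar h(\bar z)^\top(\hat z-\bar z)\ge 0$. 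Strong monotonicity of $\nabla \bar h$ gives $\mu\|\hat z-\bar z\|^2\le(\nabla\bar h(\hat z)-\nabla\bar h(\bar z))^\top(\hat z-\bar z)$. Adding the two optimality inequalities then yields
\[
\mu\|\hat z-\bar z\|^2\le (\nabla\bar h(\hat z)-\nabla\hat h(\hat z))^\top(\hat z-\bar z),
\]
and hence $\|\hat z-\bar z\|\le \mu^{-1}\sup_{z\in\mathcal Z}\|\Gamma_m(z)\|$, where
\[
\Gamma_m(z):=\frac1m\sum_{j=1}^m\nabla_z c(z;x_0,\hat Y_j)-\E[\nabla_z c(z;x_0,\hat Y)].
\]
A simpler pointwise alternative uses $\Gamma_m(\hat z)$ directly, but $\hat z$ is random, so the supremum over $\mathcal Z$ is needed.

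\emph{Step 2: chaining for a vector-valued process.} I will repeat the multiscale net construction from the proof of Lemma~\ref{lem:SAA-error0}, with radii $r_k=D2^{-k}$, nets $\mathcal N_k$ of size at most $2^{(k+2)d_z}$, and pair sets $\mathcal A_k$. The process is now $\R^{d_z}$-valued, so I handle it coordinatewise: $\|\Gamma_m(z)\|\le\sum_{\ell=1}^{d_z}|\Gamma_m^{(\ell)}(z)|$. Each coordinate increment satisfies $|\partial_\ell c(u)-\partial_\ell c(v)|\le L\|u-v\|\le 3Lr_k$ on $\mathcal A_k$, so by Hoeffding's lemma it is sub-Gaussian with parameter $3Lr_k/\sqrt m$. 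The maximal inequality over $2|\mathcal A_k|$ terms then gives increments of order $L D\sqrt{d_z}\,2^{-k}\sqrt{k+2}/\sqrt m$, which are summable in $k$. At the reference point, $\E|\Gamma_m^{(\ell)}(z_{\mathrm{ref}})|\le G/\sqrt m$ by the variance bound, since each gradient coordinate is bounded by $G$. The tail term is at most $2Lr_K$, because both gradients are $L$-Lipschitz; it vanishes as $K\to\infty$. Together these give $\E\sup_z\|\Gamma_m(z)\|\le d_z(G+18LD\sqrt{d_z\log2})/\sqrt m$.

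\emph{Step 3: conclusion.} Combining Steps~1 and~2 yields \eqref{eq:SAA-error-bound} with $C_{\mathrm{SAA}}=\mu^{-1}d_z(G+18LD\sqrt{d_z\log 2})$, which depends only on $(\mu,L,G,D,d_z)$. I expect the main obstacle to be Step~1. The difficulty is to carry out the variational-inequality comparison correctly on a constrained set, where minimizers may lie on the boundary and gradients need not vanish. Measurability of the supremum is also an issue, and I will handle it by noting that $\Gamma_m$ is continuous, so the supremum equals one over a countable dense subset. The chaining in Step~2 is routine given the earlier proof.
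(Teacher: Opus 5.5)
Your proof is correct, but it takes a heavier route than the paper, and the difference lies in Step~1. You apply strong monotonicity to the population gradient $\nabla\bar h$. This leaves the gradient discrepancy evaluated at the random point $\hat z$, which forces you to control $\sup_{z\in\mathcal Z}\|\Gamma_m(z)\|$ by chaining.

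The paper instead applies strong monotonicity to the empirical gradient $\nabla\hat h$. That is legitimate for every realization, since each $c(\cdot;x_0,y)$ is $\mu$-strongly convex under Assumption~\ref{ass:SAA-struct}(c). With $d=\hat z-\bar z$ and the same two variational inequalities, this gives
\[
\mu\|d\|^2\le\langle\nabla\hat h(\hat z;x_0)-\nabla\hat h(\bar z;x_0),d\rangle\le\langle\nabla\bar h(\bar z;x_0)-\nabla\hat h(\bar z;x_0),d\rangle .
\]
The discrepancy now sits at $\bar z$, which is deterministic given $\mathcal D_n$ and $x_0$. A one-line second-moment computation for the i.i.d.\ average of the bounded vectors $\nabla_z c(\bar z;x_0,\hat Y_j)$ then yields $C_{\mathrm{SAA}}=G/\mu$. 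That constant has no dependence on $L$, $D$, or $d_z$, and the argument needs no chaining and raises no measurability issue. Your remark that the pointwise alternative fails because $\hat z$ is random is right for your inequality. Swapping which objective supplies the strong monotonicity sidesteps it.

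Your route does buy something in exchange: it uses strong convexity only of the population objective $\bar h$. The first-order condition at $\hat z$ holds for any minimizer of a differentiable function over a convex set, so $\hat h$ need not be strongly convex. Your proof would therefore extend to scenario costs that are merely convex but whose expectation is strongly convex. The price is that you need the Lipschitz-gradient constant $L$, and your constant is of order $\mu^{-1}d_z(G+LD\sqrt{d_z})$. The chaining details are all in order: the coordinatewise sub-Gaussian increments with parameter $3Lr_k/\sqrt m$, the $G/\sqrt m$ reference term, and the $2Lr_K$ tail.
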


Lemma~\ref{lem:SAA-error} shows that, once the simulator $\hat{F}_{Y|X}(\cdot|x_0)$ is fixed, the SAA-based decision $\hat{z}_{n,m}^\ast$ converges to the population decision $\bar{z}_n^{\ast}$ at the usual Monte Carlo rate $m^{-1/2}$ which reflects the fact that $\hat{h}$ is an empirical average of $m$ i.i.d. samples.
In the proof, we bound the gradient error and transfer it to decision error by using the strong monotonicity of the empirical objective's gradient and the constrained optimality conditions.
\smallskip

\proof{Proof.}
We suppress the conditioning on $\mathcal D_n$ and $X=x_0$ in the notation and write expectations with respect to the simulation draws $\hat Y_1,\ldots,\hat Y_m$. For brevity, write $\bar z:=\bar z_n^*(x_0)$ and $\hat z:=\hat z_{n,m}^*(x_0)$. Assumption~\ref{ass:SAA-struct}(b)--(c), together with the standing Assumption~\ref{ass:SAA-struct0}, implies that $\bar h(\cdot;x_0)$ and $\hat h(\cdot;x_0)$ are differentiable and $\mu$-strongly convex on $\mathcal Z$. Their minimizers are therefore unique.
The constrained first-order optimality conditions are
\[
\left\langle
\nabla\bar h(\bar z;x_0),z-\bar z
\right\rangle\geq0
\quad \text{and} \quad
\left\langle
\nabla\hat h(\hat z;x_0),z-\hat z
\right\rangle\geq0,
\qquad \text{for } z\in\mathcal Z.
\]
Let $d:=\hat z-\bar z$.
Taking $z=\hat z$ in the first inequality and $z=\bar z$
in the second gives
\[
\left\langle\nabla\bar h(\bar z;x_0),d\right\rangle\geq0 \quad
\text{and} \quad
\left\langle\nabla\hat h(\hat z;x_0),d\right\rangle\leq0.
\]
Combining with the strong monotonicity of the gradient of the empirical objective $\hat h(\cdot;x_0)$, the above constrained first-order optimality conditions, and Cauchy–Schwarz inequality, we have
\begin{eqnarray}
\lefteqn{\mu\|d\|^2
\,\leq\,
\left\langle
\nabla\hat h(\hat z;x_0)
-\nabla\hat h(\bar z;x_0),d
\right\rangle} \nonumber\\
&\leq&
-\left\langle\nabla\hat h(\bar z;x_0),d\right\rangle
\,\leq\,
\left\langle
\nabla\bar h(\bar z;x_0)
-\nabla\hat h(\bar z;x_0),d
\right\rangle \,\leq\,
\left\|
\nabla\bar h(\bar z;x_0)
-\nabla\hat h(\bar z;x_0)
\right\|\|d\|. \nonumber
\end{eqnarray}
If $d=0$, the desired comparison is immediate.
Otherwise, dividing by $\mu\|d\|$ yields
\[
\|\hat z-\bar z\|
\,\leq\,
\frac1\mu
\left\|
\nabla\hat h(\bar z;x_0)
-\nabla\bar h(\bar z;x_0)
\right\|.
\]

Conditional on $\mathcal D_n$ and $X=x_0$, the population
minimizer $\bar z$ is fixed.
Define
\(
V_j:=\nabla_zc(\bar z;x_0,\hat Y_j)
\text{ and } a:=\mathbb E[V_j]=\nabla\bar h(\bar z;x_0).
\)
The vectors $V_1,\ldots,V_m$ are i.i.d., and
$\|V_j\|\leq G$ almost surely.
Furthermore,
\[
\nabla\hat h(\bar z;x_0)-\nabla\bar h(\bar z;x_0)
\,=\,
\frac1m\sum_{j=1}^m(V_j-a).
\]
For $i\ne j$, independence and centering imply
\(
\mathbb E[\langle V_i-a,V_j-a\rangle]
=
\langle
\mathbb E[V_i-a],\mathbb E[V_j-a]
\rangle
=0.
\)
Therefore,
\begin{eqnarray}
\lefteqn{\mathbb E\left[\left\|
\nabla\hat h(\bar z;x_0)-\nabla\bar h(\bar z;x_0)
\right\|^2\right]} \nonumber \\
&=&
\frac1{m^2}
\sum_{i=1}^m\sum_{j=1}^m
\mathbb E\left[ \langle V_i-a,V_j-a\rangle \right]
\,=\,
\frac1m\mathbb E \left[\|V_1-a\|^2\right]
\,=\,
\frac1m\left(
\mathbb E\left[ \|V_1\|^2\right] -\|a\|^2
\right) 
\,\leq\,
\frac{G^2}{m}. \nonumber
\end{eqnarray}
Squaring the minimizer comparison and taking expectations gives
\(
\mathbb E[\|\hat z-\bar z\|^2]
\leq\frac{G^2}{\mu^2m}.
\)
Finally, the Cauchy--Schwarz inequality yields
\[
\mathbb E\left[ \|\hat z-\bar z\| \right]
\leq
\left(
\mathbb E \left[ \|\hat z-\bar z\|^2 \right]
\right)^{1/2}
\leq\frac{G}{\mu\sqrt m}.
\]
Restoring the conditioning and the original notation proves
the claimed bound. Therefore, we complete the~proof. \Halmos
\endproof

\smallskip

\subsubsection{The Model Error.}

We now turn to the second term in Equation~\eqref{eq:error_decomposition}, i.e., the term $\|\bar{z}_n^{\ast}(x_0) - z^\ast(x_0)\|$. 
We begin by quantifying how sensitive the optimal solution is to perturbations of the underlying conditional distribution, measured in total variation distance.

\begin{lemma}\label{lem:stability-TV}
	Suppose Assumption~\ref{ass:SAA-struct} holds. Fix $x_0\in\mathcal{X}$ and let $\mathbb{P}$ and $\mathbb{Q}$ be two probability measures on $\mathcal{Y}$. Define
	\[
	h_{\mathbb{P}}(z;x_0) := \E_{Y\sim\mathbb{P}}[c(z; x_0,Y)],
	\qquad
	h_{\mathbb{Q}}(z;x_0) := \E_{Y\sim\mathbb{Q}}[c(z; x_0,Y)],
	\]
	and let
	\[
	z_{\mathbb{P}}^\ast(x_0) := \arg\min_{z\in\mathcal{Z}} h_{\mathbb{P}}(z;x_0),
	\qquad
	z_{\mathbb{Q}}^\ast(x_0) := \arg\min_{z\in\mathcal{Z}} h_{\mathbb{Q}}(z;x_0).
	\]
	Then both minimizers are unique, and
	\begin{equation}\label{eq:stability-TV}
		\big\|z_{\mathbb{Q}}^\ast(x_0) - z_{\mathbb{P}}^\ast(x_0)\big\|
		\;\le\;
		\frac{2G}{\mu}\,\mathrm{TV}(\mathbb{Q},\mathbb{P}),
	\end{equation}
	where $\mathrm{TV}$ denotes the total variation distance.
	In particular, taking $\mathbb{P}=F_{Y|X}(\cdot|x_0)$ and $\mathbb{Q}=\hat{F}_{Y|X}(\cdot|x_0)$ yields
	\[
	\big\|\bar{z}_n^{\ast}(x_0) - z^\ast(x_0)\big\|
	\;\le\;
	\frac{2G}{\mu}\,\mathrm{TV}\left(\hat{F}_{Y|X}(\cdot|x_0),F_{Y|X}(\cdot|x_0)\right).
	\]
\end{lemma}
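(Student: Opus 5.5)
The argument mirrors the proof of Lemma~\ref{lem:SAA-error}, with the empirical measure replaced by $\mathbb{Q}$ and the sampling fluctuation replaced by a total-variation bound. The plan has three steps: establish strong convexity and uniqueness, derive a variational comparison of the two minimizers, and bound a single gradient discrepancy.

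\textbf{Step 1 (regularity and uniqueness).} By Assumption~\ref{ass:SAA-struct}(b) the gradients $\nabla_z c$ are uniformly bounded by $G$, so dominated convergence lets me differentiate under the integral. Hence $h_{\mathbb{P}}(\cdot;x_0)$ and $h_{\mathbb{Q}}(\cdot;x_0)$ are continuously differentiable, with
\[
\nabla h_{\mathbb{P}}(z;x_0)=\E_{\mathbb{P}}[\nabla_z c(z;x_0,Y)].
\]
The same holds for $\mathbb{Q}$. Integrating the strong-convexity inequality of Assumption~\ref{ass:SAA-struct}(c) shows that both objectives are $\mu$-strongly convex. Since $\mathcal Z$ is compact and convex, each has a unique minimizer.

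\textbf{Step 2 (variational comparison).} Write $z_P:=z_{\mathbb{P}}^\ast(x_0)$, $z_Q:=z_{\mathbb{Q}}^\ast(x_0)$ and $d:=z_Q-z_P$. The constrained first-order conditions give
\[
\langle\nabla h_{\mathbb{P}}(z_P),d\rangle\ge0
\quad\text{and}\quad
\langle\nabla h_{\mathbb{Q}}(z_Q),d\rangle\le0.
\]
Strong monotonicity of $\nabla h_{\mathbb{Q}}$ then yields
\[
\mu\|d\|^2\le\langle\nabla h_{\mathbb{Q}}(z_Q)-\nabla h_{\mathbb{Q}}(z_P),d\rangle\le\langle\nabla h_{\mathbb{P}}(z_P)-\nabla h_{\mathbb{Q}}(z_P),d\rangle .
\]
If $d\neq0$, Cauchy--Schwarz gives
\[
\|d\|\le \mu^{-1}\|\nabla h_{\mathbb{P}}(z_P)-\nabla h_{\mathbb{Q}}(z_P)\|.
\]

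\textbf{Step 3 (gradient discrepancy via TV).} This is the only nonroutine point, because the gradient is vector-valued. I will bound its norm through a supremum over unit directions. Fix a unit vector $u$ and consider the scalar function $g_u(y):=u^\top\nabla_z c(z_P;x_0,y)$, which satisfies $|g_u|\le G$. The test-function characterization of total variation, used exactly as in the proof of Lemma~\ref{lem:model-error0}, gives
\[
|\E_{\mathbb{P}}g_u-\E_{\mathbb{Q}}g_u|\le 2G\,\mathrm{TV}(\mathbb{P},\mathbb{Q}).
\]
Choosing $u$ parallel to the gradient difference shows that the difference has norm at most $2G\,\mathrm{TV}$. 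Substituting this into Step 2 gives~\eqref{eq:stability-TV}.

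For the particular case, set $\mathbb{P}=F_{Y|X}(\cdot|x_0)$ and $\mathbb{Q}=\hat F_{Y|X}(\cdot|x_0)$. Then $h_{\mathbb{P}}=h$ and $h_{\mathbb{Q}}=\bar h$, so the minimizers are $z^\ast(x_0)$ and $\bar z_n^\ast(x_0)$, and the stated bound follows.
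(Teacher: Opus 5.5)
Your proposal is correct and follows essentially the same route as the paper: integrate the pointwise strong-convexity inequality to get uniqueness, combine the two variational inequalities with strong monotonicity of $\nabla h_{\mathbb{Q}}$ to bound $\|z_Q-z_P\|$ by $\mu^{-1}\|\nabla h_{\mathbb{P}}(z_P)-\nabla h_{\mathbb{Q}}(z_P)\|$, and then reduce that vector discrepancy to the bounded scalar test functions $u^\top\nabla_z c$ to obtain $2G\,\mathrm{TV}(\mathbb{Q},\mathbb{P})$. Your dominated-convergence justification for differentiating under the integral is a small addition that the paper leaves implicit.
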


Lemma~\ref{lem:stability-TV} states that the optimal solution is Lipschitz-continuous with respect to the underlying conditional distribution in total variation distance. Intuitively, if we change the distribution of $Y|\{X=x_0\}$ only slightly (in total variation), then the expected gradient changes only slightly, and by strong convexity the minimizer cannot move far. This type of stability result is a direct consequence of the standard stability-of-argmin arguments used in M-estimation (see, e.g., \citealtEC[Chapter~5]{vanderVaart1998asymptotic}) and in stochastic programming sensitivity analysis (see \citealtEC[Chapter~5]{ShapiroDentchevaRuszczynski2009}). 
\smallskip

\proof{Proof.}
	For brevity we write $h_{\mathbb{P}}(z)$ and $h_{\mathbb{Q}}(z)$ for $h_{\mathbb{P}}(z;x_0)$ and $h_{\mathbb{Q}}(z;x_0)$, and $z_{\mathbb{P}}^\ast$ and $z_{\mathbb{Q}}^\ast$ for the corresponding minimizers.
	By Assumption~\ref{ass:SAA-struct}(c), for each $y$ the function $z\mapsto c(z; x_0,y)$ is $\mu$-strongly convex. Therefore, for any $z_1,z_2\in\mathcal{Z}$ and $\lambda\in[0,1]$,
	\[
	c(\lambda z_1 + (1-\lambda)z_2; x_0,y)
	\,\le\,
	\lambda c(z_1; x_0,y) + (1-\lambda)c(z_2; x_0,y)
	- \frac{\mu}{2}\lambda(1-\lambda)\|z_1-z_2\|^2.
	\]
	Taking expectations with respect to $Y\sim\mathbb{P}$, we obtain
	\[
	h_{\mathbb{P}}(\lambda z_1 + (1-\lambda)z_2)
	\,\le\,
	\lambda h_{\mathbb{P}}(z_1) + (1-\lambda)h_{\mathbb{P}}(z_2)
	- \frac{\mu}{2}\lambda(1-\lambda)\|z_1-z_2\|^2,
	\]
	so $h_{\mathbb{P}}$ is $\mu$-strongly convex. Similarly $h_{\mathbb{Q}}$ is $\mu$-strongly convex. Hence both have unique minimizers $z_{\mathbb{P}}^\ast$ and $z_{\mathbb{Q}}^\ast$. Because $h_\mathbb{P}$ and $h_\mathbb{Q}$ are differentiable and convex, their constrained minimizers satisfy the variational inequalities
\[
    \big\langle
        \nabla h_\mathbb{P}(z_\mathbb{P}^*),z-z_\mathbb{P}^*
    \big\rangle
    \geq 0
    \quad \text{and} \quad
    \big\langle
        \nabla h_\mathbb{Q}(z_\mathbb{Q}^*),z-z_\mathbb{Q}^*
    \big\rangle
    \geq 0,
    \qquad \text{for all } z\in\mathcal Z.
\]
Taking $z=z_\mathbb{Q}^*$ in the first inequality and $z=z_\mathbb{P}^*$ in
the second gives
\[
    \big\langle
        \nabla h_\mathbb{P}(z_\mathbb{P}^*),z_\mathbb{Q}^*-z_\mathbb{P}^*
    \big\rangle
    \geq 0 \quad \text{and} \quad
    \big\langle
        \nabla h_\mathbb{Q}(z_\mathbb{Q}^*),z_\mathbb{Q}^*-z_\mathbb{P}^*
    \big\rangle
    \leq 0.
\]

Let $d:=z_\mathbb{Q}^*-z_\mathbb{P}^*$. 
Combining with the strong monotonicity of the gradient of the expected objective $h_\mathbb{Q}$, the above constrained first-order optimality conditions, and Cauchy–Schwarz inequality, we have
\[
\begin{aligned}
    \mu\|d\|^2
    &\,\leq\,
    \big\langle
        \nabla h_\mathbb{Q}(z_\mathbb{Q}^*)-\nabla h_\mathbb{Q}(z_\mathbb{P}^*),d
    \big\rangle                                                    \,\leq\,
    -\big\langle\nabla h_\mathbb{Q}(z_\mathbb{P}^*),d\big\rangle                     \,=\,
    \big\langle
        \nabla h_\mathbb{P}(z_\mathbb{P}^*)-\nabla h_\mathbb{Q}(z_\mathbb{P}^*),d
    \big\rangle
    -
    \big\langle\nabla h_\mathbb{P}(z_\mathbb{P}^*),d\big\rangle                      \\
    &\,\leq\,
    \big\langle
        \nabla h_\mathbb{P}(z_\mathbb{P}^*)-\nabla h_\mathbb{Q}(z_\mathbb{P}^*),d
    \big\rangle                                                     \,\leq\,
    \big\|
        \nabla h_\mathbb{P}(z_\mathbb{P}^*)-\nabla h_\mathbb{Q}(z_\mathbb{P}^*)
    \big\|\|d\|.
\end{aligned}
\]
If $d=0$, the desired comparison is immediate.
Otherwise, dividing by $\mu\|d\|$ yields
\begin{equation}
    \|z_\mathbb{Q}^*-z_\mathbb{P}^*\|
    \,\leq\,
    \frac{1}{\mu}
    \left\|
        \nabla h_\mathbb{Q}(z_\mathbb{P}^*)-\nabla h_\mathbb{P}(z_\mathbb{P}^*)
    \right\|.
    \label{eq:model-decision-gradient-bound}
\end{equation}
By the definitions of $h_\mathbb{P}$ and $h_\mathbb{Q}$, we have
\[
    \nabla h_\mathbb{Q}(z_\mathbb{P}^*)-\nabla h_\mathbb{P}(z_\mathbb{P}^*)
    \,=\,
    \mathbb E_{Y\sim \mathbb{Q}}
    \left[\nabla_z c(z_\mathbb{P}^*;x_0,Y)\right] -
    \mathbb E_{Y\sim \mathbb{P}}
    \left[\nabla_z c(z_\mathbb{P}^*;x_0,Y)\right].
\]
For completeness, if a measurable vector-valued function $f$
satisfies $\sup_y\|f(y)\|\leq B$, then
\[
    \left\|
        \mathbb E_\mathbb{Q}[f(Y)]-\mathbb E_\mathbb{P}[f(Y)]
    \right\|
    \,=\,
    \sup_{\|u\|\leq 1}
    \left|
        \mathbb E_\mathbb{Q}[u^\top f(Y)]
        -
        \mathbb E_\mathbb{P}[u^\top f(Y)]
    \right| \,\leq\,
    2B\,\operatorname{TV}(\mathbb{Q},\mathbb{P}).
\]
Applying this inequality with
\(
    f(y)=\nabla_z c(z_\mathbb{P}^*;x_0,y)
\)
and using Assumption~\ref{ass:SAA-struct}(b) gives
\[
    \left\|
        \nabla h_\mathbb{Q}(z_\mathbb{P}^*)-\nabla h_\mathbb{P}(z_\mathbb{P}^*)
    \right\|
    \,\leq\,
    2G\,\operatorname{TV}(\mathbb{Q},\mathbb{P}).
\]
Combining this bound with
Equation~\eqref{eq:model-decision-gradient-bound} yields
\[
    \|z_\mathbb{Q}^*-z_\mathbb{P}^*\|
    \,\leq\,
    \frac{2G}{\mu}\operatorname{TV}(\mathbb{Q},\mathbb{P}),
\]
which proves Equation~\eqref{eq:stability-TV}. Taking
\(
    \mathbb{P}=F_{Y| X}(\cdot| x_0)\) and \(
    \mathbb{Q}=\hat F_{Y| X}(\cdot| x_0)
\)
gives the stated specialization. Therefore, we complete the proof.
\Halmos
\endproof
\smallskip

To turn Lemma~\ref{lem:stability-TV} into a quantitative bound on the average model error, we assume that the learned distribution or simulator is consistent in total variation on average over the covariate distribution.

\begin{lemma}\label{lem:model-error}
	Suppose Assumptions~\ref{ass:SAA-struct} and~\ref{ass: TV-rate} hold. Then
	\begin{equation}\label{eq:model-error-bound}
		\E\Big[\E\big[\|\bar{z}_n^{\ast}(X) - z^\ast(X)\| \,\big|\, \mathcal{D}_n \big]\Big]
		\;\le\;
		\frac{2G}{\mu}\,C_{\mathrm{M}}\,n^{-\alpha_0}(\log n)^{\alpha_1},
	\end{equation}
	where the inner expectation is with respect to $X$ and the outer expectation is with respect to $\mathcal{D}_n$.
\end{lemma}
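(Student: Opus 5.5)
The plan is to apply Lemma~\ref{lem:stability-TV} pointwise in the covariate and then integrate, using Assumption~\ref{ass: TV-rate} to control the resulting average total variation distance. Fix a realization of $\mathcal D_n$, which determines the learned conditional distribution $\hat F_{Y|X}(\cdot|x)$ for every $x\in\mathcal X$. For each fixed $x$, Assumption~\ref{ass:SAA-struct} holds, so Lemma~\ref{lem:stability-TV} applies with $\mathbb P=F_{Y|X}(\cdot|x)$ and $\mathbb Q=\hat F_{Y|X}(\cdot|x)$. Both minimizers $z^\ast(x)$ and $\bar z_n^\ast(x)$ are then unique, and
\[
\|\bar z_n^\ast(x)-z^\ast(x)\|\,\le\,\frac{2G}{\mu}\,\mathrm{TV}\big(\hat F_{Y|X}(\cdot|x),F_{Y|X}(\cdot|x)\big)
\quad\text{for every } x\in\mathcal X .
\]
This bound is deterministic given $\mathcal D_n$, and its constant $2G/\mu$ does not depend on $x$ or on $\mathcal D_n$.

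Next, I would replace $x$ by the independent online covariate $X$ and take the conditional expectation over $X$ given $\mathcal D_n$. Monotonicity of expectation then gives
\[
\E\big[\|\bar z_n^\ast(X)-z^\ast(X)\|\,\big|\,\mathcal D_n\big]\le \frac{2G}{\mu}\,\E\big[\mathrm{TV}\big(\hat F_{Y|X}(\cdot|X),F_{Y|X}(\cdot|X)\big)\,\big|\,\mathcal D_n\big].
\]
Taking the outer expectation over $\mathcal D_n$ and invoking Assumption~\ref{ass: TV-rate} yields the stated bound $\frac{2G}{\mu}C_{\mathrm M}n^{-\alpha_0}(\log n)^{\alpha_1}$.

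The only point needing care is measurability. The maps $x\mapsto\bar z_n^\ast(x)$ and $x\mapsto z^\ast(x)$ must be measurable so that the expectations are well defined. Uniqueness of the minimizers together with continuity of the objectives in $z$ makes the argmin maps measurable by a standard measurable-selection or argmin-continuity argument. Alternatively, the expectation can be read as an outer expectation; the same inequality still holds because the integrand is dominated pointwise by a quantity that Assumption~\ref{ass: TV-rate} already treats as integrable. I expect no further obstacle, since all the analytic work is contained in Lemma~\ref{lem:stability-TV}.
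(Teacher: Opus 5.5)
Your proposal is correct and follows essentially the same approach as the paper. You apply Lemma~\ref{lem:stability-TV} pointwise with $\mathbb P=F_{Y|X}(\cdot|x)$ and $\mathbb Q=\hat F_{Y|X}(\cdot|x)$, integrate over $X$ given $\mathcal D_n$ and then over $\mathcal D_n$, and invoke Assumption~\ref{ass: TV-rate}; your measurability remark is a harmless addition that the paper leaves implicit.
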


Lemma~\ref{lem:model-error} shows that, under the stability result of Lemma~\ref{lem:stability-TV} and the average total variation convergence of Assumption~\ref{ass: TV-rate}, the decision error due purely to model misspecification (i.e., using $\hat{F}_{Y|X}(\cdot|x)$ instead of $F_{Y|X}(\cdot|x)$) decays at the same rate as the simulator itself in total variation. The constant is inflated by the inverse strong-convexity modulus $\mu^{-1}$ and the gradient envelope $G$, reflecting the sensitivity of the optimization problem to perturbations in the objective.
\smallskip

\proof{Proof.}
	By Lemma~\ref{lem:stability-TV} applied pointwise in $x$ and taking $\mathbb{P}=F_{Y|X}(\cdot|x)$ and $\mathbb{Q}=\hat{F}_{Y|X}(\cdot|x)$, we have for every $x\in\mathcal{X}$,
	\[
	\|\bar{z}_n^{\ast}(x) - z^\ast(x)\|
	\;\le\;
	\frac{2G}{\mu}\,\mathrm{TV}\big(\hat{F}_{Y|X}(\cdot|x),F_{Y|X}(\cdot|x)\big).
	\]
	Taking expectations with respect to $X$ and then with respect to $\mathcal{D}_n$ yields
	\[
	\E\Big[\E\big[\|\bar{z}_n^{\ast}(X) - z^\ast(X)\| \,\big|\, \mathcal{D}_n \big]\Big]
	\,\le\,
	\frac{2G}{\mu}\,
	\E\Big[\E\big[\mathrm{TV}(\hat{F}_{Y|X}(\cdot|X),F_{Y|X}(\cdot|X)) \,\big|\, \mathcal{D}_n \big]\Big].
	\]
	Applying Assumption~\ref{ass: TV-rate} to the right-hand side gives Equation~\eqref{eq:model-error-bound}, and we complete the proof. \Halmos
\endproof

\subsubsection{Decision Error Bound for GtO.}

We now combine the SAA error bound (Lemma~\ref{lem:SAA-error}) and the model error bound (Lemma~\ref{lem:model-error}) to obtain an overall bound on the decision error of the GtO procedure.
Recall the decomposition~\eqref{eq:error_decomposition} for a fixed $x_0$:
\[
\|\hat{z}_{n,m}^\ast(x_0) - z^\ast(x_0)\|
\,\le\,
\|\hat{z}_{n,m}^\ast(x_0) - \bar{z}_n^{\ast}(x_0)\|
+
\|\bar{z}_n^{\ast}(x_0) - z^\ast(x_0)\|.
\]
We now integrate this bound over the randomness in both the training data and the online covariate, and plug in the bounds obtained in the previous subsections.

\begin{theorem}\label{thm:decision-error}
	Suppose Assumptions~\ref{ass:SAA-struct} and~\ref{ass: TV-rate} hold.
	Then there exists a constant $C<\infty$, depending only on $(\mu,L,G,\sigma,D,d_z,C_{\mathrm{M}},\alpha_0,\alpha_1)$, such that for all integers $n,m\ge 2$,
	\begin{equation}\label{eq:decision-error-bound}
		\E\Big[\E\big[\|\hat{z}_{n,m}^\ast(X) - z^\ast(X)\| \,\big|\, \mathcal{D}_n \big]\Big]
		\;\le\;
		C\left(
		n^{-\alpha_0}(\log n)^{\alpha_1}
		+
		m^{-1/2}
		\right),
	\end{equation}
	where, conditional on $\mathcal D_n$, the inner expectation is with respect to the online covariate $X$ and the conditional simulation draws, and the outer expectation is with respect to $\mathcal{D}_n$.
\end{theorem}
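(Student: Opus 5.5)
The plan is to integrate the pointwise decomposition~\eqref{eq:error_decomposition} over the simulation draws, the online covariate $X$, and the training data $\mathcal D_n$, in that order. Lemmas~\ref{lem:SAA-error} and~\ref{lem:model-error} then bound the two resulting terms. Fix $\mathcal D_n$ and a covariate value $x_0$. The first step takes the conditional expectation of both sides of~\eqref{eq:error_decomposition} with respect to $\hat Y_1,\ldots,\hat Y_m$ given $\mathcal D_n$ and $X=x_0$. The model-error term $\|\bar z_n^\ast(x_0)-z^\ast(x_0)\|$ does not depend on the simulation draws and passes through unchanged. The SAA term is bounded by Lemma~\ref{lem:SAA-error}: in its proof the constant is explicitly $G/\mu$, so the bound is uniform in $x_0$ and in the realized simulator. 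This gives
\[
\E\bigl[\|\hat z_{n,m}^\ast(x_0)-z^\ast(x_0)\|\,\big|\,\mathcal D_n,X=x_0\bigr]\,\le\,\frac{G}{\mu\sqrt m}+\|\bar z_n^\ast(x_0)-z^\ast(x_0)\|.
\]

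The second step integrates this inequality over $X$, which is independent of $\mathcal D_n$, and then over $\mathcal D_n$. The constant term is unaffected. The remaining term is exactly the left-hand side of Lemma~\ref{lem:model-error}, so it is bounded by $(2G/\mu)C_{\mathrm M}n^{-\alpha_0}(\log n)^{\alpha_1}$. Combining the two pieces and setting $C:=\max\{G/\mu,\,2GC_{\mathrm M}/\mu\}$ yields~\eqref{eq:decision-error-bound}. This $C$ depends only on constants in the allowed list; the dependence on $L,\sigma,D,d_z,\alpha_0,\alpha_1$ is vacuous.

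The main obstacle is measure-theoretic rather than analytic. The tower-property step requires the maps $x\mapsto\bar z_n^\ast(x)$ and $x\mapsto z^\ast(x)$, and the SAA minimizer as a function of the draws, to be measurable. Strong convexity makes every minimizer unique, so each of these maps is a well-defined argmin of a Carathéodory function over the compact set $\mathcal Z$. I would therefore invoke a standard measurable-selection result (e.g., the measurable maximum theorem) to obtain measurability, and rely on the boundedness of $\mathcal Z$ to make all expectations finite. A secondary point is to confirm that Lemma~\ref{lem:SAA-error}'s constant does not depend on $x_0$ or $\hat F_{Y|X}$, which the explicit bound $G/(\mu\sqrt m)$ settles.
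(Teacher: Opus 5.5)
Your proposal is correct and follows the paper's proof essentially step for step: integrate the triangle-inequality decomposition~\eqref{eq:error_decomposition}, bound the SAA part by Lemma~\ref{lem:SAA-error} (whose constant $G/(\mu\sqrt m)$ is uniform in $x_0$ and in the learned simulator), and bound the model part by Lemma~\ref{lem:model-error}. The only differences are cosmetic. The paper takes $C:=C_{\mathrm{SAA}}+2GC_{\mathrm M}/\mu$ instead of your maximum, and it does not discuss the measurability issues that you flag.
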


Theorem~\ref{thm:decision-error} provides a clean decomposition of the average decision error of the GtO procedure into: an \emph{offline learning error} term, $n^{-\alpha_0}(\log n)^{\alpha_1}$, which reflects how accurately the conditional distribution $F_{Y|X}(\cdot|x)$ can be learned from $n$ samples, and an \emph{online SAA error} term, $m^{-1/2}$, which reflects the Monte Carlo error from using $m$ simulated samples at decision time.
In settings where $\alpha_0=1/2$, this yields the canonical rate
\[
\E\Big[\E\big[\|\hat{z}_{n,m}^\ast(X) - z^\ast(X)\| \,\big|\, \mathcal{D}_n \big]\Big]
\,=\, \mathcal{O}\Big(n^{-1/2}(\log n)^{\alpha_1} + m^{-1/2} \Big) \,=\, \widetilde{\mathcal{O}}\Big(n^{-1/2} + m^{-1/2}\Big).
\]
Theorem~\ref{thm:decision-error} makes explicit the trade-off between investing in more training data (larger $n$) and using more simulation at decision time (larger $m$). In particular, if one balances the two error terms by choosing $m$ as a suitable function of $n$, the overall decision error can be minimized.
\smallskip

\proof{Proof.}
	Taking expectations with respect to $X$ and then with respect to $\mathcal{D}_n$ to the result in Lemma~\ref{lem:SAA-error} yields
	\begin{equation}\label{eq:decision-SAA-part}
		\E\Big[\E\big[\|\hat{z}_{n,m}^\ast(X) - \bar{z}_n^{\ast}(X)\| \,\big|\, \mathcal{D}_n \big]\Big]
		\;\le\;
		C_{\mathrm{SAA}}\,m^{-1/2}.
	\end{equation}
	By Lemma~\ref{lem:model-error}, we have
	\begin{align*}
		\E\Big[\E\big[\|\hat{z}_{n,m}^\ast(X) - z^\ast(X)\| \,\big|\, \mathcal{D}_n \big]\Big]
		&\,\le\,
		\E\Big[\E\big[\|\hat{z}_{n,m}^\ast(X) - \bar{z}_n^{\ast}(X)\| \,\big|\, \mathcal{D}_n \big]\Big]
		+
		\E\Big[\E\big[\|\bar{z}_n^{\ast}(X) - z^\ast(X)\| \,\big|\, \mathcal{D}_n \big]\Big] \\
		&\,\le\,
		C_{\mathrm{SAA}}\,m^{-1/2}
		+
		\frac{2G}{\mu}\,C_{\mathrm{M}}\,n^{-\alpha_0}(\log n)^{\alpha_1}.
	\end{align*}
	By setting
	$
	C:= C_{\mathrm{SAA}} + \frac{2G}{\mu}C_{\mathrm{M}},
	$
	 we can rewrite the above bound and obtain Equation~\eqref{eq:decision-error-bound}. Therefore, we complete the proof. \Halmos
\endproof

\section{Numerical Study: Implementation and Supplementary Results}
\label{sec:EC Numerical Experiments}

This section provides implementation details and supplementary results for Section~\ref{sec:numerical experiments}. 

\subsection{Contextual Newsvendor Problem}
\label{app:num-newsvendor}

\subsubsection{Demand and Shortage-Cost Construction.}

Conditional demand follows a three-component Gaussian mixture representing low, regular, and high demand states. We divide the covariates into three groups according to their roles in the demand model. Seasonal factors are generated from a uniform distribution and transformed smoothly to determine the component means. Market and operational factors, such as traffic, discounts, and promotions, follow a lognormal distribution and determine the mixture probabilities. Risk-related factors, such as weather and recent demand history, follow a zero-mean multivariate normal distribution and determine the component variances. Given $X=x_0$, demand is drawn from the resulting mixture and censored at zero. Across covariate dimensions, the parameters are calibrated to keep average demand and the signal-to-noise ratio comparable.

The shortage penalty reflects differences in service priorities and operating conditions. Let $x_{\mathrm{risk}}$ denote the risk-related covariates and define the priority score $s(x)=\gamma^\top x_{\mathrm{risk}}$, with $\gamma$ normalized so that $s(X)$ is standard normal. We partition the score into 11 equal-probability intervals using $\tau_k=\Phi^{-1}(k/11)$ for $k=1,\ldots,10$, where $\Phi$ is the standard normal cumulative distribution function. Assigning increasing penalty levels from 30 to 120 gives
\[
b(x)=\begin{cases}
b_1, & s(x)<\tau_1,\\
b_k, & \tau_{k-1}\leq s(x)<\tau_k,\quad k=2,\ldots,10,\\
b_{11}, & s(x)\geq\tau_{10}.
\end{cases}
\]
The penalty range produces different newsvendor critical fractiles. Because $b(x)$ is a step function, the conditional objective can be discontinuous in $x$ even though the conditional demand model varies smoothly.

We consider $n\in\{800,2000,5000,10000,20000,30000\}$ and $d_x\in\{3,6,10,20,40,60,100\}$. Each setting is evaluated on 100 test contexts over 30 independent replications. Direct PtO estimates the conditional objective, including the context-dependent penalty. Structure-aware PtO instead estimates $h_+(z;x)=\mathbb{E}[(z-Y)^+| X=x]$ and $h_-(z;x)=\mathbb{E}[(Y-z)^+| X=x]$ using the same prediction weights, and combines them as $c_h\hat h_{+,n}(z;x_0)+b(x_0)\hat h_{-,n}(z;x_0)$. This formulation incorporates the known penalty directly rather than estimating its discontinuities.

The empirical objective is piecewise linear in the one-dimensional order quantity. We optimize it by evaluating demand breakpoints. This approach also accommodates signed weights when the objective is bounded below. On the unrestricted domain $z\geq0$, the terminal slope must be nonnegative, and the feasible endpoint $z=0$ must be included among the candidates. Thus, efficient optimization in this setting does not require convexity.

\subsubsection{Detailed Performance and Online Time.}
\label{app:num-newsvendor-results}

Tables~\ref{tab:newsvendor-relative-cost-direct} and~\ref{tab:newsvendor-relative-cost} report the method-level relative costs for the direct and structure-aware formulations, respectively, corresponding to Panels (a) and~(b) of Figure~\ref{fig:newsvendor-regime-map}. Higher relative cost indicates better decision performance. The GtO results are identical across the two tables; the PtO columns report the respective formulations.

\begin{table}[htbp]
\TABLE
{Relative cost for the newsvendor problem with the direct formulation.
\label{tab:newsvendor-relative-cost-direct}}
{
\small
\begin{tabular}{cccccccc}
\toprule
$n$ & $d_x$ & SAA & PtO-$k$NN & PtO-KS & PtO-KRR
& PtO-KRR$^+$ & GtO-CDM \\
\midrule

\multirow{7}{*}{800}
& 3   & 0.0000 & 0.9164 & 0.8912 & \textbf{0.9428} & 0.9331 & 0.7990 \\
& 6   & 0.0000 & 0.7488 & 0.7006 & \textbf{0.8199} & 0.7847 & 0.7405 \\
& 10  & 0.0000 & 0.5462 & 0.5121 & \textbf{0.7477} & 0.5614 & 0.6948 \\
& 20  & 0.0000 & 0.3849 & 0.2462 & 0.6605 & 0.3887 & \textbf{0.6850} \\
& 40  & 0.0000 & 0.3578 & 0.0451 & 0.5754 & 0.3185 & \textbf{0.7452} \\
& 60  & 0.0000 & 0.1736 & 0.0321 & 0.3725 & 0.1759 & \textbf{0.5947} \\
& 100 & 0.0000 & 0.1473 & 0.0161 & 0.2473 & 0.1110 & \textbf{0.6070} \\
\midrule

\multirow{7}{*}{2000}
& 3   & 0.0000 & 0.9519 & 0.9350 & \textbf{0.9647} & 0.9583 & 0.8378 \\
& 6   & 0.0000 & 0.8157 & 0.7792 & \textbf{0.8609} & 0.8327 & 0.7714 \\
& 10  & 0.0000 & 0.6142 & 0.5520 & \textbf{0.7933} & 0.5948 & 0.7672 \\
& 20  & 0.0000 & 0.4641 & 0.3679 & \textbf{0.7148} & 0.4570 & 0.7074 \\
& 40  & 0.0000 & 0.4013 & 0.0476 & 0.6224 & 0.3836 & \textbf{0.7688} \\
& 60  & 0.0000 & 0.2353 & 0.0324 & 0.6418 & 0.1969 & \textbf{0.7202} \\
& 100 & 0.0000 & 0.1881 & 0.0159 & 0.4262 & 0.1071 & \textbf{0.7690} \\
\midrule

\multirow{7}{*}{5000}
& 3   & 0.0000 & 0.9709 & 0.9550 & 0.9759 & 0.9576 & \textbf{0.9815} \\
& 6   & 0.0000 & 0.8583 & 0.8376 & \textbf{0.9014} & 0.8778 & 0.8848 \\
& 10  & 0.0000 & 0.7040 & 0.5672 & \textbf{0.8488} & 0.7153 & 0.8128 \\
& 20  & 0.0000 & 0.5140 & 0.4227 & 0.7670 & 0.5101 & \textbf{0.7723} \\
& 40  & 0.0000 & 0.4722 & 0.1498 & 0.6402 & 0.4098 & \textbf{0.7935} \\
& 60  & 0.0000 & 0.2941 & 0.0315 & 0.6677 & 0.1978 & \textbf{0.7377} \\
& 100 & 0.0000 & 0.1915 & 0.0161 & 0.5654 & 0.1164 & \textbf{0.8012} \\
\midrule

\multirow{7}{*}{10000}
& 3   & 0.0000 & 0.9800 & 0.9706 & 0.9826 & 0.9768 & \textbf{0.9916} \\
& 6   & 0.0000 & 0.8794 & 0.8502 & 0.9117 & 0.8951 & \textbf{0.9213} \\
& 10  & 0.0000 & 0.7576 & 0.6897 & \textbf{0.8725} & 0.7584 & 0.8152 \\
& 20  & 0.0000 & 0.5548 & 0.4556 & 0.7931 & 0.5522 & \textbf{0.8191} \\
& 40  & 0.0000 & 0.4969 & 0.2449 & \textbf{0.8567} & 0.4427 & 0.8250 \\
& 60  & 0.0000 & 0.3178 & 0.0318 & 0.6878 & 0.2062 & \textbf{0.7550} \\
& 100 & 0.0000 & 0.2336 & 0.0159 & 0.6727 & 0.1209 & \textbf{0.7935} \\
\midrule

\multirow{7}{*}{20000}
& 3   & 0.0000 & 0.9853 & 0.9776 & 0.9343 & 0.9717 & \textbf{0.9934} \\
& 6   & 0.0000 & 0.8979 & 0.8754 & 0.7992 & 0.8891 & \textbf{0.9870} \\
& 10  & 0.0000 & 0.7534 & 0.7415 & 0.8040 & 0.7134 & \textbf{0.9122} \\
& 20  & 0.0000 & 0.6008 & 0.4711 & 0.8021 & 0.4864 & \textbf{0.8645} \\
& 40  & 0.0000 & 0.5450 & 0.3231 & 0.8389 & 0.4394 & \textbf{0.8859} \\
& 60  & 0.0000 & 0.3608 & 0.0316 & 0.6573 & 0.2587 & \textbf{0.7953} \\
& 100 & 0.0000 & 0.2553 & 0.0158 & 0.6004 & 0.1774 & \textbf{0.8447} \\
\midrule

\multirow{7}{*}{30000}
& 3   & 0.0000 & 0.9856 & 0.9803 & 0.9899 & 0.9782 & \textbf{0.9943} \\
& 6   & 0.0000 & 0.9127 & 0.8906 & 0.9152 & 0.8994 & \textbf{0.9911} \\
& 10  & 0.0000 & 0.8104 & 0.7713 & 0.8993 & 0.7105 & \textbf{0.9805} \\
& 20  & 0.0000 & 0.6185 & 0.4795 & 0.8435 & 0.4759 & \textbf{0.9619} \\
& 40  & 0.0000 & 0.5749 & 0.3705 & 0.8428 & 0.4340 & \textbf{0.9250} \\
& 60  & 0.0000 & 0.3631 & 0.0960 & 0.6566 & 0.2608 & \textbf{0.8590} \\
& 100 & 0.0000 & 0.2715 & 0.0161 & 0.6564 & 0.1741 & \textbf{0.8665} \\
\bottomrule
\end{tabular}
}
{\textit{Note.} Boldface indicates the method with the highest relative cost for each \((n,d_x)\) setting.}
\end{table}

\begin{table}[htbp]
\TABLE
{Relative cost for the newsvendor problem with the structure-aware formulation.
\label{tab:newsvendor-relative-cost}}
{
\small
\begin{tabular}{cccccccc}
\toprule
$n$ & $d_x$ & SAA & PtO-$k$NN & PtO-KS & PtO-KRR
& PtO-KRR$^+$ & GtO-CDM  \\
\midrule

\multirow{7}{*}{800}
& 3   & 0.0000 & 0.8487 & 0.8682 & \textbf{0.9146} & 0.8923 & 0.7990 \\
& 6   & 0.0000 & 0.7387 & 0.6762 & \textbf{0.8532} & 0.7643 & 0.7405 \\
& 10  & 0.0000 & 0.4599 & 0.4302 & 0.6594 & 0.4922 & \textbf{0.6948} \\
& 20  & 0.0000 & 0.4764 & 0.3474 & \textbf{0.7485} & 0.4409 & 0.6850 \\
& 40  & 0.0000 & 0.2879 & 0.0414 & 0.7113 & 0.2890 & \textbf{0.7452} \\
& 60  & 0.0000 & 0.3113 & 0.1251 & 0.5683 & 0.3208 & \textbf{0.5947} \\
& 100 & 0.0000 & 0.2091 & 0.0416 & 0.5503 & 0.1593 & \textbf{0.6070} \\
\midrule

\multirow{7}{*}{2000}
& 3   & 0.0000 & 0.8486 & 0.9107 & \textbf{0.9429} & 0.9262 & 0.8378 \\
& 6   & 0.0000 & 0.7784 & 0.7464 & \textbf{0.8753} & 0.8124 & 0.7714 \\
& 10  & 0.0000 & 0.5984 & 0.4822 & 0.7670 & 0.5954 & \textbf{0.7672} \\
& 20  & 0.0000 & 0.5242 & 0.4694 & \textbf{0.7902} & 0.5185 & 0.7074 \\
& 40  & 0.0000 & 0.3896 & 0.0410 & \textbf{0.7698} & 0.3431 & 0.7688 \\
& 60  & 0.0000 & 0.3443 & 0.1255 & 0.6429 & 0.3392 & \textbf{0.7202} \\
& 100 & 0.0000 & 0.2263 & 0.0407 & 0.6279 & 0.1670 & \textbf{0.7690} \\
\midrule

\multirow{7}{*}{5000}
& 3   & 0.0000 & 0.9449 & 0.9335 & 0.9616 & 0.9396 & \textbf{0.9815} \\
& 6   & 0.0000 & 0.8623 & 0.7966 & \textbf{0.8913} & 0.8612 & 0.8848 \\
& 10  & 0.0000 & 0.6538 & 0.6026 & \textbf{0.8196} & 0.6712 & 0.8128 \\
& 20  & 0.0000 & 0.5703 & 0.5049 & \textbf{0.8404} & 0.5982 & 0.7723 \\
& 40  & 0.0000 & 0.4445 & 0.1409 & \textbf{0.8048} & 0.3946 & 0.7935 \\
& 60  & 0.0000 & 0.4009 & 0.1300 & 0.6740 & 0.3760 & \textbf{0.7377} \\
& 100 & 0.0000 & 0.2104 & 0.0413 & 0.6608 & 0.1776 & \textbf{0.8012} \\
\midrule

\multirow{7}{*}{10000}
& 3   & 0.0000 & 0.9600 & 0.9575 & 0.9740 & 0.9666 & \textbf{0.9916} \\
& 6   & 0.0000 & 0.8892 & 0.8248 & 0.9211 & 0.8812 & \textbf{0.9213} \\
& 10  & 0.0000 & 0.6933 & 0.6273 & \textbf{0.8227} & 0.6804 & 0.8152 \\
& 20  & 0.0000 & 0.6128 & 0.5362 & \textbf{0.8457} & 0.6146 & 0.8191 \\
& 40  & 0.0000 & 0.4889 & 0.2519 & 0.8226 & 0.4128 & \textbf{0.8250} \\
& 60  & 0.0000 & 0.4249 & 0.1285 & 0.7012 & 0.3811 & \textbf{0.7550} \\
& 100 & 0.0000 & 0.2621 & 0.0406 & 0.6846 & 0.1880 & \textbf{0.7935} \\
\midrule

\multirow{7}{*}{20000}
& 3   & 0.0000 & 0.9617 & 0.9571 & 0.9730 & 0.9653 & \textbf{0.9934} \\
& 6   & 0.0000 & 0.9075 & 0.8282 & 0.9122 & 0.8868 & \textbf{0.9870} \\
& 10  & 0.0000 & 0.7177 & 0.6887 & 0.8156 & 0.6577 & \textbf{0.9122} \\
& 20  & 0.0000 & 0.6688 & 0.5594 & \textbf{0.8731} & 0.5333 & 0.8645 \\
& 40  & 0.0000 & 0.5194 & 0.2366 & 0.8194 & 0.3926 & \textbf{0.8859} \\
& 60  & 0.0000 & 0.4593 & 0.1279 & 0.7693 & 0.3761 & \textbf{0.7953} \\
& 100 & 0.0000 & 0.2992 & 0.0408 & 0.7065 & 0.2194 & \textbf{0.8447} \\
\midrule

\multirow{7}{*}{30000}
& 3   & 0.0000 & 0.9754 & 0.9681 & 0.9804 & 0.9723 & \textbf{0.9943} \\
& 6   & 0.0000 & 0.9177 & 0.8700 & 0.9065 & 0.8299 & \textbf{0.9911} \\
& 10  & 0.0000 & 0.7911 & 0.7149 & 0.8062 & 0.5109 & \textbf{0.9805} \\
& 20  & 0.0000 & 0.6579 & 0.5797 & 0.8004 & 0.4252 & \textbf{0.9619} \\
& 40  & 0.0000 & 0.5419 & 0.3121 & 0.7442 & 0.2221 & \textbf{0.9250} \\
& 60  & 0.0000 & 0.4958 & 0.1269 & 0.7814 & 0.2320 & \textbf{0.8590} \\
& 100 & 0.0000 & 0.3338 & 0.0401 & 0.7060 & 0.2195 & \textbf{0.8665} \\
\bottomrule
\end{tabular}
}
{\textit{Note.} Boldface indicates the method with the highest relative cost for each \((n,d_x)\) setting.}
\end{table}

Tables~\ref{tab:runtime-sec-direct} and~\ref{tab:runtime-sec-structure-aware} report online times for the two formulations. PtO generally requires hundredths of a second, whereas GtO takes approximately 0.49 to 6.57 seconds across the tested settings. At $n=10{,}000$, structure-aware PtO takes approximately 0.013 to 0.042 seconds and GtO takes 0.76 to 2.80 seconds. These are the entries summarized in Table~\ref{tab:newsvendor-online-summary} in the main text.

\begin{table}[htbp]
\TABLE
{Running time for the newsvendor problem with the direct formulation.
\label{tab:runtime-sec-direct}}
{
\small
\begin{tabular}{cccccccc}
\toprule
$n$ & $d_x$ & SAA & PtO-KNN & PtO-KS & PtO-KRR & PtO-KRR$^+$ & GtO-CDM \\
\midrule

\multirow{7}{*}{800}
& 3   & 0.0000 & 0.0248 & 0.0231 & 0.0276 & 0.0257 & 0.4937 \\
& 6   & 0.0000 & 0.0285 & 0.0242 & 0.0337 & 0.0294 & 0.6161 \\
& 10  & 0.0000 & 0.0143 & 0.0109 & 0.0178 & 0.0131 & 0.7875 \\
& 20  & 0.0000 & 0.0176 & 0.0145 & 0.0298 & 0.0224 & 0.9506 \\
& 40  & 0.0000 & 0.0114 & 0.0107 & 0.0261 & 0.0068 & 1.4974 \\
& 60  & 0.0000 & 0.0098 & 0.0057 & 0.0256 & 0.0106 & 1.5247 \\
& 100 & 0.0000 & 0.0055 & 0.0018 & 0.0169 & 0.0064 & 1.5552 \\
\midrule

\multirow{7}{*}{2000}
& 3   & 0.0067 & 0.0088 & 0.0092 & 0.0096 & 0.0099 & 0.4896 \\
& 6   & 0.0112 & 0.0156 & 0.0147 & 0.0151 & 0.0159 & 0.6133 \\
& 10  & 0.0071 & 0.0093 & 0.0095 & 0.0098 & 0.0101 & 0.7868 \\
& 20  & 0.0157 & 0.0142 & 0.0146 & 0.0155 & 0.0163 & 0.9505 \\
& 40  & 0.0069 & 0.0090 & 0.0094 & 0.0097 & 0.0100 & 1.4969 \\
& 60  & 0.0106 & 0.0137 & 0.0141 & 0.0146 & 0.0150 & 1.5221 \\
& 100 & 0.0068 & 0.0089 & 0.0091 & 0.0096 & 0.0099 & 1.5556 \\
\midrule

\multirow{7}{*}{5000}
& 3   & 0.0132 & 0.0174 & 0.0171 & 0.0195 & 0.0188 & 0.4875 \\
& 6   & 0.0211 & 0.0286 & 0.0278 & 0.0294 & 0.0291 & 0.6135 \\
& 10  & 0.0129 & 0.0178 & 0.0174 & 0.0186 & 0.0182 & 0.7869 \\
& 20  & 0.0219 & 0.0291 & 0.0283 & 0.0297 & 0.0302 & 0.9508 \\
& 40  & 0.0127 & 0.0172 & 0.0168 & 0.0184 & 0.0181 & 1.4963 \\
& 60  & 0.0208 & 0.0275 & 0.0269 & 0.0287 & 0.0282 & 1.5193 \\
& 100 & 0.0131 & 0.0177 & 0.0173 & 0.0185 & 0.0180 & 1.5508 \\
\midrule

\multirow{7}{*}{10000}
& 3   & 0.0214 & 0.0287 & 0.0276 & 0.0315 & 0.0308 & 0.7556 \\
& 6   & 0.0136 & 0.0184 & 0.0177 & 0.0196 & 0.0192 & 1.0644 \\
& 10  & 0.0221 & 0.0289 & 0.0282 & 0.0305 & 0.0308 & 1.3747 \\
& 20  & 0.0139 & 0.0181 & 0.0174 & 0.0191 & 0.0190 & 1.6678 \\
& 40  & 0.0224 & 0.0292 & 0.0288 & 0.0312 & 0.0310 & 2.6496 \\
& 60  & 0.0134 & 0.0180 & 0.0171 & 0.0203 & 0.0197 & 2.7344 \\
& 100 & 0.0216 & 0.0281 & 0.0279 & 0.0301 & 0.0310 & 2.7950 \\
\midrule

\multirow{7}{*}{20000}
& 3   & 0.0146 & 0.0191 & 0.0188 & 0.0217 & 0.0210 & 1.3742 \\
& 6   & 0.0235 & 0.0301 & 0.0305 & 0.0332 & 0.0318 & 1.9059 \\
& 10  & 0.0143 & 0.0190 & 0.0184 & 0.0201 & 0.0207 & 2.5172 \\
& 20  & 0.0231 & 0.0306 & 0.0297 & 0.0314 & 0.0328 & 3.0537 \\
& 40  & 0.0145 & 0.0192 & 0.0186 & 0.0200 & 0.0197 & 4.9494 \\
& 60  & 0.0234 & 0.0298 & 0.0295 & 0.0311 & 0.0314 & 5.1321 \\
& 100 & 0.0141 & 0.0190 & 0.0185 & 0.0198 & 0.0196 & 5.2524 \\
\midrule

\multirow{7}{*}{30000}
& 3   & 0.0247 & 0.0312 & 0.0317 & 0.0351 & 0.0354 & 1.6722 \\
& 6   & 0.0152 & 0.0201 & 0.0196 & 0.0214 & 0.0208 & 2.3431 \\
& 10  & 0.0241 & 0.0320 & 0.0305 & 0.0332 & 0.0328 & 3.1093 \\
& 20  & 0.0156 & 0.0198 & 0.0193 & 0.0209 & 0.0211 & 3.7524 \\
& 40  & 0.0243 & 0.0310 & 0.0314 & 0.0325 & 0.0328 & 6.1713 \\
& 60  & 0.0151 & 0.0202 & 0.0192 & 0.0206 & 0.0207 & 6.3927 \\
& 100 & 0.0250 & 0.0313 & 0.0316 & 0.0324 & 0.0329 & 6.5674 \\
\bottomrule
\end{tabular}
}
{\textit{Note.} Running time is reported in seconds and averaged over 30 independent replications.}
\end{table}

\begin{table}[htbp]
\TABLE
{Running time for the newsvendor problem with the structure-aware formulation.
\label{tab:runtime-sec-structure-aware}
\label{tab:runtime-sec}}
{
\small
\begin{tabular}{cccccccc}
\toprule
$n$ & $d_x$ & SAA & PtO-KNN & PtO-KS & PtO-KRR & PtO-KRR$^+$ & GtO-CDM \\
\midrule

\multirow{7}{*}{800}
& 3   & 0.0000 & 0.0294 & 0.0257 & 0.0254 & 0.0309 & 0.4937 \\
& 6   & 0.0000 & 0.0429 & 0.0239 & 0.0303 & 0.0354 & 0.6161 \\
& 10  & 0.0000 & 0.0149 & 0.0140 & 0.0126 & 0.0117 & 0.7875 \\
& 20  & 0.0000 & 0.0125 & 0.0220 & 0.0409 & 0.0149 & 0.9506 \\
& 40  & 0.0000 & 0.0100 & 0.0148 & 0.0388 & 0.0054 & 1.4974 \\
& 60  & 0.0000 & 0.0123 & 0.0084 & 0.0200 & 0.0122 & 1.5247 \\
& 100 & 0.0000 & 0.0076 & 0.0019 & 0.0137 & 0.0049 & 1.5552 \\
\midrule

\multirow{7}{*}{2000}
& 3   & 0.0067 & 0.0057 & 0.0113 & 0.0096 & 0.0123 & 0.4896 \\
& 6   & 0.0112 & 0.0230 & 0.0115 & 0.0102 & 0.0113 & 0.6133 \\
& 10  & 0.0071 & 0.0063 & 0.0139 & 0.0111 & 0.0082 & 0.7868 \\
& 20  & 0.0157 & 0.0178 & 0.0106 & 0.0202 & 0.0209 & 0.9505 \\
& 40  & 0.0069 & 0.0113 & 0.0088 & 0.0132 & 0.0082 & 1.4969 \\
& 60  & 0.0106 & 0.0112 & 0.0121 & 0.0176 & 0.0120 & 1.5221 \\
& 100 & 0.0068 & 0.0081 & 0.0079 & 0.0113 & 0.0141 & 1.5556 \\
\midrule

\multirow{7}{*}{5000}
& 3   & 0.0132 & 0.0131 & 0.0154 & 0.0243 & 0.0162 & 0.4875 \\
& 6   & 0.0211 & 0.0377 & 0.0193 & 0.0222 & 0.0183 & 0.6135 \\
& 10  & 0.0129 & 0.0227 & 0.0109 & 0.0248 & 0.0199 & 0.7869 \\
& 20  & 0.0219 & 0.0431 & 0.0298 & 0.0253 & 0.0339 & 0.9508 \\
& 40  & 0.0127 & 0.0110 & 0.0140 & 0.0221 & 0.0123 & 1.4963 \\
& 60  & 0.0208 & 0.0327 & 0.0324 & 0.0193 & 0.0352 & 1.5193 \\
& 100 & 0.0131 & 0.0130 & 0.0230 & 0.0127 & 0.0273 & 1.5508 \\
\midrule

\multirow{7}{*}{10000}
& 3   & 0.0214 & 0.0392 & 0.0331 & 0.0338 & 0.0251 & 0.7556 \\
& 6   & 0.0136 & 0.0268 & 0.0208 & 0.0294 & 0.0199 & 1.0644 \\
& 10  & 0.0221 & 0.0350 & 0.0373 & 0.0405 & 0.0285 & 1.3747 \\
& 20  & 0.0139 & 0.0218 & 0.0206 & 0.0146 & 0.0150 & 1.6678 \\
& 40  & 0.0224 & 0.0250 & 0.0420 & 0.0207 & 0.0202 & 2.6496 \\
& 60  & 0.0134 & 0.0211 & 0.0235 & 0.0214 & 0.0125 & 2.7344 \\
& 100 & 0.0216 & 0.0301 & 0.0298 & 0.0210 & 0.0314 & 2.7950 \\
\midrule

\multirow{7}{*}{20000}
& 3   & 0.0146 & 0.0168 & 0.0285 & 0.0263 & 0.0142 & 1.3742 \\
& 6   & 0.0235 & 0.0440 & 0.0318 & 0.0466 & 0.0251 & 1.9059 \\
& 10  & 0.0143 & 0.0223 & 0.0147 & 0.0174 & 0.0147 & 2.5172 \\
& 20  & 0.0231 & 0.0239 & 0.0364 & 0.0378 & 0.0475 & 3.0537 \\
& 40  & 0.0145 & 0.0288 & 0.0237 & 0.0167 & 0.0162 & 4.9494 \\
& 60  & 0.0234 & 0.0371 & 0.0352 & 0.0291 & 0.0200 & 5.1321 \\
& 100 & 0.0141 & 0.0119 & 0.0134 & 0.0148 & 0.0267 & 5.2524 \\
\midrule

\multirow{7}{*}{30000}
& 3   & 0.0247 & 0.0228 & 0.0357 & 0.0292 & 0.0408 & 1.6722 \\
& 6   & 0.0152 & 0.0220 & 0.0283 & 0.0279 & 0.0310 & 2.3431 \\
& 10  & 0.0241 & 0.0327 & 0.0336 & 0.0362 & 0.0373 & 3.1093 \\
& 20  & 0.0156 & 0.0245 & 0.0186 & 0.0130 & 0.0151 & 3.7524 \\
& 40  & 0.0243 & 0.0388 & 0.0252 & 0.0339 & 0.0459 & 6.1713 \\
& 60  & 0.0151 & 0.0251 & 0.0172 & 0.0247 & 0.0248 & 6.3927 \\
& 100 & 0.0250 & 0.0301 & 0.0427 & 0.0408 & 0.0297 & 6.5674 \\
\bottomrule
\end{tabular}
}
{\textit{Note.} Running time is reported in seconds and averaged over 30 independent replications.}
\end{table}

\clearpage

\subsection{Product Placement Problem}
\label{app:num-placement}

\subsubsection{Network and Demand Construction.}

A retailer operates $d_y$ stores connected by a directed transportation network $G=(V,\mathcal A)$, where $|V|=d_y$. After observing $X=x_0$, the retailer chooses initial inventory $z\in\mathbb R_+^{d_y}$ at cost $\alpha^\top z$. Once demand $Y\in\mathbb R_+^{d_y}$ is realized, inventory can be transferred between stores. Let $f\in\mathbb R_+^{|\mathcal A|}$ denote transfers and $p\in\mathbb R_+^{d_y}$ unmet demand. The incidence matrix $A\in\mathbb R^{d_y\times|\mathcal A|}$ has entries 1 at an arc's origin and $-1$ at its destination, so $Af$ is net outflow and $z-Af$ is inventory after~transfers.

Given $z$ and demand $y$, the recourse problem minimizes transportation and shortage costs:
\[
Q(z,y)=\min_{f,p}\left\{g^\top f+r^\top p:\ z-Af+p\geq y,\ f\geq0,\ p\geq0\right\},
\]
where $g\in\mathbb R_+^{|\mathcal A|}$ and $r\in\mathbb R_+^{d_y}$ are unit transportation and shortage costs. The initial placement decision minimizes $\alpha^\top z+\mathbb E[Q(z,Y)| X=x_0]$ over $0\leq z\leq U$, where $U\in\mathbb R_+^{d_y}$ specifies store-level inventory upper~bounds.

We extend the newsvendor's three-component Gaussian mixture to multivariate demand. Each store has the same marginal model specification, with dependence introduced so that stores closer in the network have stronger demand correlations. Demand is censored at zero coordinatewise. The network contains $2d_y$ directed arcs: $d_y$ form a ring, and the remaining $d_y$ are generated randomly once per replication. We fix $d_x=10$ and $n=10^4$ and vary $d_y\in\{1,2,5,10,15\}$. Average demand and marginal variability at each store are calibrated to remain comparable across dimensions. Unit placement, transportation, and shortage costs are 2, 1, and 10, respectively.

\subsubsection{Optimization with Nonnegative and Signed Weights.}
\label{app:num-placement-solver}

The weighted problem minimizes $\alpha^\top z+\sum_{s=1}^S\omega_sQ(z,y^s)$ over $0\leq z\leq U$, where $y^s$ are observed or generated demand scenarios and $\omega_s$ their weights. With nonnegative weights, the scenario recourse problems can be incorporated into the outer minimization, yielding the linear program
\[
\begin{aligned}
\min_{z,\{f^s,p^s\}_{s=1}^S}\quad
& \alpha^\top z
+\sum_{s=1}^S\omega_s\left(g^\top f^s+r^\top p^s\right)\\
\mathrm{s.t.}\qquad
& z-Af^s+p^s\ge y^s,\quad s=1,\ldots,S,\\
& 0\le z\le U,\\
& f^s\ge0,\quad p^s\ge0,\quad s=1,\ldots,S.
\end{aligned}
\tag{PP-LP}
\]
This formulation applies to SAA, nonnegative-weight PtO methods, and GtO, which assigns equal weights to generated scenarios.

Signed KRR weights require a different treatment. The recourse problem has the dual representation $Q(z,y)=\max_{\lambda\in\Lambda}(y-z)^\top\lambda$, where $\Lambda=\{\lambda:-A^\top\lambda\leq g,\ 0\leq\lambda\leq r\}$. For $\omega_s<0$, this implies $\omega_sQ(z,y^s)=\min_{\lambda^s\in\Lambda}\omega_s(y^s-z)^\top\lambda^s$. Define $\mathcal S_+=\{s:\omega_s\geq0\}$ and $\mathcal S_-=\{s:\omega_s<0\}$. Using the primal representation for nonnegative weights and the dual representation for negative weights gives
\begin{equation}
{\renewcommand{\arraystretch}{1.5}
\begin{array}{r@{\quad}l@{\quad}l}
\displaystyle
\min_{\substack{
z,\{f^s,p^s\}_{s\in\mathcal S_+},\\
\{\lambda^s\}_{s\in\mathcal S_-}
}}
&
\multicolumn{2}{l}{
\displaystyle
\alpha^\top z
+\sum_{s\in\mathcal S_+}
\omega_s\left(g^\top f^s+r^\top p^s\right)
+\sum_{s\in\mathcal S_-}
\omega_s(y^s-z)^\top\lambda^s
}
\\
\mathrm{s.t.}
& z-Af^s+p^s\ge y^s,
& s\in\mathcal S_+,\\
& f^s\ge0,\quad p^s\ge0,
& s\in\mathcal S_+,\\
& -A^\top\lambda^s\le g,
& s\in\mathcal S_-,\\
& 0\le\lambda^s\le r,
& s\in\mathcal S_-,\\
& 0\le z\le U.
&
\end{array}}
\tag{PP-KRR}
\end{equation}
When $\mathcal S_-$ is empty, (PP-KRR) reduces to (PP-LP). Otherwise, the products $-\omega_s z^\top\lambda^s$ introduce bilinear terms, producing a generally nonconvex formulation. Gurobi's global optimization procedure can address this formulation, but may become computationally expensive as the number of scenarios and decision dimension~increase.

We instead exploit the difference-of-convex decomposition
\[
\alpha^\top z+\sum_{s=1}^S\omega_sQ(z,y^s)
\,=\, \underbrace{\alpha^\top z+\sum_{s\in\mathcal S_+}\omega_sQ(z,y^s)}_{C(z)}
-\underbrace{\sum_{s\in\mathcal S_-}|\omega_s|Q(z,y^s)}_{H(z)}.
\]
Both $C$ and $H$ are convex because $Q(\cdot,y^s)$ is convex. At each DCA iteration, we replace $-H$ by its affine approximation using a subgradient of $H$ at the current iterate, then solve the resulting convex subproblem with Gurobi. DCA does not generally certify global optimality, so the reported decision quality and runtime reflect this implementation. Table~\ref{tab:product-placement-results} in the main paper reports the results for all five demand dimensions.

\subsection{Real Case Study: KRR with Categorical Covariates}
\label{app:krr-categorical}
In the real-case experiment, categorical variables are represented using one-hot encoding, resulting in binary indicators. For KRR, we use the Gaussian kernel \[ K(x,x')=\exp(-\frac{\|x-x'\|^2}{\ell}),\] where $\ell>0$ is the kernel bandwidth parameter. Since the Gaussian kernel only depends on the squared Euclidean distance between covariate vectors, binary indicators can be treated in the same way as continuous covariates. Specifically, for each binary indicator $x_j\in\{0,1\}$, its multiplicative contribution to the kernel is
\[
\exp\Bigg(-\frac{\big(x_j-x_j'\big)^2}{\ell}\Bigg) \,=\,
\begin{cases}
1, & x_j=x_j',\\
\exp(-1/\ell), & x_j\neq x_j'.
\end{cases}
\]
Thus, matching indicator values leave the kernel similarity unchanged, whereas differing values introduce a discount factor of $\exp(-1/\ell)<1$, reducing the similarity between the two observations. Consequently, one-hot-encoded categorical variables can be naturally incorporated into the Gaussian kernel alongside continuous covariates by directly evaluating the kernel on the full covariate vectors $x$ and $x'$.

\clearpage

\bibliographystyleEC{informs2014}
\bibliographyEC{GtO}

\end{document}